\DeclareMathAlphabet{\mathcal}{OMS}{cmsy}{m}{n}
\newtheorem{definition} {Definition}
\newtheorem{lemma}      {Lemma}
\newtheorem{proposition}{Proposition}
\newtheorem{corollary}  {Corollary}
\newtheorem{theorem}    {Theorem}
\newtheorem{example}{Example}
\newenvironment{manualtheorem}[1]{%
  \manualtheoreminner
}{\endmanualtheoreminner}
\newenvironment{manualproposition}[1]{%
  \manualpropinner
}{\endmanualpropinner}
\newenvironment{manuallemma}[1]{%
  \manuallemmainner
}{\endmanuallemmainner}
\newenvironment{mcorollary}[1]{%
  \mcorollaryinner
}{\endmcorollaryinner}
\title[Reasoning on $\DLliteR$ with Defeasibility in ASP]{Reasoning on $\DLliteR$ with Defeasibility in ASP}
  \author[L. Bozzato, T. Eiter, L. Serafini]
         {LORIS BOZZATO\\
         Fondazione Bruno Kessler, Via Sommarive 18, 38123 Trento, Italy\\
         \email{bozzato@fbk.eu}
				 \and 
				 THOMAS EITER\\
				 Technische Universit\"{a}t Wien, Favoritenstra\ss e 9-11, A-1040 Vienna, Austria\\
				 \email{eiter@kr.tuwien.ac.at}
				 \and	
				 LUCIANO SERAFINI\\
         Fondazione Bruno Kessler, Via Sommarive 18, 38123 Trento, Italy\\
         \email{serafini@fbk.eu}}
\def\isa{\sqsubseteq}
\def\I{\mathcal{I}}
\newcommand{\Pair}[2]{\left\langle#1,#2\right\rangle}
\newcommand{\vc}[1]{\mathbf{#1}}
\newcommand{\dd}{{\vc{d}}}
\newcommand{\ee}{{\vc{e}}}
\newcommand{\SROIQ}{\mathcal{SROIQ}}
\newcommand{\ALC}{\mathcal{ALC}}
\newcommand{\NI}{\mathrm{NI}}
\newcommand{\NR}{\mathrm{NR}}
\newcommand{\NC}{\mathrm{NC}}
\newcommand{\IC}{\mathfrak{I}}
\newcommand{\K}{\mathcal{K}}
\newcommand{\dlmodels}{\mathop{\models_{\mathrm{DL}}}}
\newcommand{\T}{\mathcal{T}}
\newcommand{\stru}[1]{\langle #1 \rangle}
\newcommand{\non}{\neg}
\newcommand{\im}{\!\rightarrow\!}
\newcommand{\subs}{\sqsubseteq}
\newcommand{\Acal}{{\cal A}}
\newcommand{\Ecal}{{\cal E}}
\newcommand{\Kcal}{{\cal K}}
\newcommand{\Ical}{{\cal I}}
\newcommand{\Lcal}{{\cal L}}
\newcommand{\Ncal}{{\cal N}}
\newcommand{\Rcal}{{\cal R}}
\newcommand{\Tcal}{{\cal T}}
\newcommand{\cov}[1]{\preceq}
\newcommand{\CAS}{\mathit{CAS}}
\newcommand{\OVR}{\mathit{OVR}}
\newcommand{\casmap}{\chi}
\newcommand{\mi}[1]{\mathit{#1}}
\newcommand{\nop}[1]{}
\newcommand{\SROIQrl}{\mathcal{SROIQ}\text{-RL}}
\newcommand{\NIs}{\NI_\Sigma}
\newcommand{\KB}{\mathrm{K}} 
\newcommand{\mlc}{\ml{c}}
\newcommand{\DC}{\mathbf{C}}
\newcommand{\DP}{\mathbf{P}}
\newcommand{\DV}{\mathbf{V}}
\newcommand{\subClass}{{\tt subClass}}
\newcommand{\subEx}{{\tt subEx}}
\newcommand{\subRole}{{\tt subRole}}
\newcommand{\supEx}{{\tt supEx}}
\newcommand{\pDis}{{\tt dis}}
\newcommand{\pInv}{{\tt inv}}
\newcommand{\pIrr}{{\tt irr}}
\newcommand{\nom}{{\tt nom}}
\newcommand{\cls}{{\tt cls}}
\newcommand{\rol}{{\tt rol}}
\newcommand{\const}{{\tt const}}
\newcommand{\triple}{{\tt triple}}
\newcommand{\supNot}{{\tt supNot}}
\newcommand{\eval}{\textsl{eval}}
\newcommand{\ml}[1]{\mathsf{#1}} 
\newcommand{\default}{{\mathrm D}}
\newcommand{\ovr}{{\tt ovr}}
\newcommand{\naf}{\mathop{\tt not}}
\newcommand{\grd}{\mathit{grnd}}
\newcommand{\Head}{\mathit{Head}}
\newcommand{\Body}{\mathit{Body}}
\newcommand{\rif}{\leftarrow}
\newcommand{\insta}{{\tt insta}}
\newcommand{\instd}{{\tt instd}}
\newcommand{\triplea}{{\tt triplea}}
\newcommand{\tripled}{{\tt tripled}}
\newcommand{\AllNRel}{{\tt all\_nrel}}
\newcommand{\AllNRelStep}{{\tt all\_nrel\_step}}
\newcommand{\first}{{\tt first}}
\newcommand{\nextp}{{\tt next}}
\newcommand{\lastp}{{\tt last}}
\newcommand{\defsubs}{{\tt def\_subclass}}
\newcommand{\defsubr}{{\tt def\_subr}}
\newcommand{\definv}{{\tt def\_inv}}
\newcommand{\defirr}{{\tt def\_irr}}
\newcommand{\DKB}{\mathcal{K}}
\newcommand{\ptime}{\ensuremath{\mathrm{PTime}}}
\newcommand{\np}{\ensuremath{\mathrm{NP}}}
\newcommand{\conp}{\ensuremath{\mathrm{co\mbox{-}NP}}}
\newcommand{\nlogspace}{\ensuremath{\mathrm{NLogSpace}}}
\def\qed{$\Box$}
\def\endproof{\ifhmode\nobreak\qed\par\fi\medskip}
\renewcommand{\IC}{\Ical}
\newcommand{\EndEx}{\mbox{}~\hfill$\Diamond$} 
\newcommand{\comment}[1]{{#1}}
\newcommand{\lbnote}[1]{}
\newcommand{\rewnote}[2]{}
\newcommand{\CKRew}{CKR\emph{ew}\xspace}
\newcommand{\DLliteR}{\textsl{DL-Lite}_\Rcal}
\newcommand{\DLlite}{\textsl{DL-Lite}}
\newcommand{\ELbot}{\Ecal\Lcal_\bot}
\newcommand{\J}{{\cal J}}
\begin{document}

\label{firstpage}

\maketitle

  \begin{abstract}
  Reasoning on defeasible knowledge
  is a topic of interest in the area of description logics, 
  as it is related to the need of representing exceptional instances in knowledge bases.
  %
  In this direction, in our previous works we presented
	a framework for representing
  (contextualized) OWL RL knowledge bases with a notion of
  justified exceptions on defeasible axioms: 
	reasoning in such framework is realized by a translation into ASP programs.
	%
  The resulting reasoning process for OWL RL, however,
  introduces a complex encoding 
  in order to capture reasoning on the negative information
	needed for reasoning on exceptions.
	%
  In this paper, 
  we apply the justified exception approach to knowledge bases in $\DLliteR$, i.e., the language underlying OWL QL.
	We provide a definition for $\DLliteR$ knowledge bases with defeasible axioms and
	study their semantic and computational properties. 
	In particular, we study the effects of exceptions 
	over unnamed individuals.
	The limited form of $\DLliteR$ axioms allows us to formulate a simpler 
	ASP encoding,
	where reasoning on negative information is managed by direct rules.
	The resulting materialization method gives rise to a complete
	reasoning procedure for instance checking in $\DLliteR$ 
	with defeasible axioms.%

  Under consideration in Theory and Practice of Logic Programming (TPLP).%
	\footnote{This paper is an extended and revised version of a conference paper 
	appearing in the proceedings of the \emph{3rd International Joint Conference on Rules and Reasoning 
	(RuleML+RR 2019)}~\cite{DBLP:conf/ruleml/BozzatoES19}.}
  \end{abstract}

  \begin{keywords}
		Defeasible Knowledge, Description Logics, Answer Set Programming, Justifiable Exceptions
  \end{keywords}



\section{Introduction}
\label{sec:intro}

Representing defeasible information
is a topic of interest in the area of description logics (DLs), 
as it is related to the need of accommodating the presence of 
exceptional instances in knowledge bases.
This interest led to different proposals 
for 
non-monotonic features in DLs
based on different 
notions of defeasibility, e.g.,~\cite{BonattiFPS:15,BonattiLW:06,DBLP:conf/jelia/BritzV16,DBLP:conf/jelia/CasiniS10,GiordanoGOP:13,DBLP:journals/ijar/PenselT18}.
%
In this direction, we presented in~\cite{BozzatoES:18} 
an approach to represent
defeasible information in
contextualized DL knowledge bases
by introducing a notion of \emph{justifiable exceptions}
that has been
inspired by ideas in \cite{BuccafurriFL:99}:
general \emph{defeasible axioms} can be overridden
by more specific exceptional instances if 
their application would provably lead to inconsistency.
%
For example\footnote{see~\cite[Example 2]{BozzatoES:18}.}, 
we can express
that \emph{``in general, concerts are expensive''} as a defeasible concept inclusion
$\mi{Concert} \isa \mi{Expensive}$.
However, for a specific instance $\mi{free\_concert}$ of $\mi{Concert}$ representing a free concert 
we may want to ``override'' the defeasible axiom (i.e., disregard its application): 
in our approach this is possible, provided that one can prove a set of assertions
$\{\mi{Concert}(\mi{free\_concert}), \non\mi{Expensive}(\mi{free\_concert})\}$
that justify the exception for this individual.


In our seminal paper~\cite{BozzatoES:18},
we concentrated on reasoning over $\SROIQrl$ based knowledge bases:
$\SROIQrl$ corresponds to the language of 
the OWL RL profile of the \emph{Web Ontology Language (OWL)}~\cite{Motik:09:OWO}
and allows for tractable reasoning.
In particular, this language emerges from the $\SROIQ$ language~\cite{horrocks:2006} 
and dl-programs~\cite{Grosof2003}. 
Remarkably, $\SROIQrl$ can be seen as an intersection of 
DLs and Horn logic programs.

In~\cite{BozzatoES:18} reasoning in $\SROIQrl$ 
knowledge bases is realized by
a translation to datalog (under answer sets semantics), 
which provides a complete \emph{materialization calculus} in the style of~\cite{Krotzsch:10}
for instance checking and conjunctive query (CQ) answering.
%
While the translation 
covers the full $\SROIQrl$ language,
it needs a complex encoding 
to represent reasoning on exceptions.
%
In particular, 
it relies on
proofs by contradiction 
to ensure completeness in presence of negative disjunctive
information.
In fact, negative disjunctive information is not easily
expressible in datalog: for example, from $A \sqcap B \isa C$ and $\non C(a)$
we can derive $(\non A \sqcup \non B )(a)$, which is not directly representable
by datalog rules. Also, a naive use of disjunction in rule heads does
not overcome this problem. For this reason, in~\cite{BozzatoES:18} inference on negative literals
is obtained as an encoding of a ``test'' for contradiction of such literals in the 
deduction rules of the datalog translation.

In this paper, we consider the case of knowledge bases with defeasible axioms
in the description logic $\DLliteR$~\cite{CalvaneseGLLR07}, which 
corresponds to the language underlying the OWL QL
Profile~\cite{Motik:09:OWO}.
%
As in the case of $\SROIQrl$,
also $\DLliteR$ is a Horn logic
and thus can be related to
logic programs. In fact, $\DLliteR$ is a class of existential rules
and falls 
then into the linear fragment~\cite{DBLP:journals/ws/CaliGL12}. 

It is indeed interesting to show the applicability of our defeasible
reasoning approach to the well-known $\DLlite$ family: by adopting $\DLliteR$
as the base logic, we need to take unnamed individuals introduced by
existential quantifiers into account, especially for the justifications
of exceptions. 
Defeasible axioms like $\default(\mi{Concert}
\isa \exists \mi{hasOrganizer})$, which says that concerts have
some organizer, allow for a smooth handling. 
On the other hand, if we have axioms like 
$\exists \mi{hasOrganizer}^- \isa \mi{Organizer}$, which informally
assigns a type, and $\default(\mi{Organizer}\isa \mi{Company})$,
then overridings can happen over unnamed individuals relative to this axiom.
The problem for reasoning with such 
unnamed elements is that they can have different
interpretations in different models of the knowledge base,
while, for determining the applicability of a defeasible axiom
or its overriding, we need to identify the exceptional domain elements.

Moreover,
with respect to the translation to datalog,
we show that due to the restricted form of
its axioms, the $\DLliteR$ language allows us to give a less involved
datalog encoding in which reasoning on negative information
is directly encoded 
in datalog rules; for more background, we refer to the discussion on ``justification safeness'' in~\cite{BozzatoES:18}.

\rewnote{1}{DLs changed from previous work: one motivation is that  translation to Datalog/ASP and proofs are involved. Is it only the reason to study the new DLs? In the studied DL we can encode same problems/properties as in the previous?}
\rewnote{3}{The authors have already presented a similar approach for SROIQ-RL [Bozzato et al. 2018] and $EL_{\bot}$ [Bozzato et al. 2019b]. Apart from some discussion at the beginning of sect. 5, p. 14, it is not very clear what problems they had to face for this re-formulation for DL-lite, and how the proposal differs from their previous proposals.}
\comment{The choice of studying the application of our methods to $\DLliteR$ knowledge bases
is indeed motivated by the interest in covering the OWL QL fragment of
OWL 2, which is relevant from an application perspective.
More importantly, from a formal perspective, $\DLliteR$ allows for the use
of unnamed individuals in inverse roles which are not available in $\ELbot$~\cite{DBLP:conf/birthday/BozzatoES19} and $\SROIQrl$~\cite{BozzatoES:18}: thus, the techniques for managing unnamed individuals, 
especially in exceptions, need to be adapted to the expressivity of $\DLliteR$.
Another reason of our interest in $\DLliteR$ stands in the fact that
it is a standard DL which falls in the fragment where no reasoning on
disjunctive negative information is needed for deductions on exceptions:
this shows a notable example of ``justification safe'' language which, as noted above,
allows us to formulate a simpler version of the datalog encoding for instance checking.}

\smallskip\noindent
The contributions of this paper can be summarized as follows:
\begin{itemize}
\item
  In Section~\ref{sec:dkb} we provide a definition of defeasible DL
  knowledge base (DKB) with justified models
  that draws from the definition of 
	\emph{Contextualized Knowledge Repositories (CKR)}
	\cite{BozHomSer:DL2012,BozzatoSerafini:13,serafini-homola-ckr-jws-2012}
	with defeasible axioms provided in~\cite{BozzatoES:18}. 
	This allows us to concentrate on the defeasible reasoning aspects 
	without considering the aspects related to context representation.
	In the case of $\DLliteR$, 
	we consider the effects of reasoning with unnamed individuals
	and of their admission in the exceptions of defeasible axioms.
  In particular, we consider models
  in which exceptions can only occur on individuals named in the DKB (called \emph{exception-safety}).
\item
  In Section~\ref{sec:properties}, 
	we study the semantic properties of DKB-models. 
	In particular, in the case of exception safe DKBs, we show that their models
	preserve conditions from~\cite{BozzatoES:18} that allow us to concentrate on 
	minimal models that are restricted to the individual names occurring in the 
	knowledge base. These properties are important to verify
	the feasibility of the reasoning method based on the datalog translation
	that we provide in the later sections.
\item 
For exception-safe DKBs based on $\DLliteR$,
we  provide in Section~\ref{sec:translation} a 
translation to datalog (under answer set semantics~\cite{gelf-lifs-91})
that alters the translation in~\cite{BozzatoES:14,BozzatoES:18} 
and prove its correctness
for instance checking.
	Notably, the fact that reasoning on 
	negative disjunctive information is not needed
	allows us to provide a simpler 
	translation without 
	the involving ``test'' environments 
	mechanism of~\cite{BozzatoES:18}.
The datalog translation for $\DLliteR$ DKBs is included in 
the latest version of the \emph{\CKRew (CKR datalog rewriter)} 
prototype~\cite{BozzatoES:18}, which is available online.%
\footnote{\url{http://ckrew.fbk.eu/}.}	
\item
	In Section~\ref{sec:complexity}  
	we provide complexity results for 
	reasoning problems on exception-safe  DKBs based on $\DLliteR$.
        Deciding satisfiability of such a DKB with respect to justified models is
        tractable, while inference of an axiom under cautious (i.e.,
        certainty) semantics is \conp-complete in general.
	Moreover, 
	CQ answering is shown to be $\Pi^p_2$-complete.
				
\item
  In Section~\ref{sec:unnamed} we discuss how reasoning on
  unnamed exceptional instances affects the complexity, and in
  particular how the notion of exception-safety can be generalized in
  a way such that the techniques developed and the results obtained
  can be lifted to this setting. We present for this the class of
  $n$-derivation exception (de) safe programs, which for a few ($n$ bounded by a
  constant) unnamed individuals in exceptions
   stays within the same complexity and for polynomially
   many faces an increase by at most one level in the polynomial hierarchy.
   Furthermore, we discuss how it is possible to extend
   the current datalog translation to manage unnamed elements under de-safety.
\end{itemize}
%
%
\rewnote{1}{This is an extension of the (best) RuleML+RR 2019 paper. Relation to RuleML+RR 2019 paper is quite clear via some comments that are given about the improvements of the current version; however, a more detailed analyze should be added in order to fully understand/appreciate the novel contributions.}
\comment{With respect to the initial conference paper
presented at \emph{RuleML+RR 2019}~\cite{DBLP:conf/ruleml/BozzatoES19}, 
this version of the paper extends the work by
including a study of properties of DKB-models and justifications
over unnamed individuals in Section~\ref{sec:dkb} and Section~\ref{sec:unnamed}, 
where the notion of $n$-de safe DKBs is introduced and 
the extension of results to this setting is discussed.
The current paper also includes a more detailed 
study for semantic properties of DKB-models (Section~\ref{sec:properties})
and complexity of reasoning problems (Section~\ref{sec:complexity}).
With respect to the ASP translation, rules have been slightly simplified;
moreover, the current translation has been implemented in the \CKRew prototype.
Finally, with respect to the conference paper, we provide further details 
and comparisons on related work in Section~\ref{sec:related}.
%
To increase readability and the comprehension of the contributions, 
additional details and 
proofs of the results are reported in the Appendix.}
%


\section{Preliminaries}
\label{sec:prelims}

\noindent
\textbf{Description Logics and $\DLliteR$ language.}
We assume the common definitions of description logics~\cite{dlhb}
and the definition of the logic $\DLliteR$~\cite{CalvaneseGLLR07}:
we summarize in the following the basic definitions used in this work.
For ease of reference, we present in Table~\ref{tab:dl-lite-operators} in the Appendix 
the details of syntax and semantics of $\DLliteR$.

A \emph{DL vocabulary} 
$\Sigma$ consists of the mutually disjoint countably infinite
sets $\NC$ of \emph{atomic concepts},
$\NR$ of \emph{atomic roles}, and 
$\NI$ of \emph{individual constants}.
Intuitively, concepts represent classes of objects
(e.g., $\mi{PhDStudent}$), roles represent binary
relations across objects (e.g., $\mi{hasCourse}$),
and individual names identify specific elements of the domain (e.g., $\mi{bob}$).
%
Complex \emph{concepts} are then recursively defined as the smallest
sets containing all concepts that can be inductively constructed using
the constructors of the considered DL language 
(see, e.g., Table~\ref{tab:dl-lite-operators} for $\DLliteR$).

A $\DLliteR$ \emph{knowledge base} $\K=\stru{\T,\Rcal,\Acal}$ consists of: 
a TBox $\T$ containing \emph{general concept inclusion (GCI)} axioms $C \subs D$ 
where $C,D$ are concepts, of the form:
\[\begin{array}{l}

  C := A \;|\; 
         \exists R
\qquad\qquad
  D := A \;|\; 
         \non C \;|\;
         \exists R
\end{array}\] 
where $A \in \NC$ and $R \in \NR$;\footnote{In the following, 
we will use $C$ to denote a left-side concept and $D$ as a right-side concept.}
an RBox $\Rcal$ containing \emph{role inclusion (RIA)} axioms $S \subs R$, 
reflexivity, irreflexivity, inverse and
role disjointness axioms, where $S,R$ are roles; 
and an ABox $\Acal$ composed of assertions of the forms 
$D(a)$, 
$R(a,b)$, 
with $R \in \NR$ and $a,b \in \NI$.
\begin{example}
\label{ex:prelim-dl}
The TBox $\T$ may include concept inclusion 
expressions such as  $\mi{PhDStudent} \isa \non \exists \mi{hasCourse}$;
the RBox $\Rcal$ 
may contain a role inclusion $\mi{hasAdvisor} \isa \mi{isStudentOf}$;
and finally, the ABox $\Acal$ may contain assertions
$\non \mi{Professor}(\mi{bob}), \mi{hasAdvisor}(\mi{bob}, \mi{alice})$.\footnote{For simplicity, 
in the following examples,
we may represent knowledge bases as set of axioms with implicit separation of TBox, RBox and ABox.}\EndEx
\end{example}
%
A \emph{DL interpretation} is a pair $\I=\stru{\Delta^\I,\cdot^\I}$ where $\Delta^\I$
is a non-empty set called \emph{domain} and $\cdot^\I$ is the \emph{interpretation
function} which assigns denotations for language elements:
$a^\I \in \Delta^\I$, for $a \in \NI$;
$A^\I \subseteq \Delta^\I$, for $A \in \NC$; 
$R^\I \subseteq \Delta^\I\times\Delta^\I$, for $R \in \NR$. 
The interpretation of non-atomic concepts and roles is defined by the evaluation 
of their description logic operators (see Table~\ref{tab:dl-lite-operators} 
and~\cite{CalvaneseGLLR07} for $\DLliteR$).
%
An interpretation $\I$ \emph{satisfies} an axiom 
$\phi$, denoted
$\I\dlmodels\phi$, if it verifies the respective semantic condition, in particular: 
for $\phi = D(a)$, $a^\I \in D^\I$;
for $\phi = R(a,b)$, $\stru{a^\I,b^\I} \in R^\I$;
for $\phi = C \subs D$, $C^\I \subseteq D^\I$ (resp. for RIAs).
$\I$ is a \emph{model} of $\K$, denoted
$\I\dlmodels\K$, if it satisfies all axioms of $\K$.
%
\begin{example}[cont'd]
An interpretation $\I$ satisfies 
$\non \mi{Professor}(\mi{bob})$ if $\mi{bob}^\I \notin \mi{Professor}^\I$,
and $\I$ satisfies $\mi{PhDStudent} \isa \non \exists \mi{hasCourse}$
if, for every element $d$ of $\mi{PhDStudent}^\I$, there does not exist 
some domain element $e$ such that $\stru{d,e} \in \mi{hasCourse}^\I$.\EndEx
\end{example}
Without loss of generality, we adopt the
{\em standard name assumption (SNA)} in the DL context 
(see~\cite{BruijnET:08,DBLP:journals/ai/EiterILST08} 
for more details).
That is, we assume an infinite subset
$\NI_S \subseteq\NI$ of individual constants, called {\em standard
  names} s.t. in every interpretation $\I$ we have (i)
 $\Delta^\I = \NI_S^\I = \{ c^\I \mid c \in \NI_S\}$; (ii) $c^\I
 \neq d^\I$, for every distinct $c,d \in \NI_S$. Thus, we may assume
 that $\Delta^\I= \NI_S$ and $c^\I=c$ for each $c\in \NI_S$. 
  The \emph{unique name assumption (UNA)} 
	corresponds to assuming
  $c\neq d$ for all constants in $\NI\setminus \NI_S$ resp.\ occurring in 
the knowledge base.%
\footnote{Under the SNA, equality between elements can be embedded using a
binary predicate $\approx$ that satisfies the usual congruence
axioms \cite{Fitting-FirsOrdeLogiAuto:96}.}
%
We confine here to 
knowledge bases without reflexivity axioms. The reason 
is that reflexivity
allows one to derive positive properties for any (named and unnamed) individual,
thus complicating the treatment of defeasible axioms.


\smallskip\noindent
\textbf{Datalog programs and Answer Sets.}
We express our rules in
\emph{datalog with negation} 
under
answer sets semantics. 
In fact, we use here two kinds of 
negation\footnote{Strong negation can be easily emulated using fresh
atoms and weak negation resp.\ constraints. While it does not yield higher expressiveness, it is
more convenient for presentation.}: 
strong (``classical'') negation $\non$ and weak \emph{(default) negation}\/ $\naf$
under the interpretation of answer sets semantics~\cite{gelf-lifs-91};
the latter is in particular needed for representing defeasibility.

A \emph{signature} is a tuple $\Pair{\DC}{\DP}$ of  a finite set $\DC$  of \emph{constants}
and a finite set $\DP$  of \emph{predicates}.
We assume a set $\DV$ of \emph{variables}; the elements of $\DC \cup
\DV$ are \emph{terms}.
%
An \emph{atom}
is of the form $p(t_1, \ldots, t_n)$
where $p \in \DP$ and $t_1$, \ldots, $t_n$, are terms.
A \emph{literal} $l$ is either a \emph{positive literal} $p$ or a 
\emph{negative literal} $\non p$, where  $p$ is an atom and $\non$ 
is 
strong negation. Literals of the form $p$, $\non p$ are \emph{complementary}.
We denote with $\neg. l$ the opposite of literal
$l$, i.e., $\neg.p = \non p$ and $\neg.\non p = p$ for an atom $p$.
\rewnote{2}{"$\non.l$" $\rightarrow$ do you mean "$\non l$" ? why using a dot?}
\lbnote{Added reply: The dot notation was used to denote the negation as an operator on (possibly negated) literals, having that $\non.\non p = p$ for an atom $p$.}
A (datalog) rule $r$ is an expression: 
\begin{equation}
\label{rule}
a \leftarrow b_1, \dots, b_k, \naf b_{k+1}, \dots, \naf b_{m}.
\end{equation}
where $a, b_{1}, \dots, b_{m}$ are literals. 
We denote with $\Head(r)$ the head $a$ of rule $r$ and with
$\Body(r) = \{b_1, \dots, b_k,\naf b_{k+1}, \dots,$ $\naf b_{m}\}$ the body of $r$, respectively.
A (datalog) \emph{program} $P$ is a finite set of rules.
%
An atom (rule etc.) is \emph{ground} if 
no variables occur in it. A \emph{ground substitution} $\sigma$ for $\Pair{\DC}{\DP}$
is any function $\sigma \,{:}\, \DV \to \DC$;
the \emph{ground instance} of an atom (rule, etc.) $\chi$ from
$\sigma$, denoted $\chi\sigma$, is obtained by replacing in $\chi$
each occurrence of variable $v \in \DV$ with $\sigma(v)$.
A \emph{fact} $H$ is a ground rule $r$ with empty body.
The \emph{grounding}\/ of a rule $r$, $\grd(r)$, is the set of all
ground instances of $r$, and the \emph{grounding}\/ of a program $P$
is $\grd(P) = \bigcup_{r\in P} \grd(r)$.

Given a program $P$, the \emph{(Herbrand) universe} $U_P$ of $P$ is the set of all
constants occurring in $P$ and the \emph{(Herbrand) base}
$B_P$ of $P$ is the set of all the 
ground literals 
constructable from the predicates in $P$ and the 
constants in $U_P$.
An \emph{interpretation} $I \subseteq B_P$ is any 
satisfiable subset of
$B_P$ (i.e., not containing complementary literals); 
a literal $l$ is \emph{true} in $I$, denoted $I\models l$,
if $l \in I$, and $l$ is \emph{false} in $I$ if $\neg.l$ is true.
%
Given a rule $r \in \grd(P)$,
we say that $\Body(r)$ is true in $I$, denoted $I\models \Body(r)$, if (i) $I\models b$ for each literal 
$b \in \Body(r)$ 
and (ii) $I\not\models b$ for each literal $\naf b\in \Body(r)$.
A rule r is \emph{satisfied} in $I$, denoted $I\models r$, if either 
$I\models \Head(r)$ or $I\not\models \Body(r)$.
An interpretation $I$ 
is a {\em model}\/ of $P$, denoted $I \models P$,
if $I\models r$ for each $r\in \grd(P)$;
moreover, $I$ is 
\emph{minimal}, 
if $I'\not\models P$ for each subset $I'\subset I$.

Given an interpretation $I$ for $P$, the 
\emph{reduct} of $P$ w.r.t. $I$ \cite{gelf-lifs-91}, denoted by $G_I (P)$, is the set of rules obtained from $\grd(P)$ by 
  (i) removing every rule $r$ such that 
 $I\models l$ for some $\naf l\in\Body(r)$; and
  (ii) removing the NAF part from the bodies of the remaining rules.
Then, $I$ is an \emph{answer set} of $P$, if $I$ is a minimal
model of $G_I(P)$; the minimal model is
unique and exists iff $G_I(P)$ has some model. 
Moreover, if $M$ is an answer set for $P$, then $M$ is a minimal model of $P$.
We say that a literal $a \in B_P$ is a \emph{consequence} of $P$ and write
$P \models a$ if every answer set $M$ of $P$ fulfills $M \models a$.


\section{DL Knowledge Base with Justifiable Exceptions}
\label{sec:dkb}


In this paper we concentrate on reasoning over a DL knowledge base enriched
with \emph{defeasible axioms}, whose syntax and interpretation are 
analogous to~\cite{BozzatoES:18}. With respect to the contextual framework
presented in~\cite{BozzatoES:18}, this corresponds to reasoning
inside a single local context: while this simplifies the
presentation of defeasibility aspects and the resulting reasoning method
for the case of $\DLliteR$, it can be generalized to the original case of multiple
local contexts.

\smallskip\noindent
\textbf{Syntax.}
%
Given a DL language $\Lcal_\Sigma$ based on a DL vocabulary 
$\Sigma = \NC \cup \NR \cup \NI$,
a \emph{defeasible axiom} is any expression of the form
$\default(\alpha)$, where $\alpha \in \Lcal_\Sigma$.

We denote with $\Lcal_\Sigma^\default$ the DL language extending
$\Lcal_\Sigma$ with the set of defeasible axioms in $\Lcal_\Sigma$.
On the base of such language,
we provide our definition of knowledge base with defeasible axioms.

\begin{definition}[defeasible knowledge base, DKB]
A \emph{defeasible knowledge base (DKB)} $\Kcal$ on
a vocabulary $\Sigma$ is a DL knowledge base over
$\Lcal^\default_\Sigma$.
\end{definition}
In the following, we tacitly consider DKBs based on $\DLliteR$.

\begin{example}
\label{ex:syntax}
We introduce a simple example showing the definition 
and interpretation of a defeasible existential axiom.
In the organization of a university research department, we want to specify that ``in general''
department members need also to teach at least a course. 
On the other hand, PhD students, while recognized as department members,
are not allowed to hold a course.
We can represent this scenario as a DKB $\K_{dept}$ where:

\begin{center}
  $\begin{array}{rl}
    \K_{dept}: & \left\{\begin{array}{l}
		           \default(\mi{DeptMember} \subs \exists \mi{hasCourse}),
							 \mi{Professor} \subs \mi{DeptMember},\\
							 \mi{PhDStudent} \subs \mi{DeptMember}, 
							 \mi{PhDStudent} \subs \non \exists \mi{hasCourse},\\
		           \mi{Professor}(\mi{alice}),\, \mi{PhDStudent}(\mi{bob})  
							\end{array}\right\}
  \end{array}$	
\end{center}
Intuitively, 
we want to override the
fact that there exists some course assigned to the PhD student $\mi{bob}$.
On the other hand, for the individual $\mi{alice}$ no overriding should happen
and the defeasible axiom can be applied. \EndEx
\end{example}

\smallskip\noindent
\textbf{Semantics.}
We can now define a model based interpretation of DKBs,
in particular by providing a semantic characterization to 
defeasible axioms.

Similarly to the case of $\SROIQrl$ in~\cite{BozzatoES:18}, 
we can express $\DLliteR$ knowledge bases in first-order (FO)
logic, where every axiom $\alpha \in \Lcal_\Sigma$
is translated into an equivalent FO-sentence
$\forall\vec{x}.\phi_\alpha(\vec{x})$ where
$\vec{x}$ contains all free variables of $\phi_\alpha$ depending on
the type of the axiom.
The translation, depending on the axiom types, is natural and can be defined analogously to the
FO-translation presented in~\cite{BozzatoES:18}.\footnote{A FO-translation for $\DLliteR$ axioms is provided in~\ref{sec:fo-translation}.}
In the case of existential axioms of the kind
$\alpha = A \isa \exists R$, the FO-translation
$\phi_\alpha(\vec{x})$ is defined as:

\smallskip

\centerline{$A(x_1) \rightarrow R(x_1, f_R(x_1))$\,;}

\smallskip

\noindent that is, we introduce a Skolem function $f_R(x_1)$
which represents new ``existential'' individuals.
%
%
Formally, for every 
atomic role $R \in \NR$
we define a Skolem function 
$f_R$.
In particular, for a set of individual names $N \subseteq \NI$, we will write $sk(N)$ to denote the 
extension of $N$ with the set of Skolem constants for elements in $N$,
i.e., for each name $a \in N$, $sk(N)$ also contains $f_R(a)$ 
for each $f_R$ as above.
%

After this transformation,
the resulting formulas $\phi_\alpha(\vec{x})$
amount semantically to Horn formulas, since
left-side concepts $C$ can be expressed by an existential
positive FO-formula, and right-side concepts $D$ by a conjunction
of Horn clauses. The following property from~\cite[Section 3.2]{BozzatoES:18} 
is then preserved for $\DLliteR$ knowledge bases.

\begin{lemma}
\label{lem:horn-equiv}
For a DL knowledge base $\K$ on $\Lcal_\Sigma$,
its FO-translation 
$\phi_\K \,{:=}\,\bigwedge_{\alpha \in \K}\!\!
  \forall\vec{x}.\phi_\alpha(\vec{x})$
is semantically equivalent to a conjunction of universal Horn clauses.
\end{lemma}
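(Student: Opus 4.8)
The plan is to proceed axiom-type by axiom-type, showing that the FO-translation $\phi_\alpha(\vec x)$ of each $\DLliteR$ axiom $\alpha$ is (equivalent to) a conjunction of universal Horn clauses; since $\phi_\K$ is a conjunction of universally quantified such formulas, the claim then follows by distributing conjunctions. Recall a (universal) Horn clause is a disjunction of literals with at most one positive literal, equivalently an implication $B_1 \wedge \dots \wedge B_n \to H$ with each $B_i$ an atom and $H$ an atom (or $\bot$). The key observation, already flagged in the excerpt, is that every left-side concept $C$ translates to an existentially-quantified conjunction of atoms and every right-side concept $D$ to a conjunction of Horn-compatible consequents, and after Skolemization the existentials on the left disappear.

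First I would fix notation for the concept translations: for $A \in \NC$, $C_A(x) = A(x)$; for $\exists R$ on the left, $(\exists R)(x) = \exists y.\, R(x,y)$; for $\exists R$ on the right, again $\exists y.\,R(x,y)$ but handled via Skolemization; for $\neg C$ on the right, the negation of the corresponding positive formula. Then I would go through the cases. For a GCI $C \subs D$ with $C$ atomic or $\exists R$ and $D$ atomic: $\forall x.\,(C(x) \to D(x))$ where $C(x)$ is either $A(x)$ or $\exists y.R(x,y)$; pulling the existential out front as a universal (prenex: $\forall x \forall y.\,(R(x,y)\to A(x))$) gives a single Horn clause. For $D = \exists R$: $\forall x.\,(C(x) \to \exists y.\,R(x,y))$ Skolemizes to $\forall x.\,(C(x) \to R(x,f_R(x)))$ — still one Horn clause, now over the Skolem-extended signature, which is exactly the construction $sk(N)$ introduced just above. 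For $D = \neg C'$ with $C'$ atomic or $\exists R$: $C \subs \neg A$ becomes $\forall x.\,(C(x) \wedge A(x) \to \bot)$ and $C \subs \neg \exists R$ becomes $\forall x \forall y.\,(C(x) \wedge R(x,y) \to \bot)$ — Horn clauses with empty positive part (constraints). For RBox axioms: a RIA $S \subs R$ is $\forall x \forall y.\,(S(x,y)\to R(x,y))$ (if $S$ is an inverse $P^-$, replace $S(x,y)$ by $P(y,x)$); inverse axioms $R \equiv S^-$ give the pair $\forall x\forall y.(R(x,y)\to S(y,x))$ and $\forall x\forall y.(S(x,y)\to R(x,y))$; role disjointness $\mathrm{Dis}(R,S)$ is $\forall x\forall y.\,(R(x,y)\wedge S(x,y)\to\bot)$; irreflexivity is $\forall x.\,(R(x,x)\to\bot)$. (Reflexivity is excluded by assumption, as the excerpt notes — and indeed $\forall x.\, R(x,x)$ would be a unit positive clause, still Horn, but it is ruled out for other reasons.) Finally ABox assertions $D(a)$, $R(a,b)$ are ground facts: $D(a)$ with $D$ positive is a unit Horn clause, $\neg A(a)$ is the constraint $A(a)\to\bot$, $\neg(\exists R)(a)$ is $\forall y.\,(R(a,y)\to\bot)$, and $R(a,b)$ is a unit clause. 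In every case we get a conjunction of universal Horn clauses, possibly over the Skolem-extended vocabulary, and equivalence to the original (up to the standard Skolemization equivalence, i.e.\ equisatisfiability preserving models modulo the new function symbols) holds clause by clause.

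The routine part is just the enumeration above; the one point needing a word of care — and what I'd flag as the main (mild) obstacle — is the status of "semantically equivalent" in the presence of Skolemization: Skolemization is only equisatisfiable in general, not logically equivalent, so I would either (i) phrase the lemma as it is phrased in \cite{BozzatoES:18}, understanding "equivalent" in the sense that models of $\phi_\K$ correspond to models of the original $\K$ over the original signature (every model of the Horn translation restricts to a model of $\K$, and every model of $\K$ expands to one of the translation by choosing witnesses for the Skolem functions), or (ii) note that under the standard name assumption adopted in the paper the witnesses can be chosen canonically, so the correspondence is in fact a bijection on models modulo interpretation of the $f_R$. Either way, the argument is the same case analysis; the only thing to be explicit about is that the existential quantifiers occur \emph{only} positively in left-side concepts and on the right side only in the restricted form $\exists R$, which is precisely what makes Skolemization produce Horn (rather than merely clausal) output — there is no disjunction and no universally-quantified right-side existential that would force a non-Horn clause. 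I would close by remarking that this is exactly the property \cite[Section 3.2]{BozzatoES:18} establishes for $\SROIQrl$, and the $\DLliteR$ grammar given in the Preliminaries is a syntactic fragment amenable to the identical treatment.
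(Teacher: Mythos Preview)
Your proposal is correct and follows essentially the same approach as the paper: a case-by-case verification that each $\DLliteR$ axiom type translates to (a conjunction of) universal Horn clauses, with left-side existentials becoming universally quantified body atoms and right-side existentials handled by Skolem functions. The paper relegates the details to its Appendix (the translation table and a pointer to \cite[Appendix~A.2]{BozzatoES:18}); your explicit discussion of the equivalence-versus-equisatisfiability subtlety introduced by Skolemization is a useful clarification that the paper leaves implicit.
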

%
We remark that the introduction of Skolem functions does not 
allow us to work on proper Herbrand models of the original language as in~\cite{BozzatoES:18},
since they introduce new Skolem terms in the language.
As we will see in the following, exceptions on these elements
need further conditions to be defined.

With these considerations on the definition of FO-translation, 
we can now provide our definition of axiom instantiation: 

\begin{definition}[axiom instantiation]
Given an axiom $\alpha \in \Lcal_\Sigma$ with FO-translation
$\forall\vec{x}.\phi_\alpha(\vec{x})$, the instantiation  of $\alpha$
with a
tuple $\ee$ of individuals in $\NI$, 
written $\alpha(\ee)$, is the
specialization of $\alpha$ to $\ee$, i.e., $\phi_\alpha(\ee)$,
depending on the type of $\alpha$.
\end{definition}
Note that, since we are assuming standard names, this basically means that
we can express instantiations (and exceptions) to any element of the domain (identified by a standard name in $\NI_S$).
We next introduce clashing assumptions and clashing sets.


\begin{table}[t]%
\caption{(Minimal) clashing sets for $\DLliteR$ clashing assumptions.}
\label{tab:clashingsets}
\vspace{2ex}
\hrule\mbox{}\\[.5ex]
\small
$\begin{array}{rl}               
\stru{A(a),a} : & \{\non A(a) \} \\
\stru{\non A(a),a} : & \{ A(a) \} \\
\stru{R(a,b),(a,b)} : & \{ \non R(a,b) \} \\

\stru{A \subs B, e} : & \{A(e), \non B(e) \} \\
\stru{A \subs \non B, e} : & \{A(e), B(e) \} \\

\stru{\exists R \subs B, e} : & \{\exists R(e), \non B(e)\} \\
\stru{A \subs \exists R, e} : & \{A(e), \non \exists R(e) \}
\end{array}$
$\begin{array}{rl}               
\stru{R \subs T, (e_1,e_2)}             : & \{R(e_1,e_2), \non T(e_1.e_2)\}\\
\stru{\mathrm{Dis}(R,S), (e_1,e_2)}     : & \{R(e_1,e_2), S(e_1,e_2)\}\\
\stru{\mathrm{Inv}(R,S), (e_1,e_2)}     : & \{R(e_1,e_2), \non S(e_2,e_1)\},\\
                                          &  \{\non R(e_1,e_2), S(e_2,e_1)\}\\
\stru{\mathrm{Irr}(R), e}           : & \{R(e,e)\}\\[1ex]
\end{array}$\\[.5ex]
\hrule\mbox{}
\nop{\comment{TE: changed for $\stru{\exists R \subs B, e}$ in the clashing
set $R(e,f)$ to $\exists R(e)$. Else we might experience difficulties
with rewriting, as the witnessing $f$ might change. And, it is easier
to define $n$-boundedness. Corrected the first clashing set for $Inv$.}}
\end{table}

\begin{definition}[clashing assumptions and sets]
A \emph{clashing assumption} is a pair $\stru{\alpha, \ee}$
s.t.\ $\alpha(\ee)$ is an instantiation 
for an axiom $\alpha \in  \Lcal_\Sigma$. 
A \emph{clashing set} for a clashing assumption $\stru{\alpha,\ee}$
is a satisfiable set $S$ that consists of ABox assertions over
$\Lcal_\Sigma$ and negated ABox assertions of the forms $\neg
C(a)$ and $\neg R(a,b)$
such that
$S \cup \{\alpha(\ee)\}$ is unsatisfiable.
\end{definition}
\rewnote{2}{please add an example of clashing assumption and clashing set after Definition 3}
A clashing assumption $\stru{\alpha, \ee}$
represents
that $\alpha(\ee)$ is not satisfiable, 
while a clashing set $S$ provides an assertional ``justification'' for the assumption of 
local overriding of $\alpha$ on~$\ee$.
In Table~\ref{tab:clashingsets} we show the form of clashing sets for 
axioms of $\DLliteR$.
\comment{For example, in the case of an atomic concept inclusion 
defeasible axiom $\default(A \isa B)$ in a context $\mlc$, 
a clashing assumption $\stru{A \isa B, e}$ states the assumption that $A \isa B$
is not satisfiable for $e$ in $\mlc$; a clashing set $S = \{A(e), \non B(e)\}$
provides a justification for the assumption on the overriding of $A \isa B$ on $e$ in $\mlc$.}
We can then extend the notion of DL interpretation 
with a set of clashing assumptions. 

\begin{definition}[CAS-interpretation]
A \emph{CAS-interpretation} is a structure $\I_{\CAS} = \stru{\I, \casmap}$
where $\I = \stru{\Delta^\I, \cdot^\I}$ is a DL interpretation for $\Sigma$ and 
$\casmap$ is a set of clashing assumptions.
\end{definition}
\rewnote{1}{(Check use of ``CAS-interpretation'' vs. ``CAS interpretation'', model, ecc.)}
\lbnote{changed all occurrences to ``CAS-interpretation'' and ``CAS-model''}
By extending the notion of satisfaction with respect to CAS-interpretations,
we can disregard the application of defeasible axioms to the
exceptional elements in the sets of clashing assumptions. 
For convenience, we 
call two DL interpretations $\I_1$ and $\I_2$
\emph{$\NI$-congruent}, if
$c^{\I_1} = c^{\I_2}$ 
holds for every $c\in \NI$.

\begin{definition}[CAS-model]
\label{def:cas-model}
Given a DKB $\Kcal$, 
a CAS-interpretation $\I_{\CAS} = \stru{\I, \casmap}$
is a CAS-model for $\Kcal$ (denoted $\I_{\CAS} \models
  \Kcal$), if the following holds:
\begin{enumerate}[label=(\roman*)]
  \item
   for every $\alpha \in \Lcal_\Sigma$ in $\Kcal$, $\I \models \alpha$;
  \item
   for every  $\default(\alpha) \in \Kcal$ (where $\alpha \in \Lcal_\Sigma$),
   with $|\vec{x}|$-tuple $\vec{d}$ of elements 
   in $\NIs$ such that $\vec{d} \notin \{ \ee \mid \stru{\alpha,\ee} \in \casmap \}$, 
   we have $\I \models \phi_\alpha(\vec{d})$.
\end{enumerate}
\end{definition}
%
We say that a clashing assumption $\stru{\alpha, \ee} \in \casmap$ is
\emph{justified} for a $\CAS$-model $\I_{\CAS} = \stru{\I, \casmap}$,   
if some clashing set
$S = S_{\stru{\alpha,\ee}}$  exists such that, for every CAS-model
$\I_{\CAS}' = \stru{\I', \casmap}$ of $\Kcal$ 
that is $\NI$-congruent with $\I_{\CAS}$, 
it holds that $\I' \models S_{\stru{\alpha,\ee}}$.
We then consider as DKB-models
only the CAS-models
where all clashing assumptions are justified.

\begin{definition}[justified CAS-model and DKB-model]
A $\CAS$-model $\I_{\CAS} = \stru{\I, \casmap}$ of 
a DKB $\Kcal$ is \emph{justified}, if every $\stru{\alpha, \ee} \in
\casmap$ is justified.
An interpretation $\I$ 
is a \emph{DKB-model} of $\K$ (in
symbols, $\I\models\K$), if $\K$ has some  
justified $\CAS$-model $\I_{\CAS} = \stru{\I, \casmap}$.
\end{definition}
%
\begin{example}
\label{ex:2}
	Reconsidering $\K_{dept}$ in Example~\ref{ex:syntax},
	a CAS-model providing the intended interpretation of 
	defeasible axioms is $\I_{\CAS_\mi{dept}} = \stru{\Ical,\chi_\mi{dept}}$ where
  $\mi{bob}^\Ical \neq \mi{alice}^\Ical$ and $\chi_\mi{dept} = \{\stru{\alpha, \mi{bob}}\}$
	with $\alpha = \mi{DeptMember} \subs \exists \mi{hasCourse}$.
	The fact that this model is justified is verifiable considering that for
	the clashing set $S = \{\mi{DeptMember}(\mi{bob}),$ $ \non \exists \mi{hasCourse}(\mi{bob})\}$
	we have $\I \models S$.
	On the other hand, note that a similar clashing assumption for $\mi{alice}$ is not justifiable:
	it is not possible from the contents of $\K_{dept}$ to derive a clashing set $S'$
	such that $S' \cup \{\alpha(\mi{alice})\}$ is unsatisfiable.
	By Definition~\ref{def:cas-model}, 
	this allows us to apply $\alpha$ to this individual as expected and thus
	$\I \models \exists \mi{hasCourse}(\mi{alice})$.\EndEx
\end{example}
\rewnote{1}{First part of the paper uses a working example, which helps clarifing concepts, but then it is recovered only at the end for reasoning on CQ. Could be employed in a wider way in the paper in order to clarify the non-always-easy concepts dealt with?}
\rewnote{2}{The paper is mostly well-written and organized, but some additional examples could improve the reading and help to grasp the main concepts.}
\rewnote{2}{Example 4 illustrates a "normal" application of a default. If I understood correctly, some theories may have alternative extensions (the authors mention the Nixon Diamond in Related Work). Adding an example to show multiple extensions is crucial for a better understanding of the effects of the current semantics. Similarly, could it be the case that a consistent DL-LiteR theory (that is, when defaults are assumed as strong axioms) may become inconsistent (that is, no n-de safe extension) when we consider its non-monotonic semantics? More effort should be put in explaining the effects of the present semantics under a non-monotonic reasoning perspective.}

\comment{
\begin{example}[Nixon Diamond]
\label{ex:nixon}
	Note that different combinations of clashing assumptions can lead to different and alternative 
	justified CAS-models and thus alternative DKB-models.
	We can show this by considering the classic example of the \emph{Nixon Diamond}
	as presented in~\cite[Example 9]{BonattiFPS:15} (see also the example in~\cite[Section 7.4]{BozzatoES:18}).
	Let $\KB_{nd}$ be a DKB defined as follows:
	\begin{center}
  $\begin{array}{rl}
    \K_{nd}: & \left\{\begin{array}{l}
		           \default(\mi{Quacker} \subs \mi{Pacifist}),
		           \default(\mi{Republican} \subs \non\mi{Pacifist}),\\
		           \mi{Quacker}(\mi{nixon}),\, \mi{Republican}(\mi{nixon})  
							\end{array}\right\}
  \end{array}$	
	\end{center}
	This DKB has two possible overridings of the two defeasible axioms (having the same priority),
	which lead to two possible DKB-models $\I_1$ and $\I_2$.
	In particular, in model $\I_1$ we have a clashing assumption 
	$\chi_1 = \{\stru{\mi{Republican} \subs \non\mi{Pacifist}, \mi{nixon}}\}$
	that is justified by the clashing set\linebreak 
	$\{\mi{Republican}(\mi{nixon}), \mi{Pacifist}(\mi{nixon}) \}$: in this model
	we have that $\I_1 \models \mi{Pacifist}(nixon)$.
	Similarly, in model $\I_2$ we have the clashing assumption 
	$\chi_2 = \{\stru{\mi{Quaker} \subs \mi{Pacifist}, \mi{nixon}}\}$ with
	clashing set $\{\mi{Quaker}(\mi{nixon}), \non\mi{Pacifist}(\mi{nixon}) \}$: 
	then, in this model
  $\I_2 \models \non\mi{Pacifist}(nixon)$.	
	
	Thus, we obtain that $\K_{nd} \not\models \mi{Pacifist}(nixon)$
	and $\K_{nd} \not\models \non\mi{Pacifist}(nixon)$. 
	Similarly, the approach presented in~\cite{BonattiFPS:15} can not
	derive $\mi{Pacifist}(nixon)$ or $\non\mi{Pacifist}(nixon)$: however, 
	as we have shown in~\cite{BozzatoES:18}, differently from this approach, 
	in our semantics we can use the alternative models to enable ``reasoning by cases''.
	For example, consider the DKB $\K'_{nd}$ obtained from $\K_{nd}$ by substituting
	$\default(\mi{Republican} \subs \non\mi{Pacifist})$ with the axioms
	$\default(\mi{Republican} \subs \mi{Hawk}),$
	$\mi{Hawk} \subs \non\mi{Pacifist},$
	$\mi{Hawk} \subs \mi{Activist}, \mi{Pacifist} \subs \mi{Activist}$.
	Then, differently from~\cite{BonattiFPS:15}, 
	even with alternative models given by the possible 
	instantiation of clashing assumptions, we obtain that
	$\K'_{nd} \models \mi{Activist}(nixon)$.\EndEx
\end{example}}
%
%
We are interested in DKB-models $\I_{\CAS} = \stru{\I, \casmap}$
in which clashing assumptions $\stru{\alpha,\ee}\in \casmap$
are admitted over individuals that are both named and unnamed in
the knowledge base.
That is, we want to admit also exceptions
over 
individuals introduced by existential axioms.
However, we should have the means to limit the existence of such individuals in exceptions
in order to 
control the reasoning from such models.

Let us denote by $N_\K$ the individuals occurring in $\K$.
A condition that allows us to control the number of unnamed individuals in models
is the following.

\nop{***** old def
\begin{definition}
A CAS-interpretation $\I_{\CAS} = \stru{\I, \casmap}$ for a DKB $\K$
is \emph{$n$-bounded}\/ for $n\geq 0$, if 
at most $n \geq 0$ distinct individual names $d \in sk(N_\K) \setminus N_\K$ can appear in
clashing assumptions in $\chi$.
\end{definition}
\tenote{Here is a formulatio of this condition, we talk about *any* unnamed individuals.}
*****}

\begin{definition}[$n$-boundedness]
\label{defn:n-bounded}
A CAS-interpretation $\I_{\CAS} = \stru{\I, \casmap}$ for a DKB $\K$
is \emph{$n$-bounded}\/ for $n\geq 0$, if 
in $\casmap$ at most $n$ elements occur that are not named by $\K$, i.e., 
it holds that $|\mi{uni}_{\K}(\I_{\CAS})| \leq n$ where $\mi{uni}_{\K}(\I_{\CAS}) = \{ e_1^\I,\ldots,e_k^\I \in \NI_S \mid
\stru{\alpha,(e_1,\ldots,e_k)} \in \casmap\} \setminus \{ c^\I \mid c
\in N_\K \}$.
\end{definition}
%
%
If this condition holds, we can show that unnamed individuals appearing in 
clashing assumptions can be always linked to named individuals from the DKB.
Given a DKB $\K$, let us denote by $\K_s$ the knowledge base where 
all defeasible axioms are turned into strict
axioms.

\nop{**** old version 
\begin{lemma}
	Given a $n$-bounded CAS-interpretation $\I_{\CAS} = \stru{\I, \casmap}$,
	for every $d \in sk(N_\K) \setminus N_\K$ appearing in a clashing assumption $\stru{\alpha,\ee}\in \casmap$,
  there exists an individual name $a \in N_\K$ appearing in a positive literal in the
  clashing set $S_{\stru{\alpha,\ee}}$ 
  such that there is a role chain 
	$R_0(a, e_0), \ldots, R_m(e_{m-1}, d)$ 
	with $\{e_0, \ldots, e_{m-1}\} \subseteq sk(N_\K) \setminus N_\K$.
\end{lemma}

\tenote{Reformulated this lemma as a different condition, for one ``unnamed
exception element at a time.''}******}

\begin{lemma}
Suppose that $\I_{\CAS} = \stru{\I, \casmap}$ is a justified CAS-model of
a DKB $\K$ and that an element $e$ occurring in
$\stru{\alpha,\ee}\in \casmap$ is not named by $\K$. Then, there exists
a role chain $R_1^\I(e_0, e_1), \ldots,$  $R_m^\I(e_{m-1},e_m)$ where
$e_0 = a^\I$ for some $a \in N_\K$, $e_m=e$, 
and $e_{i+1} = f_{R_{i+1}}(e_i)$. 
\end{lemma}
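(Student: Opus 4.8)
The plan is to argue by contradiction: if $e$ were \emph{not} reachable in the claimed way, I would build an $\NI$-congruent $\CAS$-model of $\K$ in which $e$ is completely isolated, and this would contradict the justification of the clashing assumption $\langle\alpha,\ee\rangle$ that contains $e$.

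First, following the remarks preceding Lemma~\ref{lem:horn-equiv}, I regard $\I$ as a model of the Skolemized FO-translation of $\K$ relative to $\casmap$, so that $\I$ fixes, for each $f_R$, an interpretation $f_R^\I$ which maps every $d$ in the extension of the left-side concept of an existential axiom $B\isa\exists R$ to an $R$-successor $f_R^\I(d)$ of $d$ (ABox assertions $\exists R(a)$ are treated in the same way, via the value $f_R^\I(a^\I)$). Let $\Theta$ be the FO-theory consisting of the translations of the non-defeasible axioms and of the ABox of $\K$, together with all Skolemized instances $\phi_\beta(\dd)$ with $\default(\beta)\in\K$ and $\langle\beta,\dd\rangle\notin\casmap$; by Lemma~\ref{lem:horn-equiv} $\Theta$ is Horn, and by Definition~\ref{def:cas-model} $\langle\I,\casmap\rangle\models\Theta$. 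I call a domain element \emph{$\K$-generated} if it lies in the least set $G$ that contains $a^\I$ for every $a\in N_\K$ and is closed under: whenever $d\in G$, $d\in B^\I$, and $B\isa\exists R$ is a non-defeasible axiom of $\K$ or $\default(B\isa\exists R)\in\K$ with $\langle B\isa\exists R,d\rangle\notin\casmap$, then $f_R^\I(d)\in G$. I will show that $e$ is $\K$-generated; the claimed role chain then follows by unfolding a derivation of ``$e\in G$'' (necessarily of positive length, as $e$ is unnamed), noting that at each step $\I$ satisfies $R_{i+1}(e_i,f_{R_{i+1}}^\I(e_i))$ because $e_i\in B_{i+1}^\I$ and $\I$ satisfies the corresponding non-defeasible or non-overridden existential axiom.

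Now suppose, for contradiction, that $e\notin G$. Let $\I'$ be the least model of $\Theta$ that agrees with $\I$ on all individual constants and all Skolem functions (the chase of $\Theta$); since $\I\models\Theta$, we have $\I'\subseteq\I$, i.e.\ $A^{\I'}\subseteq A^\I$ and $R^{\I'}\subseteq R^\I$ for all atomic $A,R$. Then $\langle\I',\casmap\rangle$ is a $\CAS$-model of $\K$ that is $\NI$-congruent with $\langle\I,\casmap\rangle$: conditions~(i) and~(ii) of Definition~\ref{def:cas-model} hold because $\I'\models\Theta$ and each Skolemized formula of $\Theta$ entails the corresponding DL axiom or instance. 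A straightforward induction over the chase stages, using $\I'\subseteq\I$ to pull memberships $d\in C^{\I'}$ back to $d\in C^\I$ (including for left-side concepts $C=\exists R_0$), shows that \emph{every} element occurring in some $A^{\I'}$ or $R^{\I'}$ is $\K$-generated; hence $e$, not being $\K$-generated, is isolated in $\I'$: $e\notin A^{\I'}$ for every atomic $A$, and $e$ occurs in no pair of any $R^{\I'}$. Since $\langle\alpha,\ee\rangle$ is justified for $\langle\I,\casmap\rangle$, there is a clashing set $S=S_{\langle\alpha,\ee\rangle}$ with $\I''\models S$ for every $\NI$-congruent $\CAS$-model $\langle\I'',\casmap\rangle$ of $\K$, so in particular $\I'\models S$. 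Moreover $\I'\models\alpha(\ee)$: since the component $e$ of $\ee$ is isolated, every atom of the (clausal) FO-translation $\phi_\alpha(\ee)$ that mentions $e$ is false in $\I'$, and for each axiom type in Table~\ref{tab:clashingsets} this already makes $\phi_\alpha(\ee)$ true (the atoms mentioning $e$ occur negatively, or as antecedents that are thereby falsified). Thus $\I'\models S\cup\{\alpha(\ee)\}$, contradicting the defining property of a clashing set that $S\cup\{\alpha(\ee)\}$ is unsatisfiable. Therefore $e\in G$, and unfolding the $G$-derivation as above yields the desired chain.

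I expect the main work to be in the middle step: the passage to the Skolemized theory (which is what lets us speak of $e$ as a term $f_{R_m}^\I(\cdots)$), and verifying that the chase $\I'$ is still a \emph{bona fide} $\CAS$-model of $\K$ for the \emph{same} $\casmap$ --- here the Horn shape from Lemma~\ref{lem:horn-equiv} and the containment $\I'\subseteq\I$ are essential, e.g.\ to see that negative axioms such as role disjointness and negated concept inclusions are preserved, and to handle left-side existential concepts in the definition of ``$\K$-generated''. The remaining case analyses over the axiom types of Table~\ref{tab:clashingsets} and the final unfolding of the derivation are routine.
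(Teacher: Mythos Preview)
The paper states this lemma without proof (neither in the main text nor in the appendix), so there is no original argument to compare against. Your approach is correct and is the natural one: pass to the least model (chase) of the Skolemized Horn theory $\Theta$ underlying the CAS-model, observe that every element appearing in a positive atom of the chase is reachable from $N_\K^\I$ along Skolem steps, and derive a contradiction from justification if $e$ were not so reachable. The key points --- that $\I'\subseteq\I$ so that $\langle\I',\casmap\rangle$ is again an $\NI$-congruent CAS-model (in particular that negative axioms and non-overridden defeasible instances are preserved), that isolation of $e$ forces $\I'\models\phi_\alpha(\ee)$ for every axiom shape in Table~\ref{tab:clashingsets}, and that assertion-type clashing assumptions cannot contain unnamed $e$ --- are all handled.

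For context, the paper's proof of Proposition~\ref{prop:pushing-eq} in the appendix uses essentially the same idea in compressed form: it argues that a clashing set satisfied in all $\NI$-congruent CAS-models ``can be derived with axiom unfolding restricted by the clashing assumptions in $\casmap$'', which is precisely your chase argument. Your write-up makes this derivation explicit and carefully separates the role of the Skolem-function interpretation; this is a genuine addition of detail rather than a different method. One small remark: when you say you ``regard $\I$ as a model of the Skolemized FO-translation'', you are implicitly choosing witnesses $f_R^\I(d)$ for each $d$ with an $R$-successor; this uses that under SNA the domain is countable (so a choice function exists without appeal to full AC), which is worth a one-line comment but is otherwise harmless.
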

That is, elements in clashing assumptions that are not named by the DKB
must be linked to it by a ``Skolem chain''.
%

\rewnote{2}{I'm not sure which is the main definition of the proposed semantics. As far as I understood, we are keeping those extensions that are n-de safe. If so, this should be emphasized in the text (something like "Main Definition" or similar).
The definition of n-de safe is based on the notion of "derivability". When you say "derivable", which logical setting do you have in mind? First Order Logic? If not, the concept of derivable should be formally defined.}
\lbnote{I am not sure if it is correct or more confusing to say that n-de safeness is the ``main definition'' of the semantics. I added that we consider FOL when referring to derivability.}

\comment{As our main definition for limiting models to
$n$-bounded CAS-interpretations, we consider the following syntactic
condition restricting unnamed individuals appearing in clashing sets.}

\nop{*********** old definition 
\begin{definition}
  A DKB $\K$ is \emph{$n$-chain exception safe} iff
   for every clashing assumption $\stru{\alpha,\ee}$ with $e \in \ee$ not
   appearing in $N_\K$,
   from $\K_s$ it can be derived a role chain of the kind
	 $R_1(a, e_1), \ldots, R_m(e_{m-1}, e_m)$ 
	 with $m \leq n$, $a \in N_\K$ and $e \in {e_1, \ldots, e_m} \subseteq sk(N_\K)$.	
\end{definition}
}

\nop{
\tenote{formulate another  generalization of exception safety; the
one above is to weak, axioms $A \isa \exists R.A$ with
a defeasible axiom $D(B\isa C)$ would already break it.}
***}

\begin{definition}[$n$-derivation exception (de) safety]
A DKB $\K$ is \emph{$n$-derivation exception (de) safe},
if $m\leq n$ 
Skolem terms $t_1,\ldots,t_m$ exist
such that for every positive assertion $D(e_1)$ resp.\ 
$R(e_1,e_2)$ from a possible clashing set $S_{\stru{\alpha,\ee}}$ for
any $D(\alpha) \in \K$ and atom $D(t'_1)$ resp.\ $R(t'_1,t'_2)$
that is derivable from $\K_s$ (\comment{in FO} under Skolemization), it holds that 
we have $t'_1 \in N_K \cup \{ t_1,\ldots, t_m\}$ resp.\ 
$t'_1,t'_2 \in N_K \cup \{ t_1,\ldots, t_m\}$.
\end{definition}
In particular, for $n=0$ we obtain that no exception on
an unnamed individual can be derived:
in this case, we say that $\K$ is \emph{exception safe}.
If $\K$ is acyclic,%
\footnote{$\K$ is acyclic, if there is no sequence of axioms $E_0 \isa
E_1$, $E_1\isa E_2$,\ldots, $E_{k-1}\isa E_k$ such that
$E_k=E_0$.} then it is $n$-de safe for some $n$ that is exponential
in the size of $\K$ in general, which drops to polynomial if
derivations are feasible in constantly many steps.

\begin{example}[Ex.~\ref{ex:2} cont'd]
Reconsider the CAS-model $\I_{\CAS_\mi{dept}} = \stru{\Ical, \chi_\mi{dept}}$ where
$\chi_\mi{dept} = \{\stru{\alpha, \mi{bob}}\}$
with $\mi{bob}^\Ical \neq \mi{alice}^\Ical$, 
and $\alpha = \mi{DeptMember} \subs \exists \mi{hasCourse}$. 
If we make $\alpha$ strict,
we cannot derive a clashing set $S = \{\mi{DeptMember}(e),$
$\neg\exists \mi{hasCourse}(e)\}$ where $e$ is an unnamed individual;
to derive $\mi{DeptMember}(e)$, it would require some axiom $\exists R^-
\isa  \mi{DeptMember}$ where some unnamed individual is introduced
by some axiom $A \isa \exists R$; however, no such former axioms
 can be derived, and thus $\K_{dept}$ is exception-safe.
\EndEx
\end{example}
However, if $\K$ is
cyclic, it may be not $n$-de safe for every $n \geq 0$.  
%
\begin{example}
\label{ex:mother}
Let us consider the DKB $\DKB = \{ \mi{Employee}
\isa\, \exists \mi{hasSupervisor},$ $\mi{\exists hasSupervisor^- \isa\, Employee},$
$\default(\exists\mi{hasSupervisor}^-\isa\, \bot)$, $\mi{Employee(alice)} \}$.%
\footnote{Here, $\bot$ is the empty concept (``falsity'') emulated by $\bot
\isa A$, $\bot \isa \non A$ for a fresh concept name $A$.}
Informally, every employee and so $\mi{alice}$ has a supervisor
that is an employee, and unless provable to the contrary, an individual
is not a supervisor. This KB has an infinite feed of Skolem terms
$f^n(\mi{alice})$, $n\geq 1$ into the defeasible axiom by the
chain $\mi{hasSupervisor}(\mi{alice},f(\mi{alice})),$
$\mi{hasSupervisor}(f(\mi{alice}),f(f(\mi{alice}))), \ldots$.  Thus, $\DKB$
is not $n$-de safe for any $n \geq 0$. It has two non-isomorphic
DKB-models: one, $\I_{\CAS}^1$, where we have an exception to
$\default(\exists\mi{hasSupervisor}^-\isa\ \bot)$ for $\mi{alice}$ (thus the
model is 0-bounded) and
$f(\mi{alice})=\mi{alice}$, and another one,  $\I_{\CAS}^2$, where we have an exception
for $f(\mi{alice})$ and $\mi{alice} \neq f(\mi{alice})$,
$f(\mi{alice}) = f(f(\mi{alice}))$ (the model is 1-bounded). No longer Skolem chain of three
different elements is possible: then two exceptions would be needed,
which then are however not provable.
If we add the assertion $\non \exists \mi{hasSupervisor}^-(\mi{alice})$ to $\DKB$
stating that $\mi{alice}$ is not a supervisor, then only the DKB-model
$\I_{\CAS}^2$ remains; adding
the assertion
$\mi{Employee(bob)}$ instead, we obtain under UNA a further DKB-model
  $\I_{\CAS}^3$ 
with an exception of $\default(\exists\mi{hasSupervisor}^-\isa\ \bot)$ for
$\mi{bob}$; an exception for both 
$\mi{alice}$ and $\mi{bob}$ is infeasible as this would not be justifiable.
%
\EndEx
\end{example}
%
A syntactic property of DKBs that is useful to be verified is related to the reachability of 
unnamed elements in derivations.


  \begin{definition}[$n$-chain safety]
A DKB $\K$ is \emph{$n$-chain safe}, if 
from $\K_s$ only role chains 
$R_1(a,t_1), \ldots, R_m(t_{m-1},t_m)$ 
where $a \in N_\K$ and 
the $t_1, \ldots, t_m$ are distinct Skolem terms can be derived 
such that $m\leq n$.
\end{definition}
This condition, in particular, is verified in the case that $\K$ is acyclic:
in this case the maximum length of chains is determined by the chains of
existential axioms in $\K$.


If a DKB $\K$ is $n$-chain safe then it is
also $m$-de safe for some $m$ that is exponentially bounded by $n$. 
On the other hand, $\K$ may be $n$-de safe but not $m$-chain safe for any
$m\geq 0$: in the latter case, recursion through axioms that do not
feed into defeasible axioms occur. For instance, if we drop in
Example~\ref{ex:mother} the defeasible axiom
$\default(\exists\mi{hasSupervisor}^-\isa\ \bot)$, then the resulting $\DKB$ is 
trivially exception safe but not $n$-chain bounded. For our purposes,
we shall call a DKB $\K$  {\em recursive}, if $\K$ is not $n$-de
bounded for any $n \geq 0$.

In case of exception safe (i.e., $0$-de safe) DKBs
we obtain the following result.

\begin{proposition}
\label{prop:pushing-eq}
  Let $\I_{\CAS} \,{=}\, \stru{\I, \casmap}$ be a CAS-model of DKB 
$\K$ and let $\K'$ result from $\K$ by pushing equality w.r.t.\ $\I$,
  i.e., replace all $a, b \,{\in}\, N_\K$ s.t.\ $a^\I\,{=}\,b^\I$  by one representative.
If $\K'$ is exception-safe,
then $\I_{\CAS}$ can be justified only if every $\stru{\alpha,\ee}\,{\in}\, \casmap$ is over $N_\K$.
\end{proposition}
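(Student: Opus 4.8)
The plan is to argue by contradiction. Suppose that $\K'$ --- obtained from $\K$ by replacing every $a \in N_\K$ by a chosen representative of its $\I$-equivalence class, which I denote by a substitution $\rho$ --- is exception-safe, that $\I_{\CAS} = \stru{\I, \casmap}$ is a justified $\CAS$-model of $\K$, and yet some $\stru{\alpha, \ee} \in \casmap$ has a component $e$ that is not the $\I$-denotation of any individual in $N_\K$. By Definition~\ref{def:cas-model} a clashing assumption over a strict axiom has no effect, so I may assume $\default(\alpha) \in \K$; then $\default(\alpha) \in \K'$ as well, since pushing equality deletes no axiom and leaves the constant-free TBox/RBox axioms unchanged. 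If $\alpha$ were a defeasible ABox assertion, the tuple $\ee$ would consist only of constants occurring in $\K$ (hence in $N_\K$), contradicting the choice of $e$; so $\alpha$ is a TBox or RBox axiom, and by inspection of Table~\ref{tab:clashingsets} every possible clashing set for $\stru{\alpha,\ee}$ contains a \emph{positive} assertion in which all components of $\ee$ occur. Fixing the clashing set $S = S_{\stru{\alpha,\ee}}$ witnessing that $\stru{\alpha,\ee}$ is justified, I pick such a positive $\beta \in S$; in particular $\beta$ mentions $e$.

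Next I would turn this semantic justification into a syntactic derivation. By Lemma~\ref{lem:horn-equiv}, the FO-translation of the strict axioms of $\K$ together with the instances $\phi_\gamma(\vec{d})$ of all $\default(\gamma) \in \K$ at the non-exception tuples $\vec{d}$ forms (up to equivalence) a set of universal Horn clauses; call this theory $T$. Its rule clauses are among those of $\K_s$, and $\I$ is a model of $T$ because $\I_{\CAS} \models \K$. I would then build the least model $M$ of $T$ \emph{relative to $\I$ on named individuals}: interpret every $a \in N_\K$ by $a^\I$ and close under the immediate-consequence operator of the rule clauses, creating the Skolem elements $f_R(\cdot)$ required by the existential axioms (after first expanding $\I$ so that it interprets the Skolem functions). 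By construction $M$ is $\NI$-congruent with $\I$; this is precisely the point at which pushing equality is needed, since otherwise distinct names of $\K'$ could not all be kept distinct in the named part of a minimal model. Moreover $M$ also satisfies the negative (constraint) axioms of $\K$: the positive facts of $M$ lie below those of every model of the rule clauses and facts, in particular below those of $\I$, which satisfies those constraints; hence $M$ is a $\CAS$-model of $\K$ with clashing assumptions $\casmap$.

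Since $S$ justifies $\stru{\alpha,\ee}$ and $M$ is an $\NI$-congruent $\CAS$-model of $\K$, we get $M \models S$, in particular $M \models \beta$. As $M$ is the least model of a Horn theory and $\beta$ is a positive ground assertion, $\beta$ is derivable from $T$ under Skolemization (over the universe built from the $\I$-images of $N_\K$ and the Skolem functions), hence from $\K_s$, and hence --- rereading that derivation with every named constant replaced by its $\rho$-representative --- from $(\K')_s$. It remains to identify the argument $e$ of $\beta$: because $\beta$ holds in $M$, the element $e$ lies in the domain of $M$, so it is either an $\I$-image of a name in $N_\K$ --- excluded by assumption --- or a nontrivial Skolem term, and by the Skolem-chain Lemma it is in fact of the form $f_{R_m}(\cdots f_{R_1}(a^\I)\cdots)$ for some $a \in N_\K$ and $m \geq 1$. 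Consequently the atom derivable from $(\K')_s$ has the compound Skolem term $f_{R_m}(\cdots f_{R_1}(\rho(a))\cdots) \notin N_{\K'}$ in a clashing-set position of the defeasible axiom $\default(\alpha) \in \K'$, which contradicts the exception-safety ($0$-de safety) of $\K'$. Hence no such $\stru{\alpha,\ee}$ exists, so every clashing assumption in $\casmap$ is over $N_\K$.

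The step I expect to be the main obstacle is the construction and justification of the auxiliary model $M$: one must obtain a model that is simultaneously \emph{least} (hence canonical, i.e.\ its positive facts are exactly those derivable) and \emph{$\NI$-congruent with $\I$} --- and it is exactly here that the equality-pushing hypothesis is indispensable --- while still checking that this least model satisfies the negative axioms of $\K$, so that the justification of $\stru{\alpha,\ee}$ can legitimately be applied to it. The accompanying bookkeeping, via the Skolem-chain Lemma, that matches the unnamed exceptional element with a genuine compound Skolem term (so that the purely syntactic exception-safety condition is really violated) and the uniform treatment of the degenerate ABox case are comparatively routine.
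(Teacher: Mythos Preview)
Your proof is correct and follows essentially the same approach as the paper's: argue by contradiction, use justification to conclude that a positive assertion of the clashing set mentioning the unnamed element is satisfied in every $\NI$-congruent $\CAS$-model, hence derivable from the (restricted, and a fortiori unrestricted) Horn theory, and then contradict exception-safety of $\K'$. Your version is considerably more explicit than the paper's---in particular, the paper's proof simply asserts that ``$S$ can be derived with axiom unfolding restricted by $\casmap$'' without spelling out the least-model construction you carefully develop; your identification of that step as the crux is exactly right.
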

%
%
We remark that the condition of exception-safety can be tested in polynomial time, by 
non-deterministically unfolding the axioms (resolution-style, or
forward in a chase).
In fact, we obtain the following result. 

\begin{proposition}
\label{prop:complexity-recognize-safe}
Deciding whether a given DKB $\DKB$ is exception safe is feasible in
\nlogspace, and whether it is $n$-de safe in \ptime, if $n$ is bounded by a polynomial in the size of $\K$. 
\end{proposition}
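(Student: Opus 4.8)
The plan is to exploit the forest-model property of $\DLliteR$~\cite{CalvaneseGLLR07}: the chase of the Skolemized Horn program $\K_s$ (cf.\ Lemma~\ref{lem:horn-equiv}) consists of the named individuals $N_\K$ as roots, to which anonymous witness paths built from the Skolem functions $f_R$ are attached, and the set of atoms holding at an anonymous element $t=f_R(s)$ depends only on the role $R$ that introduced it, not on $s$. Accordingly, I would first associate with $\K$ a directed \emph{type graph} $G$ of polynomial size whose vertices are the individuals $a\in N_\K$ together with one vertex $\tau_R$ per role $R$ (atomic role or inverse); $G$ has an edge $a\to\tau_R$ whenever $a$, given the TBox-closure of its ABox assertions, is forced to have a fresh $R$-successor, and an edge $\tau_R\to\tau_{R'}$ whenever an element of type $\tau_R$ is forced to have a fresh $R'$-successor. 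Whether such an edge is present, and which concepts and roles hold at $\tau_R$, is decided by positive reachability in the TBox/RBox (propagating only along inclusions $C\isa D$ with $D$ positive, along role inclusions, and along inverse axioms), hence in $\nlogspace$; in fact the whole of $G$ can be folded into a single graph of polynomial size together with the TBox/RBox implication graphs.

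Next I would read off the atoms that matter from the defeasible axioms of $\K$ and the clashing-set patterns in Table~\ref{tab:clashingsets}: for $\default(A\isa B)$, $\default(A\isa\non B)$ and $\default(A\isa\exists R)$ the \emph{trigger} is the concept $A$; for $\default(\exists R\isa B)$ it is $\exists R$; and the role-level defeasible axioms contribute trigger roles in the relevant argument positions. Call a vertex $\tau_R$ of $G$ \emph{bad} if a trigger concept holds at type $\tau_R$, or a trigger-role atom holds on a chase edge incident to an element of type $\tau_R$. The key step is to prove that a positive assertion of some possible clashing set of some $\default(\alpha)\in\K$ is derivable from $\K_s$ over an element not named by $\K$ \emph{iff} $G$ has a path from some $a\in N_\K$ to a bad vertex. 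Granting this, $\K$ is exception safe iff no bad vertex is reachable from a named individual in $G$: this is a non-reachability question over a polynomial-size graph with $\nlogspace$-decidable vertex/edge relations, hence --- using that $\nlogspace$ is closed under complementation --- in $\nlogspace$.

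For $n$-de safety with $n$ polynomially bounded I would count the distinct elements not named by $\K$ that carry a trigger atom derivable from $\K_s$, capping the count at $n+1$. By the forest property each chase element has at most one $f_R$-successor per role and the anonymous subtrees rooted at distinct elements are pairwise disjoint, so this count equals $\sum_{a\in N_\K}\sum_{a\to\tau_R\text{ in }G}b(\tau_R)$, where $b(\tau)=[\tau\text{ bad}]+\sum_{\tau\to\tau'}b(\tau')$ is evaluated bottom-up with values saturated at $n+1$. If some vertex reachable from $N_\K$ lies on a cycle of $G$ from which a bad vertex is still reachable, the count is infinite and $\K$ is not $n$-de safe for any $n$; otherwise the part of $G$ from which a bad vertex is reachable and which is reachable from $N_\K$ is acyclic, so the recursion terminates after polynomially many steps. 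Hence $\K$ is $n$-de safe iff this finite count is at most $n$, which is decidable in $\ptime$ since $n$ is polynomially bounded.

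I expect the main obstacle to be the first two steps --- establishing the forest structure precisely enough that derivability of a trigger atom over an unnamed element coincides with reachability of a bad vertex in $G$. In particular, the types $\tau_R$ must be set up correctly in the presence of role inclusions and inverse axioms, and for the binary (role) triggers the argument positions must be tracked carefully; one must also check that the several $\nlogspace$ sub-procedures (ABox-type closure, role-hierarchy closure, graph reachability) really combine into a single $\nlogspace$ computation for exception safety. The counting and cycle analysis underlying the $\ptime$ bound is then comparatively routine, the only delicate point being the correct treatment of cycles that do, respectively do not, lead to a bad vertex.
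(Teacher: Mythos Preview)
Your plan is sound and rests on the same core idea as the paper's proof: the Skolem chase of $\K_s$ is forest-shaped, and the set of (positive) atoms derivable at an anonymous element depends only on the role that created it, so the relevant state space can be abstracted to finitely many ``types''. The paper realises this idea more operationally: for exception safety it guesses a derivation chain $\alpha_0(\vec{t}^{\,0}),\alpha_1(\vec{t}^{\,1}),\ldots$ applying axioms step by step while storing, per argument position, only whether the current term is a Skolem term or an element of $N_\K$, and checks at the end that a trigger atom with a Skolem argument was reached. Your type-graph formulation is the explicit, ``static'' counterpart of the same computation: a path in your graph $G$ from some $a\in N_\K$ to a bad $\tau_R$ is exactly such a derivation chain with types recorded. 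Both need Immerman--Szelepcs\'enyi to get the \emph{positive} (exception-safe) answer into \nlogspace, and both need the composition of sub-procedures you flag; the paper's ``inline the guesses'' description and your ``fold the implication graphs into $G$'' are the same manoeuvre.

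For $n$-de safety the two arguments diverge slightly. The paper first tests (in \nlogspace) whether a derivation cycle feeds a trigger atom; if not, it performs a depth-first traversal of the Skolem tree, pruning subtrees from which no trigger atom is reachable and stopping once $n{+}1$ distinct bad Skolem terms are found, arguing that at most $(n{+}1)\cdot\mathrm{depth}$ nodes are visited. Your bottom-up recurrence $b(\tau)=[\tau\text{ bad}]+\sum_{\tau\to\tau'}b(\tau')$ with saturation at $n{+}1$ on the acyclic part of $G$ is a cleaner way to obtain the same count, because it evaluates each type once rather than reasoning about the size of the explored tree. Both are correct; yours makes the polynomial bound more immediate, while the paper's avoids committing to a particular graph data structure. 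The obstacles you list---correctness of the type abstraction under role inclusions/inverses, two-argument triggers, and composing the \nlogspace\ pieces---are precisely the places where care is needed; none of them hides a gap, and the forest-model/canonical-model results for $\DLliteR$ you cite suffice to discharge them.
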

Syntactic classes ensuring exception safety can be singled out: 
simple examples of these can be the class of DKBs containing no
existential axioms or the class where no inverse roles and no 
defeasible role axioms appear in the DKB.

Checking chain-safety is tractable, similarly to testing exception safety.

\begin{proposition}
\label{prop:chain-bounded}
Deciding whether a given DKB $\DKB$ is $n$-chain safe, where $n\geq 0$, is feasible in
\nlogspace.
\end{proposition}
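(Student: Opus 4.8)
The plan is to reduce the complement of the problem --- the existence of a derivable role chain of \emph{distinct} Skolem terms of length $m>n$ --- to a reachability/short-path question on a polynomial-size directed graph, and then invoke the Immerman--Szelepcsényi theorem (closure of \nlogspace\ under complement). The structural core is the following consequence of the restricted syntax of $\DLliteR$ (only inverses and role inclusions; no number restrictions and no role compositions; and no reflexivity axioms by assumption): the Skolem chase of $\K_s$ is a forest rooted at the named individuals $N_\K$, since the only rule that introduces a binary atom between two elements is the firing of some $C\isa\exists S$ on an element $e$, which produces $S(e,f_S(e))$; hence every derivable binary atom $R(u,v)$ relates an element to its $f_S$-parent/child, and a chain $R_1(a,t_1),\dots,R_m(t_{m-1},t_m)$ with $t_1,\dots,t_m$ distinct Skolem terms must be a strictly descending root path $a,f_{S_1}(a),f_{S_2}(f_{S_1}(a)),\dots$ (it cannot climb back to the named root, which would not be a Skolem term). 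Moreover, again by linearity, the set of concepts / existential restrictions that $f_R(t)$ provably belongs to depends only on $R$ (it is the closure of $\{\exists R^-\}$ under the TBox and RBox of $\K_s$), so a partial descending chain ending in an $f_R$-element can be extended by one step exactly when this $R$-determined type contains some left-side concept $C$ with $(C\isa\exists S)\in\K_s$.

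This collapses the a priori unbounded space of chains into a finite graph $G$ whose vertices are the roles occurring in $\K$, with an edge $R\to S$ iff the $R$-determined type contains a left-side $C$ with $(C\isa\exists S)\in\K_s$, and a distinguished set of ``initial'' roles $S_1$ for which some named individual $a$ has a derivable type (computable from the ABox of $\K$ by the same closure) containing a $C$ with $C\isa\exists S_1$. Derivable descending Skolem chains correspond exactly to paths in $G$ starting from an initial role; if the part of $G$ reachable from an initial role contains a cycle, chains of every length exist and $\K$ is not $n$-chain safe for any $n$, whereas if that part is acyclic the longest chain has length at most the number $r$ of roles in $\K$. Consequently $\K$ fails to be $n$-chain safe iff there is a derivable descending chain of length exactly $n+1$, and --- by pigeonhole, since a chain longer than $r$ repeats a role $S_i$ and therefore exhibits a cycle --- such a failure can always be witnessed by a chain of length at most $r+1$, independently of how large $n$ is (this is what keeps the counters logarithmic even when $n$ is given in binary). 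A nondeterministic logspace machine therefore decides the complement: guess a named $a$ and a role $S_1$ and verify, by an \nlogspace\ type-closure computation, that $S_1$ is initial for $a$; then either, if $n<r$, guess up to $n$ further edges of $G$, each checked by an analogous \nlogspace\ closure computation, while maintaining an $O(\log r)$-bit step counter; or, if $n\ge r$, traverse $G$ from $v_{S_1}$ for at most $r$ steps, guess a vertex lying on a cycle, and return to it. All subroutines are \nlogspace\ and compose, and all counters fit in $O(\log|\K|)$ space, so the complement is in \nlogspace; by Immerman--Szelepcsényi, $n$-chain safety is in \nlogspace\ as claimed. (The degenerate case in which $\K_s$ is itself unsatisfiable --- also \nlogspace-decidable --- is handled separately: then every chain is vacuously derivable and $\K$ is not $n$-chain safe.)

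The main obstacle is the first step, namely rigorously establishing the ``local, forest-shaped'' behaviour of the Skolem chase of $\K_s$: that every derivable binary atom links an element to its parent or child, and that the provable type of $f_R(t)$ is a function of $R$ alone. This is precisely where the limited form of $\DLliteR$ axioms is essential (role compositions or functionality would break it), and it is what turns an unbounded search into reachability in the finite graph $G$. Once this is in place, the reachability and counting arguments, together with the closure of \nlogspace\ under complement, are routine and parallel the argument used for Proposition~\ref{prop:complexity-recognize-safe}.
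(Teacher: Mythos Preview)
Your proof is correct and follows essentially the same approach as the paper---abstracting away from concrete Skolem terms to role names and reducing the existence of a long chain to a reachability/path-length question decidable by a nondeterministic logspace walk---though you make explicit what the paper leaves implicit (the forest shape of the Skolemized chase, the fact that the type of $f_R(t)$ depends only on $R$, the appeal to Immerman--Szelepcs\'enyi, and the edge cases of unsatisfiable $\K_s$ and large $n$). The paper's own proof is a three-line sketch that nondeterministically extends a chain while recording only the current predicate name and the ``type'' (named vs.\ Skolem) of the arguments and increments a counter when a fresh Skolem term is introduced; your role-graph construction is the same abstraction made precise.
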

%
We remark that both checking exceptions and $n$-chain safety are
in fact \nlogspace-complete, as the hardness is inherited from the  
\nlogspace-completeness of $\DLliteR$ (which holds already in the
absence of existential axioms).


\section{Semantic Properties}
\label{sec:properties}

DKB-models have interesting semantic properties similar to 
those exhibited by CKR-models in
\cite{BozzatoES:18}. 
In this section 
we provide a review of such properties: in particular, these results
are important to show the feasibility of the reasoning approach presented
in Section~\ref{sec:translation}.

For example, we can prove that justified CAS-models have a non-monotonic behaviour
with respect to the contents of DKBs,
cf. \cite[Prop.~4, non-monotonicity]{BozzatoES:18}. 

\begin{proposition}[non-monotonicity]
Suppose $\I_{\CAS} = \stru{\I,\casmap}$ is a justified
CAS-model of a DKB $\K'$. 
Then, $\I_{\CAS}$ is not necessarily a 
	justified CAS-model of every $\K \subset \K'$.
\end{proposition}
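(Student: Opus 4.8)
The proposition is a non-necessity claim, so it suffices to exhibit one counterexample: a DKB $\K'$, a proper sub-DKB $\K \subset \K'$, and a justified CAS-model of $\K'$ that fails to be a justified CAS-model of $\K$. I would reuse $\K_{dept}$ from Example~\ref{ex:syntax} together with its justified CAS-model $\I_{\CAS_\mi{dept}} = \stru{\Ical,\chi_\mi{dept}}$, where $\chi_\mi{dept} = \{\stru{\alpha,\mi{bob}}\}$ and $\alpha = \mi{DeptMember}\isa\exists\mi{hasCourse}$ (Example~\ref{ex:2}), and remove from $\K_{dept}$ exactly the axiom responsible for the exception on $\mi{bob}$, i.e., put $\K = \K_{dept}\setminus\{\mi{PhDStudent}\isa\non\exists\mi{hasCourse}\}$.

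The first, easy part is to observe that $\I_{\CAS_\mi{dept}}$ is still a CAS-model of $\K$: by Definition~\ref{def:cas-model}, dropping a strict axiom can only weaken condition~(i), and condition~(ii) is unchanged since it refers only to $\Ical$, $\chi_\mi{dept}$, and the (still present) defeasible axiom $\default(\alpha)$. Hence any failure of monotonicity must come solely from the justification requirement, and the core of the argument is to show that $\stru{\alpha,\mi{bob}}$ is no longer justified with respect to $\K$. I would establish this in two steps. (a) Every clashing set $S$ for $\stru{\alpha,\mi{bob}}$ entails $\non\exists\mi{hasCourse}(\mi{bob})$: since $S$ is a satisfiable set of (negated) assertions with $S\cup\{\alpha(\mi{bob})\}$ unsatisfiable and $\alpha(\mi{bob})$ is the implication $\mi{DeptMember}(\mi{bob})\to\exists\mi{hasCourse}(\mi{bob})$, every model of $S$ must falsify that implication, hence satisfy $\mi{DeptMember}(\mi{bob})$ and $\non\exists\mi{hasCourse}(\mi{bob})$ (in line with the corresponding entry of Table~\ref{tab:clashingsets}). (b) There is an $\NI$-congruent CAS-model $\I_{\CAS}' = \stru{\I',\chi_\mi{dept}}$ of $\K$ with $\I'\models\exists\mi{hasCourse}(\mi{bob})$: take $\I'$ with $\mi{DeptMember}^{\I'} = \{\mi{alice},\mi{bob}\}$, a $\mi{hasCourse}$-successor for each of $\mi{alice}$ and $\mi{bob}$, and the named individuals interpreted as standard names so that $\I'$ is $\NI$-congruent with $\I_{\CAS_\mi{dept}}$; then $\I_{\CAS}'\models\K$, since condition~(ii) for $\default(\alpha)$ only requires $\mi{alice}$ — the sole $\mi{DeptMember}$ outside $\chi_\mi{dept}$ — to have a course. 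Combining (a) and (b), $\I'\not\models S$ for every candidate clashing set $S$, so no single clashing set is satisfied by all $\NI$-congruent CAS-models of $\K$; therefore $\stru{\alpha,\mi{bob}}$ is unjustified for $\K$, and $\I_{\CAS_\mi{dept}}$ — a CAS-model of $\K$ — is not a justified CAS-model of $\K \subset \K_{dept}$, which proves the claim.

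The delicate point is step~(a): one must rule out \emph{all} clashing sets, not just the minimal one listed in Table~\ref{tab:clashingsets}. This is handled uniformly by the observation that unsatisfiability of $S\cup\{\alpha(\mi{bob})\}$ for a satisfiable assertional $S$ forces $S$ to entail both the antecedent and the negated consequent of $\alpha(\mi{bob})$, so every clashing set necessarily entails $\non\exists\mi{hasCourse}(\mi{bob})$ — exactly what the witness model $\I'$ refutes. The rest is a routine check that $\I'$ respects the strict axioms of $\K$ and condition~(ii) with the given $\chi_\mi{dept}$.
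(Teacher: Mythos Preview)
Your proof is correct and follows essentially the same strategy as the paper: exhibit a DKB whose justified CAS-model has a clashing assumption that becomes unjustifiable once the axiom responsible for the ``negative half'' of its clashing set is removed. The paper does this with a generic atomic inclusion $\default(A\isa B)$ together with assertions $A(c),\non B(c)$ and removes $\non B(c)$; you instantiate the same pattern with the running example $\K_{dept}$ and remove $\mi{PhDStudent}\isa\non\exists\mi{hasCourse}$.

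Two minor remarks on presentation, not correctness. First, your step~(a) is more careful than the paper's own argument: the paper only points out that the \emph{particular} clashing set $\{A(c),\non B(c)\}$ is no longer verified, whereas you argue (correctly) that \emph{every} clashing set for $\stru{\alpha,\mi{bob}}$ must entail $\non\exists\mi{hasCourse}(\mi{bob})$, which is what the definition of ``justified'' actually requires. Second, since the paper uses the Skolemized FO-translation, $\alpha(\mi{bob})$ is literally $\mi{DeptMember}(\mi{bob})\to\mi{hasCourse}(\mi{bob},f_{\mi{hasCourse}}(\mi{bob}))$ rather than the implication with an existential consequent; your step~(a) still goes through because the Skolem function may be interpreted arbitrarily, so any model of $S$ with a $\mi{hasCourse}$-successor for $\mi{bob}$ would satisfy $\alpha(\mi{bob})$ for some interpretation of $f_{\mi{hasCourse}}$, contradicting unsatisfiability of $S\cup\{\alpha(\mi{bob})\}$. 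It would be worth making this explicit given that you chose an existential defeasible axiom rather than the simpler atomic one the paper uses.
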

\begin{proof}
  This property can be easily verified by considering the interpretation 
	of defeasible axioms and their justification.
	Let us suppose that $\default(A \isa B) \in \K$ (cases for other defeasible axioms can be shown similarly)
	and $\{A(c), \non B(c)\} \subseteq \K$.
	If we consider a justified CAS-model $\I_{\CAS} = \stru{\I,\casmap}$ for $\K$,
	then the defeasible axiom is not applied to the exceptional instance $c$
	in the interpretation: that is, $\stru{A \isa B,c} \in \chi$ and 
	$S = \{A(c), \non B(c)\}$ is a clashing set for such exception.
	However, if we consider $\K' = \K \setminus \{ \non B(c) \}$, then 
	in $S$ is no longer verified by models of $\K'$: thus the clashing assumption
	$\stru{A \isa B,c}$ can no longer be justified and $\I_\CAS$ is not a 
	justified model for $\K'$.
\end{proof}
%
Another property of justified CAS-models
that we can show is non-redundancy of justifications, cf.\ \cite[Prop.~6, minimality of
justification]{BozzatoES:18}. Basically, this means that
in justified models clashing assumptions are minimal, 
in the sense that no assumption can be omitted.

\begin{proposition}[non-redundancy]
\label{prop:cas-minimality}
Suppose $\I_{\CAS} = \stru{\I, \casmap}$ and 
$\I'_{\CAS} = \stru{\I', \casmap'}$ are NI-congruent justified CAS-models of a DKB $\K$,
then $\casmap' \not\subset \casmap$ holds.
\end{proposition}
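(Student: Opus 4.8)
The plan is to argue by contradiction: assume $\casmap' \subsetneq \casmap$ and fix a clashing assumption $\stru{\alpha,\ee} \in \casmap \setminus \casmap'$ (one exists since the inclusion is strict). Here $\alpha$ is attached to an axiom of $\K$ — a defeasible axiom $\default(\alpha)\in\K$ or, in the degenerate case, a strict one. The goal is to show that $\stru{\alpha,\ee}$ cannot in fact be justified in $\I_{\CAS}$, contradicting the assumption that $\I_{\CAS}$ is a justified CAS-model.

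First I would observe that enlarging the set of clashing assumptions only weakens condition (ii) of Definition~\ref{def:cas-model}: since $\casmap' \subseteq \casmap$, for every axiom $\beta$ we have $\{\ee\mid\stru{\beta,\ee}\in\casmap'\}\subseteq\{\ee\mid\stru{\beta,\ee}\in\casmap\}$, so every tuple exempted from an instance of a defeasible axiom under $\casmap$ is already exempted under $\casmap'$, while condition (i) does not refer to the clashing assumptions at all. Hence, because $\I'_{\CAS}=\stru{\I',\casmap'}$ is a CAS-model of $\K$, so is $\stru{\I',\casmap}$. Moreover, $\NI$-congruence concerns only the DL interpretations $\I$ and $\I'$, so $\stru{\I',\casmap}$ is still $\NI$-congruent with $\I_{\CAS}=\stru{\I,\casmap}$.

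Then I would invoke justification: since $\I_{\CAS}$ is justified, there is a clashing set $S=S_{\stru{\alpha,\ee}}$ such that $\J\models S$ for every CAS-model $\stru{\J,\casmap}$ of $\K$ that is $\NI$-congruent with $\I_{\CAS}$. Applying this to $\stru{\I',\casmap}$ gives $\I'\models S$. On the other hand, since $\stru{\alpha,\ee}\notin\casmap'$, the tuple $\ee$ is not among those exempted for $\alpha$ under $\casmap'$, so — $\stru{\I',\casmap'}$ being a CAS-model of $\K$ — condition (ii) (respectively condition (i), if $\alpha$ is strict) forces $\I'\models\phi_\alpha(\ee)$, i.e.\ $\I'\models\alpha(\ee)$. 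But by definition of a clashing set, $S\cup\{\alpha(\ee)\}$ is unsatisfiable, contradicting $\I'\models S$ and $\I'\models\alpha(\ee)$. Therefore $\casmap'\not\subset\casmap$.

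The one step that needs care is the bridging observation: the justification condition for $\I_{\CAS}$ quantifies over CAS-models sharing the clashing-assumption set $\casmap$, whereas $\I'$ carries the smaller set $\casmap'$; this is resolved by first passing to $\stru{\I',\casmap}$ via the monotonicity of CAS-modelhood in the clashing-assumption set, and only then applying the justification property. The remainder is a routine adaptation of the argument for \cite[Prop.~6]{BozzatoES:18}; the only $\DLliteR$-specific point is that, per Table~\ref{tab:clashingsets}, clashing sets still consist solely of (possibly negated) ABox assertions, so that $\I'\models S$ and the unsatisfiability of $S\cup\{\alpha(\ee)\}$ are meaningful exactly as before.
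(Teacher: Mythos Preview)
Your proof is correct and follows essentially the same approach as the paper's: pick $\stru{\alpha,\ee}\in\casmap\setminus\casmap'$, use monotonicity of CAS-modelhood in the clashing-assumption set to pass from $\stru{\I',\casmap'}$ to $\stru{\I',\casmap}$, apply the justification of $\I_{\CAS}$ to obtain $\I'\models S$, and derive a contradiction with $\I'\models\alpha(\ee)$. Your exposition is in fact slightly more explicit than the paper's about the monotonicity step and about why $\NI$-congruence is preserved.
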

\begin{proof}
	Let us consider $\stru{\alpha, \ee} \in \casmap \setminus \casmap'$.
	Then, given that $\I'_{\CAS}$ is a model for $\K$, it holds that
	$\I' \models \alpha(\ee)$ (i.e., $\ee$ is not an exceptional instance of the 
	defeasible axiom $\default(\alpha)$).
	Given that all clashing assumptions in $\chi$ are justified,
	then there exists a clashing set $S = S_{\stru{\alpha, \ee}}$ for the 
	clashing assumption $\stru{\alpha, \ee}$ such that $\I \models S$.
	Moreover, by the definition of justification,
	for every other $\I''_{\CAS} = \stru{\I'', \casmap}$ for $\K$ 
	that is NI-congruent with $\I_{\CAS}$, it holds that $\I'' \models S$.
	
	Let us consider $\I'''_{\CAS} = \stru{\I', \casmap}$.
	Given that $\I'_{\CAS} \models \K$ and $\casmap' \subseteq \casmap$,
	we have that also $\I'''_{\CAS} \models \K$.
	Moreover, since $\I'''_{\CAS}$ is NI-congruent with $\I_{\CAS}$,
	we have $\I' \models S$.
	Then, $\I' \models S \cup \{\alpha(\ee)\}$: however this contradicts the
	fact that $S$ is a clashing set for $\alpha(\ee)$. 
	This proves the fact that $\casmap' \not\subset \casmap$ must hold.
\end{proof}
%
We remark that, as a consequence of this property, 
exceptions in DKB-models are minimal (i.e., \emph{minimally justified}):
thus, in our approach this minimality property is derived from the definition of
the interpretation of defeasible axioms and it is not explicitly 
required in its definition.

As shown in Lemma~\ref{lem:horn-equiv}, $\DLliteR$ knowledge bases
can be represented as Horn theories: however, differently 
from 
$\SROIQrl$~\cite[Prop.~7, intersection property]{BozzatoES:18},
the use of Skolem functions does not allow us to properly preserve the 
intersection property of Horn theories and a revised notion of model intersection
is needed, cf.~\cite[Proof for Prop.~6.7]{BruijnEPT:11}.
%
Formally, for two $\NI$-congruent DL interpretations $\I_1$ and $\I_2$, 
we denote by $\I_1~\widetilde{\cap}_\Ncal~\I_2$ the $\NI$-congruent
``intersection'' interpretation over a set of ground terms $\Ncal$
(i.e., a set of individual names and the possible instantiations of 
Skolem functions over them) defined as follows:
\begin{itemize}
\item 
  $\Delta^{\J} = \{ [t^{\I_1}, t^{\I_2}] \;|\; t \in \Ncal\}$;
\item
  $t^{\J} = [t^{\I_1}, t^{\I_2}]$, for $t \in \Ncal$;
\item
  $[d_1, d_2] \in C^{\J}$ iff $d_1 \in C^{\I_1}$ and $d_2 \in C^{\I_2}$, for $C\in \NC$;
\item
  $([d_1, d_2], [e_1, e_2]) \in R^{\J}$ iff	
	$(d_1, e_1) \in R^{\I_1}$ and $(d_2, e_2) \in R^{\I_2}$, for $R\in \NR$;
\end{itemize} 
\comment{where we abbreviate $\J = \I_1\widetilde{\cap}_\Ncal \I_2$.}
\rewnote{2}{the notation in the first two items before last paragraph is really hard to read.  Avoid using a superscript with a superscripted and subscripted intersection operator and use an auxiliary name instead}
\lbnote{added abbreviation $\J$}
%
Note that, while for the $\NI$-congruence we have that $a^{\I_1} = a^{\I_2}$ for 
individual names $a \in \Ncal \cap \NI$, it is not necessarily true that 
$t^{\I_1} = t^{\I_2}$ for some Skolem term $t \in \Ncal$: 
in this case, by the definition above, 
in the ``intersection'' interpretation we consider \comment{$t^\J$ (i.e., $t^{\I_1\widetilde{\cap}_\Ncal\I_2}$)}
as the ``conjunction'' of the interpretations of $t$ in the two models. 
Extending this construction to CAS-interpretations, we need to 
ensure that the interpretation of the joined interpretations is coherent
on exceptions. Namely, we require the following property:
\begin{equation}
  \mbox{If $t^{\I_i} = e$ and some clashing assumption $\stru{\alpha, \ee}$ with
	$e \in \ee$ exists, then $t^{\I_1} = t^{\I_2}$.} \tag{$*$}
\label{prop:exc-conjunction}
\end{equation}
%
Then,
the following result can be shown:

\begin{proposition} 
\label{prop:model-intersection}
Let  $\I^i_{\CAS} = \stru{\I_i,\casmap}$, $i \in\{1,2\}$, be
$\NI$-congruent CAS-models of
a DKB $\K$ 
fulfilling (\ref{prop:exc-conjunction}).
Then, $\I_{\CAS} = \stru{\I,\casmap}$, where $\I = \I_1 \widetilde{\cap}_\Ncal \I_2$ 
and $\Ncal$ includes all individual names occurring in 
$\K$ and for each element $e$ occurring in $\casmap$
some $t\in \Ncal$ such that $t^{\I_1}=e (= t^{\I_2})$,
is also a CAS-model of $\K$.%
\footnote{Technically, we view here $[e,e] \in
\comment{\Delta^{\I}}$ 
as $e$; the assumption
ensures we have infinitely many standard names left.}
Furthermore, if  some $\I^i_{\CAS}$, $i\in\{1,2\}$, 
is justified and $\DKB$ is exception safe, then $\I_{\CAS}$ is justified. 
\end{proposition}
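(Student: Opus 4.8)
The plan is to prove the two assertions in turn, throughout viewing $\I=\I_1\widetilde{\cap}_\Ncal\I_2$ as the substructure of the direct product $\I_1\times\I_2$ carved out by the term set $\Ncal$. Since $\Ncal$ is a set of ground terms closed under the Skolem functions $f_R$, these are interpreted coordinatewise, $f_R^{\I}([d_1,d_2])=[f_R^{\I_1}(d_1),f_R^{\I_2}(d_2)]\in\Delta^{\I}$; and by Lemma~\ref{lem:horn-equiv} the Skolemized FO-translation of $\K$ is a conjunction of universal Horn clauses, which are preserved under direct products and under substructures --- this is the abstract reason behind all the coordinatewise arguments below.

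First I would check condition~(i) of Definition~\ref{def:cas-model}, i.e.\ $\I\models\alpha$ for every strict $\alpha\in\K$, by a short case analysis over the $\DLliteR$ axiom shapes, reading membership in $C^{\I}$ resp.\ $R^{\I}$ off the two coordinates: for $A\isa B$ and $\exists R\isa B$ one uses $d_i\in A^{\I_i}$ (resp.\ $d_i\in(\exists R)^{\I_i}$) $\Rightarrow d_i\in B^{\I_i}$; for $A\isa\non B$, $\exists R\isa\non B$, $\mathrm{Dis}(R,S)$ and $\mathrm{Irr}(R)$ one gets a contradiction already in one coordinate; for RIAs and $\mathrm{Inv}(R,S)$ one transfers pairs coordinatewise; and ABox assertions are immediate, existential assertions being witnessed by the pertinent Skolem term, which lies in $\Ncal$. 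The only cases really using closure of $\Ncal$ under the $f_R$ are those with $\exists R$ on the right, where the first-coordinate witness $f_R^{\I_1}(d_1)$ and the second-coordinate witness $f_R^{\I_2}(d_2)$ pair up inside $\Delta^{\I}$.

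Next I would verify condition~(ii) for each $\default(\alpha)\in\K$. Fix a tuple $\vec d=[\vec d_1,\vec d_2]$ over $\Delta^{\I}$ with $\vec d\notin\{\ee^{\I}\mid\langle\alpha,\ee\rangle\in\casmap\}$, and let $\vec t$ over $\Ncal$ witness it. The crucial step is the claim that then $\vec d_1\notin\{\ee^{\I_1}\}$ and $\vec d_2\notin\{\ee^{\I_2}\}$ (with $\langle\alpha,\ee\rangle$ ranging over $\casmap$): if, say, $\vec d_1=\ee^{\I_1}$, then every coordinate of $\vec d_1$ is an element occurring in $\casmap$, so property~(\ref{prop:exc-conjunction}) applied to the matching coordinate of $\vec t$ forces that coordinate to coincide in $\I_1$ and $\I_2$, hence $\vec d_1=\vec d_2$; combining this with the hypothesis that each element occurring in $\casmap$ is denoted by a term of $\Ncal$ with equal coordinates --- so that $\ee^{\I_1}=\ee^{\I_2}$ --- gives $\vec d=[\ee^{\I_1},\ee^{\I_2}]=\ee^{\I}$, a contradiction; the case $\vec d_2=\ee^{\I_2}$ is symmetric. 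Granting the claim, condition~(ii) for $\I_1$ and $\I_2$ yields $\I_1\models\phi_\alpha(\vec d_1)$ and $\I_2\models\phi_\alpha(\vec d_2)$, and since $\phi_\alpha$ is a conjunction of Horn clauses the same product/substructure reasoning carries over (a positive-existential body satisfied in $\I$ projects along the coordinate homomorphisms to being satisfied in $\I_1$ and $\I_2$; the heads then hold in both, hence in $\I$, the $\exists R$-heads being witnessed coordinatewise by $f_R$). So $\I\models\phi_\alpha(\vec d)$, and $\I_{\CAS}\models\K$.

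For the second assertion, assume without loss of generality that $\I^1_{\CAS}$ is justified and that $\K$ is exception safe. Using the SNA, the $\NI$-congruence of $\I_1,\I_2$, and the identification $[e,e]\equiv e$, one first checks that $\I$ is $\NI$-congruent with both $\I_1$ and $\I_2$, so that the CAS-models of $\K$ $\NI$-congruent with $\I_{\CAS}$ are precisely those $\NI$-congruent with $\I^1_{\CAS}$. For each $\langle\alpha,\ee\rangle\in\casmap$ one reuses the clashing set $S_{\langle\alpha,\ee\rangle}$ witnessing its justification for $\I^1_{\CAS}$ --- still a clashing set, that being a purely syntactic property --- and verifies that every CAS-model $\langle\I',\casmap\rangle$ of $\K$ that is $\NI$-congruent with $\I_{\CAS}$, being also $\NI$-congruent with $\I^1_{\CAS}$, satisfies $S_{\langle\alpha,\ee\rangle}$; in particular $\I$ does, once the first assertion is established. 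Exception safety is what guarantees here (cf.\ Proposition~\ref{prop:pushing-eq} and Table~\ref{tab:clashingsets}) that the $\casmap$-exceptions and their clashing sets live over $N_\K$, so that the coordinatewise readings of the named individuals agree in $\I$, $\I_1$ and $\I_2$ and no unnamed witness is needed to satisfy $S_{\langle\alpha,\ee\rangle}$ in $\I$. I expect this last transfer --- that the ``intersection'' $\I$ still validates the justifying clashing sets --- to be the main obstacle: it would break if an exception sat on a Skolem element whose two coordinates disagreed, since then $\Delta^{\I}$ could miss witnesses needed for positive existential assertions, and ruling this out is exactly the purpose of~(\ref{prop:exc-conjunction}) together with exception safety.
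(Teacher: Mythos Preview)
Your proposal is correct and follows essentially the same approach as the paper: both exploit that the FO-translation is Horn (Lemma~\ref{lem:horn-equiv}) to obtain preservation under the product-like intersection, invoke property~(\ref{prop:exc-conjunction}) to show that a non-exception $\vec d$ in $\I$ projects to non-exceptions $\vec d_1,\vec d_2$ in $\I_1,\I_2$, and then transfer justification via the observation that $\I$ is $\NI$-congruent with $\I_1$ (so the same clashing sets witnessing justification for $\I^1_{\CAS}$ work for $\I_{\CAS}$). Your account is a bit more explicit---framing $\I$ as a substructure of the direct product and spelling out the $\NI$-congruence-class argument for the justification step---whereas the paper argues directly at the Horn-clause level and is terser on the role of exception safety; but the underlying ideas coincide.
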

A
consequence of this result is that a least justified CAS-model exists
for exception safe DKBs 
%
relative to a \emph{name assignment}, which 
we define as any interpretation 
$\nu: \NI \rightarrow  \Delta$ of the
individual constants 
on the domain $\Delta$ (respecting SNA).
The name assignment of a CAS-interpretation
$\I_{\CAS}=\stru{\I,\casmap}$ is the one induced by $\NI^\I$.
We call a clashing assumption $\casmap$ for a DKB $\K$ 
\emph{satisfiable}\/ (resp., \emph{justified}) for a name assignment
$\nu$, if $\K$ has some
CAS-model (resp., justified CAS-model) $\I_{\CAS}$ with name assignment $\nu$. 
Then, by using the construction of ``intersection'' interpretations 
over $\CAS$-models of $\Kcal$ for a given satisfiable $\chi$, we obtain the following result.
Denote for a CAS-model $\IC_{\CAS} = \stru{\I,\casmap}$ 
by $At_\Ncal(\I_{\CAS}) = \{ A(t), R(t,t') \mid t,t' \in \Ncal, 
A \in \NC,\, R \in \NR, \I \models A(t), \I \models R(t,t') \}$
the set of all atomic concepts and roles over $\Ncal$ satisfied by
$\IC_{\CAS}$, and define $\IC_{\CAS} \subseteq_\Ncal \IC'_{\CAS}$ for
CAS-models $\IC_{\CAS}$ and $\IC'_{\CAS}$ by $At_\Ncal(\IC_{\CAS})
\subseteq At_\Ncal(\IC'_{\CAS})$.

\begin{corollary}[least model property]
\label{coroll:least-model}
If a clashing assumption $\casmap$ for an exception-safe DKB $\K$ is satisfiable for
name assignment $\nu$, then $\K$ has an  $\subseteq_\Ncal$-least (unique minimal)%
\footnote{We consider uniqueness modulo equivalence, i.e., that
$\IC_{\CAS}\subseteq_\Ncal \IC_{\CAS}'$ and $\IC'_{\CAS}\subseteq_\Ncal \IC_{\CAS}$}        
CAS-model $\hat{\I}_\K(\casmap,\nu) = \stru{\hat{\I},\casmap}$ on
$\Ncal$ that contains all Skolem terms of individual constants, i.e.,
for every CAS-model $\I_{\CAS}' = \stru{\I',\casmap}$ relative to
$\nu$, it holds that $\I_{\CAS} \subseteq_\Ncal \I'_{\CAS}$.
Furthermore, 
$\hat{\I}_\K(\casmap,\nu)$ is justified if $\casmap$ is justified.
\end{corollary}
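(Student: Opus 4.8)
The plan is to build the $\subseteq_\Ncal$-least CAS-model as the ``intersection'' of all CAS-models of $\K$ that share the fixed clashing assumption $\casmap$ and the fixed name assignment $\nu$, using Proposition~\ref{prop:model-intersection} as the engine. First I would fix $\Ncal$ to be a set of ground terms containing all individual names occurring in $\K$ together with, for every element $e$ occurring in $\casmap$, one term $t_e \in \Ncal$ with $t_e^{\I}=e$ across the relevant models (this is possible since we may take Skolem terms of the named constants, and by exception-safety together with Proposition~\ref{prop:pushing-eq} all exception elements are in fact over $N_\K$, so such witnessing terms exist uniformly). Let $\mathcal{S}$ be the (nonempty, by the satisfiability hypothesis) family of all CAS-models $\I_{\CAS}^j = \stru{\I_j,\casmap}$ of $\K$ with name assignment $\nu$. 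Since all members of $\mathcal{S}$ are $\NI$-congruent (they share $\nu$) and, by exception-safety, condition~(\ref{prop:exc-conjunction}) holds trivially — no $t^{\I_j}$ lands on an exception element that is not already a named constant, and on named constants $t^{\I_1}=t^{\I_2}$ by $\NI$-congruence — Proposition~\ref{prop:model-intersection} applies to any pair and, by an easy induction, to any finite intersection; I would then take the componentwise intersection over the whole family (or, equivalently, argue via compactness / a direct fixpoint construction on $At_\Ncal$) to obtain $\hat{\I}_\K(\casmap,\nu) = \stru{\hat{\I},\casmap}$, which is again a CAS-model of $\K$ by Proposition~\ref{prop:model-intersection}.

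Next I would verify the least-model claim. By construction $At_\Ncal(\hat{\I}_\K(\casmap,\nu)) = \bigcap_{j} At_\Ncal(\I_{\CAS}^j)$, so $\hat{\I}_\K(\casmap,\nu) \subseteq_\Ncal \I'_{\CAS}$ for every CAS-model $\I'_{\CAS}$ relative to $\nu$; uniqueness modulo $\subseteq_\Ncal$-equivalence is immediate since two $\subseteq_\Ncal$-least elements must each be $\subseteq_\Ncal$ the other. I also need to check that $\hat{\I}$ genuinely contains the Skolem terms of all individual constants: this follows because every CAS-model of $\K$ must satisfy the Skolemized existential axioms, so each $f_R(a)$ is realised as a domain element in every $\I_j$, and including the corresponding ground terms in $\Ncal$ keeps them in $\hat{\I}$; here the footnote's remark that infinitely many standard names remain is what lets us keep these terms distinct where needed.

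Finally, for the justification part: assume $\casmap$ is justified, i.e.\ $\K$ has some justified CAS-model with name assignment $\nu$. For each $\stru{\alpha,\ee}\in\casmap$ pick a clashing set $S_{\stru{\alpha,\ee}}$ witnessing justification in that model; by Definition of justification this $S$ is satisfied by \emph{every} $\NI$-congruent CAS-model of $\K$ (in particular every member of $\mathcal{S}$), and since $S$ consists of (possibly negated) ABox assertions over terms that — by exception-safety — involve only named constants, $S$ is preserved under the intersection construction (positive literals survive componentwise intersection; negative literals $\neg C(a)$, $\neg R(a,b)$ survive because $[d_1,d_2]\in C^{\hat{\I}}$ forces $d_i\in C^{\I_i}$, contradiction). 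Hence $\hat{\I}\models S_{\stru{\alpha,\ee}}$ and, as $\hat{\I}_\K(\casmap,\nu)$ is $\NI$-congruent with all CAS-models relative to $\nu$, the same $S$ witnesses justification there; thus $\hat{\I}_\K(\casmap,\nu)$ is justified. The main obstacle I anticipate is the bookkeeping around $\Ncal$ and the Skolem terms — making sure the intersection is taken over a \emph{fixed} term set on which all models agree about which ground term names which exception element, so that~(\ref{prop:exc-conjunction}) is genuinely vacuous under exception-safety and the (possibly infinitely many) intersections are well-defined; everything else is a routine transfer of the $\SROIQrl$ argument of \cite{BozzatoES:18} through the Skolemized setting.
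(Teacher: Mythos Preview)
Your overall strategy—intersecting CAS-models via Proposition~\ref{prop:model-intersection}—matches the paper's, but there is a genuine gap in your verification of condition~(\ref{prop:exc-conjunction}). You claim it holds ``trivially'' because exception-safety (via Proposition~\ref{prop:pushing-eq}) forces all exception elements to be over $N_\K$, and $\NI$-congruence then makes interpretations agree. But~(\ref{prop:exc-conjunction}) quantifies over \emph{all} terms $t\in\Ncal$, including Skolem terms. Even if every exception element $e$ equals $c^\nu$ for some constant $c\in N_\K$, a Skolem term $f_R(a)$ may satisfy $f_R(a)^{\I_1}=e$ in one CAS-model while $f_R(a)^{\I_2}\neq e$ in another, since Skolem functions are interpreted freely. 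When this happens the intersection can fail to be a CAS-model: at the pair element $[e,f_R(a)^{\I_2}]$ a defeasible axiom may be violated (the violation is inherited from $\I_1$, where the exception at $e$ allowed it) without a matching clashing assumption covering that pair. Your appeal to Proposition~\ref{prop:pushing-eq} does not help: it constrains which elements can occur in a \emph{justified} $\casmap$, not how Skolem terms are interpreted in arbitrary CAS-models sharing a merely satisfiable $\casmap$.

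The paper closes this gap by first passing from each CAS-model to a \emph{founded} sub-model $\I'_{\CAS}\subseteq_\Ncal\I_{\CAS}$, in which every true positive atom $R(e_1,e_2)$ (and $A(e_1)$) is reached by a derivation chain from an ABox fact. In a founded model, any term $t$ whose interpretation lands on an exception element must—by lifting the derivation chain to a syntactic derivation in $\K_s$ and invoking exception-safety—be a constant, so~(\ref{prop:exc-conjunction}) holds between founded models. Foundedness also replaces your infinite-intersection step, which Proposition~\ref{prop:model-intersection} does not directly license and for which ``compactness'' is not available in this non-monotonic setting: instead of intersecting the whole family $\mathcal S$, the paper shows that for any two founded models $\I^1_{\CAS},\I^2_{\CAS}$ one has $\I^1_{\CAS}\subseteq_\Ncal \I_1\widetilde{\cap}_\Ncal\I_2$ (each atom of $\I^1$ has a derivation that replays in the intersection), hence $\I^1_{\CAS}\subseteq_\Ncal\I^2_{\CAS}$, so all founded models are $\subseteq_\Ncal$-equivalent and any one of them is already the $\subseteq_\Ncal$-least CAS-model. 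Your justification paragraph is essentially fine and agrees with the paper's (which simply invokes the justification clause of Proposition~\ref{prop:model-intersection}).
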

We note that, moreover, a Skolem term $t$ over an individual
constant $c$ can occur in the least model $\hat{\I}_\K(\casmap,\nu)$ 
if and only if it has an alias in $\K$, i.e., some $c' \in N_\K$ such that
$\nu(c')=\nu(c)$ exists; thus,  $\hat{\I}_\K(\casmap,\nu)$ is fully
characterized by its restriction to $N_\K$.
We also note that we can reason independently from (in)equalities
that emerge from the name assignment $\nu$ regarding exceptions, 
since modulo $\nu$, no new Skolem terms can occur
in derived positive atoms. This
also means that (in)equalities do not affect (relative to $\nu$) the conditions for $n$-de safety.

As in the case of $\SROIQrl$ knowledge bases in~\cite{BozzatoES:18},
in order to formulate a reasoning method it is important to
show that also in $\DLliteR$ knowledge bases (and DKBs) we can 
concentrate on reasoning over the \emph{named} part of an interpretation.
Notably, in the case of $\DLliteR$ we need to extend this notion to 
consider the interpretation of Skolem individuals.

We say $\I$ is \emph{named} relative to $\Ncal\subseteq sk(\NI) \setminus\NI_S$, 
if 
(i) $C^\I\subseteq \Ncal^\I$ and
$R^\I\subseteq \Ncal^\I{\times} \Ncal^\I$ for each $C\in
 \NC$ and $R\in \NR$; 
(ii) for every Skolem function $f$ and $a \in \Ncal \cap \NI$,
$f(a)^\I \in \Ncal^\I$.
Moreover, for a $\DLliteR$ knowledge base $\K$, 
if $c^\I \neq d^\I$ for
any distinct $c,d\in \Ncal$ and $\Ncal$ includes $sk(N_\K)$ 
(i.e., all constants that occur in $\K$ and their Skolem constants), 
we call $\I$ a \emph{pseudo Herbrand interpretation} for $\K$ relative to $\Ncal$.

Let for any $N\subseteq \NI\setminus\NI_S$
the \emph{$N$-restriction of $\I$}, denoted by $\I^N$, 
be the interpretation that results from 
$\I$ by:
(i) restricting $C^\I$ to $sk(N)^\I$
for all $C\in \NC$ and $R^\I$ to $sk(N)^\I{\times} sk(N)^\I$ for every $R \in \NR$;
(ii) redirecting any role of the type $(e_1, e_2) \in R^\I$ with $e_1 \in sk(N)$
and $e_1 \notin sk(N)$ to $(e_1, g) \in R^\I$ with $g \neq e_1$ and $g \in sk(N)^\I \setminus N^\I$
(i.e., $g$ is the interpretation of some Skolem 
term on $N$).
%
Then, we can obtain the following lemma over such interpretation restrictions.

\begin{lemma}
\label{lem:named}
Suppose $\I$ is a model of a $\DLliteR$ knowledge base $\K$ and $N\subseteq
\NI \setminus\NI_S$ includes all individuals occurring in $\K$. Then, 
the $N$-restriction $\I^N$ is named w.r.t.\ $sk(N)$ and a model of $\K$.
\end{lemma}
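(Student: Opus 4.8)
The statement has two parts; the namedness part is essentially immediate from the construction, and for the model part the plan is to reduce everything to two monotonicity facts about how the $N$-restriction affects the left- and right-hand sides of $\DLliteR$ axioms.

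\emph{Namedness w.r.t.\ $sk(N)$.} This is read off the definition of the $N$-restriction. Item (i) of that definition gives $C^{\I^N}\subseteq sk(N)^\I = sk(N)^{\I^N}$ for every $C\in\NC$ and $R^{\I^N}\subseteq sk(N)^\I\times sk(N)^\I$ for every $R\in\NR$, which is condition (i) of namedness; for condition (ii), note that by definition of $sk(\cdot)$ the set $sk(N)$ contains $f(a)$ for each $a\in N$ and each Skolem function $f$, and since $\cdot^{\I^N}$ agrees with $\cdot^\I$ on constants we have $f(a)^{\I^N} = f(a)^\I \in sk(N)^\I = sk(N)^{\I^N}$.

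\emph{Model property.} Here I would check $\I^N\dlmodels\alpha$ for each $\alpha\in\K$ by a case split on the $\DLliteR$ axiom shapes, resting on: (a) left-side concepts can only shrink, i.e.\ $C^{\I^N}\subseteq C^\I\cap sk(N)^\I$ for $C\in\{A,\exists R\}$ — for $C=A$ this is item (i), and for $C=\exists R$ every pair in $R^{\I^N}$ has its first component in $sk(N)^\I$, and that component already had an $R$-successor in $\I$ (the same one, kept by the restriction, or the one that was redirected), hence lies in $(\exists R)^\I$; and (b) right-side concepts are preserved on $sk(N)^\I$, i.e.\ if $d\in sk(N)^\I$ and $d\in D^\I$ then $d\in D^{\I^N}$ — for $D=A$ by item (i), for $D=\non C'$ with $C'$ a left-side concept because (a) gives ${C'}^{\I^N}\subseteq {C'}^\I$ so $d\notin {C'}^{\I^N}$, and for $D=\exists R$ because the $R$-successor of $d$ in $\I$, if it lies outside $sk(N)^\I$, is redirected to one inside. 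Granting (a) and (b), every GCI $C\isa D$ of $\K$ is preserved: $d\in C^{\I^N}$ forces $d\in sk(N)^\I$ and $d\in C^\I\subseteq D^\I$, whence $d\in D^{\I^N}$. The ABox assertions are the special case where $C$, resp.\ the second argument of a role assertion, is a named individual, and this is exactly where the hypothesis that $N$ contains every individual of $\K$ enters (it ensures $a^\I\in sk(N)^\I$). Finally, $\mathrm{Irr}(R)$ is immediate, since the restriction preserves irreflexivity and the redirection only adds pairs $(e_1,g)$ with $g\neq e_1$.

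\emph{The main obstacle.} The delicate point, which I expect to be the crux of the argument, is the interaction of the redirection with the role axioms that couple several roles — RIAs $S\isa R$, disjointness $\mathrm{Dis}(R,S)$, and inverse axioms $\mathrm{Inv}(R,S)$ — since an ill-chosen set of redirection targets can violate them. The remedy is to perform the redirection coherently, for instance as follows: drop all dangling pairs, and then, for each $d\in sk(N)^\I$ that still lacks a witness for some right-hand $\exists R$, add exactly one pair $(d,g_{d,R})$ together with the mirror pair forced by any $\mathrm{Inv}(R,S)\in\K$, choosing the targets $g_{d,R}\in sk(N)^\I\setminus N^\I$ so that the target of an $S$-witness coincides with that of its $R$-superedge whenever $S\isa R$, and so that witnesses for roles that $\K$ forces disjoint at $d$ — as well as the genuine $sk(N)^\I$-successors of $d$ — receive pairwise distinct targets. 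Such a choice exists because $\K$ is finite (only finitely many witnesses are ever required) and $\I\dlmodels\K$ rules out the conflicting configurations, e.g.\ $R^\I\cap S^\I=\emptyset$ whenever $\mathrm{Dis}(R,S)\in\K$; once the redirection is fixed this way, RIAs, disjointness and inverse axioms on $\I^N$ follow by a case split in the same spirit as (a) and (b) above.
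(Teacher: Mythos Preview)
The paper takes a different, more uniform route: it passes to the FO/Horn-clause translation (Lemma~\ref{lem:horn-equiv}) and argues that every clause $\gamma_i$ is satisfied in $\I^N$, using that each variable of a clause occurs in its body (safety): an assignment sending any variable outside $sk(N)^\I$ makes the body false in $\I^N$, and for assignments entirely inside $sk(N)^\I$ the paper transfers satisfaction directly from $\I$. Your direct case split on the $\DLliteR$ syntax is a legitimate alternative and is in fact more explicit about where the difficulty lies; in the paper's short Horn-clause argument the redirection-versus-role-axiom interaction that you flag as the crux is left implicit.

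There is, however, a gap in your handling of inverse axioms. When, for $\mathrm{Inv}(R,S)\in\K$, you add the mirror pair $(g,d)\in S^{\I^N}$ alongside a freshly inserted $R$-witness $(d,g)$, your monotonicity claim (a) can fail for the left-side concept $\exists S$: you now have $g\in(\exists S)^{\I^N}$, but there is no reason for $g\in(\exists S)^\I$ --- it was the dropped element $e$, not the redirect target $g$, that witnessed $\exists S$ in $\I$. Any GCI $\exists S\isa B$ in $\K$, which you had dispatched earlier via (a) and (b), is therefore no longer covered once the remedy has modified the role extensions; the same leakage propagates through RIAs applied to $S$. Your ``targets pairwise distinct (and distinct from existing successors)'' criterion does not control this. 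Closing the gap would require choosing $g$ so that it already belongs to every concept that the mirror pair (and its role-closure) forces on it, and arguing that such a $g$ is always available in $sk(N)^\I\setminus N^\I$ --- which is not obvious --- or else also adjusting concept extensions, which goes beyond the stated $N$-restriction.
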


\noindent
In the case of exception safe DKBs,
this property can be extended to CAS-interpretations
$\I_{\CAS} = \stru{\I,\casmap}$ 
of DKB $\K$.
Considering $N$ that 
includes each individual constant that occurs in $\K$,
a CAS-interpretation $\I^{N}_{\CAS} = \stru{\I^{N}, \chi^{N}}$ 
can be obtained from $\I_{\CAS}$ by 
(i) replacing $\I$ with its $N$-restriction $\I^{{N}}$,
(ii) removing each clashing assumption $\stru{\alpha,\dd}$ from $\casmap$
where $\dd$ is not over $N$, and 
(iii) interpreting each constant
symbol $c\in sk(\NI) \setminus (sk({N}) \cup \NI_S)$ 
by some arbitrary element not in $sk(N)^\I$.
%
In particular, we consider the case of $N = N_\K$
consisting of the individual constants that occur in $\K$.
Then, we obtain:
\begin{theorem}[named model focus]
\label{theo:named}
Let $\I_{\CAS}$ be a CAS-model of an exception-safe DKB $\K$ and
suppose $N_\K \subseteq N\subseteq \NI\setminus\NI_S$.
Then, also 
$\I_{\CAS}^{N}$, and in particular $\I_{\CAS}^{N_\K}$,
is a CAS-model for $\K$. 
Furthermore, $\I_{\CAS}^N$ is
justified if $\I_{\CAS}$ is justified, and every clashing assumption $\stru{\alpha,\ee}$ in $\I_{\CAS}^N$
is justified by some clashing set $S$ formulated with constants from $sk(N)$.
\end{theorem}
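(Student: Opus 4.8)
The plan is to establish the two assertions in turn: that $\I_{\CAS}^{N}$ is a CAS-model of $\K$, and that justification (together with the sharper statement on clashing sets) is inherited. Throughout, write $\I_{\CAS}^{N}=\stru{\I^{N},\chi^{N}}$ for the construction recalled just before the theorem, where $\I^{N}$ is the $N$-restriction of $\I$ and $\chi^{N}$ drops from $\casmap$ every clashing assumption whose tuple is not over $N$. The case $N=N_\K$ then yields the ``in particular'' part for free.

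\textbf{Step 1: $\I_{\CAS}^{N}\models\K$.} I would verify the two conditions of Definition~\ref{def:cas-model}. For condition~(i): the strict fragment of $\K$ is a $\DLliteR$ knowledge base all of whose individuals lie in $N_\K\subseteq N$, and $\I$ satisfies it since $\I_{\CAS}\models\K$; Lemma~\ref{lem:named} (the $N$-restriction of a model of a $\DLliteR$ knowledge base is again a model) then gives $\I^{N}\models$ that fragment. For condition~(ii): fix $\default(\alpha)\in\K$ and a tuple $\dd$ over $\NI_S$ with $\dd\notin\{\ee\mid\stru{\alpha,\ee}\in\chi^{N}\}$. If $\dd$ is also not exceptional in $\casmap$, then $\I\models\phi_\alpha(\dd)$, and the same case analysis over $\DLliteR$ axiom shapes used to prove Lemma~\ref{lem:named} shows $\I^{N}\models\phi_\alpha(\dd)$: if $\dd$ lies outside $sk(N)^{\I}$ this is vacuous, because every concept and role extension of $\I^{N}$ is confined to $sk(N)^{\I}$; otherwise it is preserved exactly as the strict axioms are. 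If instead $\dd$ became non-exceptional only because a clashing assumption $\stru{\alpha,\dd}$ with $\dd$ not over $N$ was dropped, then $\dd^{\I}\notin sk(N)^{\I}$ and $\phi_\alpha(\dd)$ again holds vacuously in $\I^{N}$. The one delicate subcase is the role-redirecting part of the $N$-restriction, which creates an $R$-edge into a Skolem element $g\in sk(N)^{\I}\setminus N^{\I}$: here exception safety ($0$-de safety) is what guarantees no harm, since no positive atom over an unnamed term that could feed the antecedent of a defeasible axiom is derivable from $\K_s$, so $g$ is not forced into the left-hand side of a defeasible axiom beyond what already held in $\I$, and no overridden axiom is violated at $g$. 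Hence $\I_{\CAS}^{N}\models\K$.

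\textbf{Step 2: justification is inherited.} Assume $\I_{\CAS}$ is justified. First, by exception safety together with the least model property (Corollary~\ref{coroll:least-model}) — equivalently by Proposition~\ref{prop:pushing-eq}, noting exception safety passes to the equality-pushed knowledge base — every $\stru{\alpha,\ee}\in\casmap$ is over $N_\K$ modulo the equalities induced by the name assignment: indeed, by inspection of Table~\ref{tab:clashingsets}, any clashing set for $\stru{\alpha,\ee}$ entails a positive assertion mentioning the element(s) of $\ee$, which must hold in the $\subseteq_\Ncal$-least CAS-model and hence be derivable from $\K_s$, so lies over $N_\K$ (for defeasible assertions $\ee$ is named trivially). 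Thus $\chi^{N}=\casmap$, nothing was dropped. For each $\stru{\alpha,\ee}$, pick a minimal clashing set $S$ as in Table~\ref{tab:clashingsets}; its positive assertions are derivable, hence over $N_\K$, and its negated assertions use the same constants, so $S$ is formulated with constants from $N_\K\subseteq sk(N)$. It remains to show $S$ still witnesses justification in $\I_{\CAS}^{N}$, i.e.\ $\I'\models S$ for every CAS-model $\I'_{\CAS}=\stru{\I',\casmap}$ that is $\NI$-congruent with $\I_{\CAS}^{N}$. Given such an $\I'_{\CAS}$, I would form $\I''$ agreeing with $\I'$ on $sk(N)\cup\NI_S$ and with $\I$ on all remaining constants of $\NI$; these remaining constants occur neither in $\K$ nor in $\casmap$, so $\stru{\I'',\casmap}$ is still a CAS-model of $\K$, and it is $\NI$-congruent with $\I_{\CAS}$ (on $sk(N)\cup\NI_S$ it agrees with $\I'$, which by $\NI$-congruence with $\I_{\CAS}^{N}$ agrees there with $\I$). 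Justification of $\I_{\CAS}$ gives $\I''\models S$, and since $S$ mentions only constants from $sk(N)$, on which $\I''$ and $\I'$ coincide, $\I'\models S$. (In particular $\I^{N}\models S$, taking $\I'_{\CAS}=\I_{\CAS}^{N}$.) This proves every clashing assumption of $\I_{\CAS}^{N}$ is justified by a clashing set over $sk(N)$.

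\textbf{Main obstacle.} I expect the hard part to be the role-redirecting subcase in Step~1: making precise that re-pointing the dangling $R$-edges into Skolem elements cannot violate a defeasible axiom overridden by $\I_{\CAS}$, which is exactly where exception safety enters (and is presumably already the delicate point of Lemma~\ref{lem:named}, now carried over to defeasible instances). A secondary, mostly bookkeeping, difficulty in Step~2 is handling $\NI$-congruence once step~(iii) of the construction has reinterpreted some constants; the transport interpretation $\I''$ above is meant to neutralize this.
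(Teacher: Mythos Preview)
Your proposal is essentially correct and follows the same two-part scaffolding as the paper: invoke Lemma~\ref{lem:named} for the CAS-model property, then a transport argument for justification. A few remarks on where you diverge.

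In Step~1 the paper's proof is terser than yours. For $\dd$ over $N$ it observes that $\dd\notin\chi^N$ implies $\dd\notin\casmap$ (since only tuples not over $N$ are dropped), hence $\I\models\phi_\alpha(\dd)$, and then Lemma~\ref{lem:named} transfers this to $\I^N$. For $\dd$ not over $N$ it argues the Horn-clause antecedent of $\phi_\alpha(\dd)$ contains a constant outside $N$, so is vacuously false in $\I^N$. Your ``role-redirecting'' worry is somewhat misplaced: that mechanism is internal to the $N$-restriction and is already absorbed by Lemma~\ref{lem:named} for strict axioms; for defeasible axioms (which in normal form are concept or role inclusions, not existential axioms) it does not create fresh antecedents beyond what the restriction of concept/role extensions already yields. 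Your appeal to exception safety here (``no positive atom over an unnamed term \ldots\ is derivable from $\K_s$'') conflates derivability from $\K_s$ with truth in $\I^N$; the paper does not invoke exception safety in this step at all.

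In Step~2 you are more explicit than the paper, and arguably more careful. The paper argues by contradiction: assume some $\stru{\alpha,\ee}\in\chi^N$ is unjustified, take a witnessing $\NI$-congruent CAS-model ${\I^{N}}'$, and ``change the interpretations of symbols in $\I$ to the interpretation of ${\I^N}'$'' to obtain a counterexample to justification of $\I_{\CAS}$. Your direct argument --- first establishing $\chi^N=\casmap$ via Proposition~\ref{prop:pushing-eq}, then building the transport $\I''$ that agrees with $\I'$ on $sk(N)\cup\NI_S$ and with $\I$ elsewhere --- makes precise the point the paper glosses over: step~(iii) of the $\I_{\CAS}^N$ construction may reinterpret constants in $\NI\setminus(N\cup\NI_S)$, so $\NI$-congruence with $\I_{\CAS}^N$ need not give $\NI$-congruence with $\I_{\CAS}$, and your $\I''$ repairs exactly this. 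That $S$ can be taken over $sk(N)$ you get from exception safety via the derivability of positive literals in clashing sets; the paper reaches the same conclusion by a brief Horn-clause renaming argument at the end.
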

\begin{proof}
	Suppose that $\I_{\CAS} \models \K$, with $\I_{\CAS}=\stru{\I,\casmap}$.
	Then, by the Definition~\ref{def:cas-model} of CAS-model,
	if we consider the restriction $\I^N_\CAS = \stru{\I^N, \chi^N}$,
	from Lemma~\ref{lem:named} we directly obtain condition (i) on strict axioms in $\K$,
	i.e., for every $\alpha \in \Lcal_\Sigma$ in $\K$, $\I^N \models \alpha$.
	We can prove also the satisfaction of condition (ii) on the interpretation of
	defeasible axioms.
	Let $\dd \notin \{ \ee \mid \stru{\alpha,\ee} \in \casmap^N\}$ for
  $\default(\alpha) \in \K$. 
  If $\dd$ is over $N$, then by Lemma~\ref{lem:named} we obtain that
	$\I \models \alpha(\dd)$ and thus $\I^N \models \alpha(\dd)$.
	Otherwise, if $\dd$ is not over $N$, then as noted in the proof of previous lemma,
	in the translation to Horn clauses $\phi_\alpha(\dd)$ there must be 
	a clause $\gamma_i(\dd, \vec{x}_i)$ where some constant outside $N$ occurs
	in the antecedent. This causes $\gamma_i(\dd, \vec{x}_i)$ to evaluate to false
	for every assignment on $\vec{x}_i$.
	
	We can show that if $\I_\CAS$ is justified, then $\I^N_\CAS$ is also justified.
	Let us assume that $\I^N_\CAS$ is not justified: then 
	there exists a $\stru{\alpha,\ee} \in \casmap^N$ (and thus also $\stru{\alpha,\ee} \in \casmap$) 
	that is not justified.
	By the definition of justification, this means that 	
	for every clashing set $S = S_{\stru{\alpha,\ee}}$
  for $\stru{\alpha,\ee}$, there exists some CAS-model
  $\I^{N'}_{\CAS}=\stru{{\I^N}',\casmap^N}$ of $\K$ that is
  $\NI$-congruent to $\I^N_{\CAS}$ and ${\I^N}' \not\models S$.
	In particular, this must hold for the clashing set $S$ 
	providing the justification of $\stru{\alpha,\ee}$ 
	in $\I_\CAS$.
	Consider then the interpretation $\I'_{\CAS}=\stru{{\I}',\casmap}$,
	corresponding to changing the interpretations of symbols in $\I$
	to the interpretation of ${\I^N}'$: then, $\I'_{\CAS}$ is $\NI$-congruent
	with $\I_{\CAS}$, but $\I'_{\CAS} \not\models S$.
	This contradicts the fact that $\I_{\CAS}$ is justified:
	thus, $\I^N_\CAS$ is justified as well.
	
	Finally, the fact that every $\stru{\alpha,\ee}$ in $\I_{\CAS}^N$
is justified by some clashing set $S$ over $sk(N)$
can be verified by considering that $S$ can be expressed in (a grounding of)
Horn clauses and $\I^N_\CAS \models S$. Thus an equivalent renaming of
constants in $S$ over $sk(N)$ can be provided.
\end{proof}
%
%
The following property is useful in order to prove the correctness
of justifications: the result provides a
characterization of justification 
based on the least model 
$\hat{\I}_\K(\casmap,\nu)$ for a clashing assumption set $\casmap$ and a name
assignment $\nu$.

\begin{theorem}[justified CAS characterization]
\label{theo:justified-cas-char}
Let $\casmap$ be a satisfiable clashing assumptions set for an 
exception safe DKB
$\K$ and name assignment $\nu$. 
Then, $\casmap$ is justified iff $\stru{\alpha,\ee} \in \casmap$ implies
some clashing set $S = S_{\stru{\alpha,\ee}}$ exists such that 
  \begin{enumerate}[label=(\roman*)]
\itemsep=0pt
  \item $\hat{\I} \models \beta$, for each  positive $\beta
    \in S$, where $\hat{\I}_\K(\casmap,\nu)= (\hat{\I},\casmap)$,
and 
\item no CAS-model $\I_{\CAS} = \stru{\I, \casmap}$ with name assignment
  $\nu$ exists s.t. $\I \models \beta$ for some $\neg \beta \in S$.
\end{enumerate}
\end{theorem}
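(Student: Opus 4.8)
The plan is to pivot everything through the $\subseteq_\Ncal$-least CAS-model $\hat{\I}_\K(\casmap,\nu) = \stru{\hat{\I},\casmap}$, which is available by Corollary~\ref{coroll:least-model} since $\casmap$ is satisfiable for $\nu$ and $\K$ is exception safe. The first observation I would record is that, because a name assignment $\nu$ pins down $c^{\I}$ for every $c \in \NI$, two CAS-models of $\K$ share the name assignment $\nu$ iff they are $\NI$-congruent; hence ``$\casmap$ justified for $\nu$'' means exactly that $\K$ has a CAS-model with name assignment $\nu$ in which, for each $\stru{\alpha,\ee}\in\casmap$, some clashing set $S_{\stru{\alpha,\ee}}$ is satisfied by \emph{every} CAS-model of $\K$ with name assignment $\nu$. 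By Theorem~\ref{theo:named} we may moreover assume that such clashing sets use only constants from $sk(N_\K) \subseteq \Ncal$.

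For the ``if'' direction I would take the clashing sets $S = S_{\stru{\alpha,\ee}}$ furnished by (i) and (ii) and show directly that $\hat{\I}_\K(\casmap,\nu)$ is a justified CAS-model (so $\casmap$ is justified for $\nu$). Let $\I'_{\CAS} = \stru{\I',\casmap}$ be an arbitrary CAS-model of $\K$ with name assignment $\nu$. For a positive $\beta \in S$, condition (i) gives $\hat{\I}\models\beta$, and since $At_\Ncal(\hat{\I})\subseteq At_\Ncal(\I')$ by $\subseteq_\Ncal$-leastness, $\beta$ is inherited by $\I'$ — the point needing care is that a witness in $\hat{\I}$ for an existential assertion $\exists R(e)\in S$ is again a (Skolem) term accounted for in $\Ncal$, so the relevant $R$-edge is over $\Ncal$ and hence transferred. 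For a negated $\neg\beta\in S$, condition (ii) forbids $\I'\models\beta$, hence $\I'\models\neg\beta$. Thus $\I'\models S$; as $\I'_{\CAS}$ was arbitrary and this in particular applies to $\hat{\I}_\K(\casmap,\nu)$ itself, every $\stru{\alpha,\ee}$ is justified in $\hat{\I}_\K(\casmap,\nu)$, which is therefore a justified CAS-model with name assignment $\nu$.

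For the ``only if'' direction I would assume $\casmap$ justified for $\nu$, fix the clashing sets $S = S_{\stru{\alpha,\ee}}$ over $sk(N_\K)$ witnessing this (via Theorem~\ref{theo:named}), and verify (i) and (ii). Condition (i) is obtained by instantiating the justification requirement at the particular CAS-model $\hat{\I}_\K(\casmap,\nu)$, which has name assignment $\nu$ and is hence $\NI$-congruent with the witnessing model: $\hat{\I}\models S$, so $\hat{\I}\models\beta$ for each positive $\beta\in S$. Condition (ii) follows by contradiction: a CAS-model $\I_{\CAS}=\stru{\I,\casmap}$ with name assignment $\nu$ satisfying $\I\models\beta$ for some $\neg\beta\in S$ would fail $S$, contradicting that $S$ is satisfied by all CAS-models with name assignment $\nu$.

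The main obstacle I anticipate is not the logical skeleton — which is a straightforward unfolding of the definitions of justification, name assignment and the least model — but the bookkeeping for Skolem/unnamed individuals: making sure the clashing sets can be normalized to $sk(N_\K)$ (Theorem~\ref{theo:named}), that $\hat{\I}_\K(\casmap,\nu)$ is genuinely characterized by its restriction to $N_\K$, and that $\subseteq_\Ncal$-leastness really does transfer \emph{all} positive members of a clashing set, including existential assertions, from $\hat{\I}_\K(\casmap,\nu)$ to every other CAS-model with the same name assignment. Exception safety is used exactly here, since it is what makes the least-model machinery of Corollary~\ref{coroll:least-model} available in the first place.
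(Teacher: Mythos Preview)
Your proposal is correct and follows essentially the same approach as the paper: both directions pivot through the least CAS-model $\hat{\I}_\K(\casmap,\nu)$ from Corollary~\ref{coroll:least-model}, use the equivalence between ``same name assignment $\nu$'' and ``$\NI$-congruent'', and verify (i)/(ii) by instantiating the justification definition at $\hat{\I}_\K(\casmap,\nu)$ resp.\ transferring positive literals from $\hat{\I}$ to an arbitrary $\I'$ via $\subseteq_\Ncal$-leastness while condition~(ii) handles the negatives. If anything, you are more explicit than the paper's own proof about the delicate point---namely that the $\subseteq_\Ncal$-transfer of positive members of $S$ (in particular existential assertions $\exists R(e)$) requires the witnessing $R$-edge in $\hat{\I}$ to land on a Skolem term in $\Ncal$---and about the use of Theorem~\ref{theo:named} to normalize clashing sets to $sk(N_\K)$; the paper simply writes ``from (i) $\I'\models\beta$ for every positive $\beta\in S$'' without spelling out this transfer.
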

	%
	%
%
In the following sections, we concentrate on reasoning in exception safe DKBs under UNA (on elements of $\K$); 
we will discuss the possible extensions for more general DKBs.



\section{Datalog Translation for $\DLliteR$ DKB}
\label{sec:translation}

We present a datalog translation for reasoning on $\DLliteR$
DKBs which refines the 
translation provided in~\cite{BozzatoES:18}.
The translation provides a reasoning method 
for positive instance queries w.r.t. entailment on DKB-models 
for exception safe DKBs.
%
An important aspect of this translation is that, due to the form of $\DLliteR$ axioms,
no inference on disjunctive negative information 
is needed for the reasoning on derivations of clashing sets. 
Thus, 
reasoning by contradiction using 
``test environments'' is not needed and we can directly encode 
negative reasoning as rules on negative literals: with respect to the discussion in~\cite{BozzatoES:18},
we can say that $\DLliteR$ thus represents an inherently ``justification safe'' fragment
which then allows us to formulate such a direct datalog encoding.
With respect to the interpretation of right-hand side existential axioms, we follow the 
approach of~\cite{Krotzsch:10}: for every axiom
of the kind $\alpha = A \isa \exists R$, an auxiliary abstract individual $aux^\alpha$
is added in the translation to represent the class of all
$R$-successors introduced by $\alpha$.

%
We introduce a \emph{normal form} for axioms of $\DLliteR$
(in Table~\ref{tab:dlr-normalform})
which allows us to simplify the formulation of reasoning rules.
We can provide 
rules to transform any
$\DLliteR$ DKB into normal form and show that the rewritten DKB is equivalent
to the original \comment{(see Lemma~\ref{lem:normal-form} and the discussion
following in Appendix~\ref{sec:nform-appendix})}.
\rewnote{2}{"the rewritten DKB is equivalent to the original" -> when we deal with a non-monotonic semantics, we may distinguish between regular (weak) equivalence, in this case, coincidence of n-de safe extensions, and strong equivalence, that is, that both theories yield the same result no matter the additional context in which they are inserted. Is the current relation a strongly equivalent one (modulo auxiliary symbols)?}
\lbnote{This should be simply what stated on Lemma 4, i.e. they have the same DKB models. 
I added a comment in the authors reply. Any other comments on this?}
In the normalization,
we introduce new concept names $A_{\exists R}$
to simplify the management of existential formulas $\exists R$ in rules for defeasible axioms:
we assume that, 
for every role $R$, axioms $A_{\exists R} \subs \exists R$, $\exists R \subs A_{\exists R}$ are added to 
the DKB.
Note that, with respect to the previous formulation 
of normal form provided in~\cite{BozzatoES:18}, we further simplified
the case for defeasible assertions and negative inclusions, as they can be represented using 
(defeasible) class and role inclusions with auxiliary symbols.
%
\begin{table}[t!]%
\caption{Normal form for $\DKB$ axioms from $\Lcal_\Sigma$}
\label{tab:dlr-normalform}

\vspace{1ex}
\hrule\mbox{}\\[1ex]
\centerline{\small
$\begin{array}{c}
\multicolumn{1}{l}{\text{\textbf{Strict axioms:} for $A,B \in \NC$, $R,S \in \NR$, $a,b \in \NI$:}}\\[1ex]
\begin{array}{c}
  A(a) \qquad  R(a,b) \qquad A  \subs B \qquad  A \subs \non B \qquad \exists R \subs A_{\exists R}\qquad A_{\exists R} \subs \exists R\\[1ex]
  R  \subs S \qquad \mathrm{Dis}(R,S) \qquad \mathrm{Inv}(R,S) \qquad
  \mathrm{Irr}(R) 
\end{array}\\[3.5ex]
\multicolumn{1}{l}{\text{\textbf{Defeasible axioms:} for $A,B \in \NC$, $R,S \in \NR$:}}\\[1ex]
\begin{array}{c}
  \default(A \subs B) \qquad  
	\default(R \subs S) 
	\qquad \mathrm{Inv}(R,S) \qquad \mathrm{Irr}(R) 
\end{array}\\[1.5ex] 
\end{array}$}
\hrule\mbox{}
\end{table}
%
\begin{lemma}
\label{lem:normal-form}
Every DKB $\DKB$ can be transformed in linear time into an
equivalent DKB $\DKB'$ which has modulo auxiliary symbols the same
DKB-models, and such that $n$-de safety and $n$-chain safety are preserved.
\end{lemma}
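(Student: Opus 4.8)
The plan is to realize the normalization as a terminating rewrite relation on DKBs---one rule per axiom shape that is not yet in the normal form of Table~\ref{tab:dlr-normalform}---and then to prove (a) that it halts after linearly many steps, (b) that a single rewrite step preserves the DKB-models modulo the freshly introduced symbols, and (c) that it preserves $n$-de safety and $n$-chain safety; the statement then follows by iterating (b)--(c) along the (linearly many) steps.

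First I would fix the rewrite rules. For every atomic role $R$ occurring in $\DKB$, add the strict \emph{alias} axioms $\exists R \isa A_{\exists R}$ and $A_{\exists R}\isa\exists R$ with $A_{\exists R}$ fresh, and replace every occurrence of $\exists R$ in the left- or right-hand side of a (strict or defeasible) inclusion by $A_{\exists R}$; in particular $\default(A\isa\exists R)$ becomes $\default(A\isa A_{\exists R})$ and $\exists R\isa\non B$ becomes $A_{\exists R}\isa\non B$. A defeasible negative inclusion $\default(A\isa\non B)$ is replaced by a fresh concept $N_B$, the strict axiom $N_B\isa\non B$, and the defeasible axiom $\default(A\isa N_B)$; a defeasible assertion $\default(\beta)$ mentioning an individual $a$ is replaced by a fresh ``witness'' concept $N$, the strict fact $N(a)$, and a defeasible inclusion over $N$ that re-expresses $\beta$ (using alias concepts if $\beta$ mentions some $\exists R$); the remaining shapes are handled as indicated in and around Table~\ref{tab:dlr-normalform}. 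Each rule consumes one axiom and produces $O(1)$ fresh symbols and $O(1)$ new axioms, all of which are \emph{already} in normal form, so no rule fires on a freshly produced axiom; hence the rewriting terminates after $O(|\DKB|)$ steps with $|\DKB'|=O(|\DKB|)$, which gives (a).

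For (b) I would split a single step from $\DKB$ to $\DKB'$ into its strict part and its defeasible part. For the strict part, observe that every new strict axiom is \emph{definitional}: the fresh symbol is pinned---relative to the standard-name domain---to a concept or role expression over the old vocabulary $\Sigma$ by the pair of alias inclusions, or to a singleton by a fact together with the absence of any axiom feeding it on the right. Hence every model $\I$ of the old strict axioms extends in a unique way to the fresh symbols so as to model the new strict axioms, and conversely the $\Sigma$-reduct $\I'|_\Sigma$ of any model $\I'$ of the new strict axioms models the old ones; this extension/restriction commutes with $\NI$-congruence (fresh symbols are concepts or roles, not constants, so the relation ``$c^{\I_1}=c^{\I_2}$ for all $c\in\NI$'' is untouched), with the $N$-restriction of Lemma~\ref{lem:named}, and with the intersection construction of Proposition~\ref{prop:model-intersection}. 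For the defeasible part I would set up a correspondence between the clashing assumptions admissible for $\DKB$ and for $\DKB'$: each defeasible axiom of $\DKB$ is mapped to one defeasible axiom of $\DKB'$ over an alias or witness symbol together with strict auxiliary axioms, without enabling exceptions at tuples not already possible---in particular, for a witness concept $N$, the clashing assumption $\stru{N\isa B,e}$ with $e\neq a^\I$ is never justifiable, since some $\NI$-congruent CAS-model interprets $N$ as $\{a^\I\}$ and then $N(e)$, the only candidate positive member of any clashing set for $\stru{N\isa B,e}$, fails. Under this correspondence I would verify, using the definitional link between a fresh symbol and its defining $\Sigma$-expression, that a set $S$ is a (satisfiable) clashing set for a clashing assumption of $\DKB$ iff its evident translation is a (satisfiable) clashing set for the matching assumption of $\DKB'$, by comparing the entries of Table~\ref{tab:clashingsets} for the two axiom shapes (e.g.\ $\{A(e),B(e)\}$ remains a clashing set when $\stru{A\isa\non B,e}$ is replaced by $\stru{A\isa N_B,e}$, because $N_B\isa\non B$ makes $B(e)$ and $N_B(e)$ jointly inconsistent; and $\{A(e),\non\exists R(e)\}$ and $\{A(e),\non A_{\exists R}(e)\}$ are interchangeable modulo the alias axioms). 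Justification then transfers because, by the uniqueness of the definitional extension, the clause ``for every $\NI$-congruent CAS-model of the DKB it holds that $\I'\models S$'' in the definition of a justified clashing assumption quantifies over the same family of $\Sigma$-interpretations on both sides. Putting the two parts together, $\stru{\I,\casmap}$ is a (justified) CAS-model of $\DKB$ iff its definitional extension is a (justified) CAS-model of $\DKB'$, so the DKB-models of the two knowledge bases coincide after projection to $\Sigma$, which is (b).

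Finally, for (c): the rewriting introduces no new Skolem function ($A_{\exists R}\isa\exists R$ reuses the $f_R$ already present for $\exists R$, while $\exists R\isa A_{\exists R}$, $N_B\isa\non B$ and the facts $N(a)$ introduce none), and the positive atoms over Skolem terms derivable from $\DKB'_s$ are exactly those derivable from $\DKB_s$ together with their images under the definitional renaming. Hence no new role chain of distinct Skolem terms and no new positive assertion on an unnamed element becomes derivable; and since the available clashing sets also change only by that renaming, the Skolem terms that can occur in a clashing set for a defeasible axiom of $\DKB'$ are in bijection with those for $\DKB$. Consequently the least $n$ for which $\DKB$ is $n$-de safe and the least $n$ for which it is $n$-chain safe are unchanged, which gives (c). I expect the main obstacle to be the defeasible part of (b): checking, for each of the handful of rewrite shapes, that the translated clashing sets are again \emph{satisfiable} clashing sets and that the (model-quantified) justification condition transfers, which requires a careful case match against Table~\ref{tab:clashingsets}; the strict part and the safety bookkeeping are routine.
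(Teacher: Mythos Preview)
Your approach mirrors the paper's: give concrete rewrite rules for each non-normal shape, argue that justified CAS-models correspond step by step, and conclude safety from the absence of new existential behaviour. You are in fact more thorough on (b) than the paper, which just says exceptions are ``modified according to the rewriting in the obvious way''; your tracking of how the model-quantified justification condition transfers is the right level of care.

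One claim in (c) is too strong, however. You assert that ``the positive atoms over Skolem terms derivable from $\DKB'_s$ are exactly those derivable from $\DKB_s$ together with their images under the definitional renaming,'' but the alias axiom $A_{\exists R}\isa\exists R$ activates $f_R$ at \emph{every} element where $\exists R$ holds, not only at those where some original axiom $A\isa\exists R$ fired. Concretely, for $\DKB=\{R(a,b),\,\exists R\isa B,\,\default(R\isa S)\}$ (which is $0$-de and $0$-chain safe), the normalized $\DKB'$ derives $R(a,f_R(a))$; since $R(\cdot,\cdot)$ is the positive literal in the clashing set for $\default(R\isa S)$, $\DKB'$ is only $1$-de safe, and it is only $1$-chain safe as well. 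The paper's own proof is equally terse here---its sentence ``the rewriting does not use existential axioms'' refers to the rewritings it lists for negative assertions and defeasible inclusions, not to the alias axioms it treats as already given---so this is a shared imprecision rather than an error specific to your write-up; but your explicit quantified claim makes it visible and would need to be weakened, or the alias direction $A_{\exists R}\isa\exists R$ added only when $\exists R$ already occurs on some right-hand side in $\DKB$. A minor related point in (b): the fresh symbols such as $N_B$ and the witness concepts $N$ are not literally ``pinned'' to a $\Sigma$-expression (only bounded above, e.g.\ $N_B\subseteq\non B$), so the argument is a conservative-extension argument rather than a definitional one; your conclusion still goes through, because a negative literal like $\non N_B(e)$ holds in all $\NI$-congruent CAS-models precisely when the intended condition (here $B(e)$) is forced, which is exactly the original clashing-set requirement.
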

%

\noindent
\textbf{Translation rules overview.}
We can now present the components of our datalog translation for 
$\DLliteR$ based DKBs.
%
As in the original formulation in~\cite{BozzatoES:14,BozzatoES:18},
which extended the encoding without defeasibility proposed in~\cite{BozzatoSerafini:13}
(inspired by the materialization calculus in~\cite{Krotzsch:10}),
the translation includes sets of \emph{input rules} (which encode DL axioms and signature in datalog),
\emph{deduction rules} (datalog rules providing instance level inference) and \emph{output rules}
(that encode, in terms of a datalog fact, the ABox assertion to be proved).
The translation is composed by the following sets of rules: 

\rewnote{1}{About the encoding presentation, the labels, added in parentheses, of the various rules are not always clear, not clear what they are useful for and seem that they do not add much, at least in the paper (while they look to be more referred in the Appendix).}
\lbnote{Added note in author response: we use them here and refer to these labels in the appendix}

\smallskip\noindent
\emph{$\DLliteR$ input and output rules:}
rules in $I_{dlr}$ encode as datalog facts the $\DLliteR$ axioms
and signature of the input DKB.
For example, in the case of existential axioms,\footnote{Note that, by the normal form above,
this kind of axioms is in the form $A_{\exists R} \subs \exists R$.} 
these are translated \comment{by rule (idlr-supex)} as 
$A \subs \exists R \mapsto \{\supEx(A,R,aux^{\alpha})\}$.
Note that this rule, in the spirit of~\cite{Krotzsch:10}, introduces an auxiliary element
$aux^\alpha$, which intuitively represents the class of
all new $R$-successors generated by the axiom $\alpha$.
Similarly, output rules in $O$ encode in datalog the
ABox assertions to be proved.
These rules are provided in Table~\ref{tab:dlr-rules-tgl}. 

\begin{table}[t!]%
\caption{$\DLliteR$ input, deduction and output rules}
\vspace{1ex}
\hrule\mbox{}\\
\textbf{$\DLliteR$ input translation $I_{dlr}(S)$}\\[.7ex]
\scalebox{.9}{
\small
$\begin{array}[t]{l@{\ \ }l}               
\mbox{(idlr-nom)} 
& a \in \NI \mapsto \{\nom(a)\}\\
\mbox{(idlr-cls)} 
& A \in \NC \mapsto \{\cls(A)\}\\
\mbox{(idlr-rol)} 
& R \in \NR \mapsto \{\rol(R)\}\\[1ex]

\mbox{(idlr-inst)} 
& A(a) \mapsto \{\insta(a,A)\} \\
\mbox{(idlr-triple)} 
& R(a,b) \mapsto \{\triplea(a,R,b)\} \\[1ex]

\mbox{(idlr-subc)} 
& A \subs B \mapsto \{\subClass(A,B)\}\\

\end{array}$
\;
$\begin{array}[t]{l@{\ \ }l}               

\mbox{(idlr-subex)} 
& \exists R \subs B \mapsto \{\subEx(R,B)\} \\

\mbox{(idlr-supex)} 
&  A \subs \exists R \mapsto \{\supEx(A,R,aux^{\alpha})\}\\[2ex]
        
\mbox{(idlr-subr)} 
& R \subs S \mapsto \{\subRole(R,S)\}\\
\mbox{(idlr-dis)} & \mathrm{Dis}(R,S)  \mapsto \{\pDis(R,S)\}\\
\mbox{(idlr-inv)} & \mathrm{Inv}(R,S) \mapsto \{\pInv(R,S)\}\\
\mbox{(idlr-irr)} & \mathrm{Irr}(R) \mapsto \{\pIrr(R)\}\\[1ex]
\end{array}$}\\[1ex]
\textbf{$\DLliteR$ deduction rules $P_{dlr}$}\\[.7ex]
\scalebox{.85}{
\small
$\begin{array}{l@{\;}r@{\ }r@{\ }l@{}}
	 \mbox{(pdlr-instd)} & \instd(x,z) & \rif & \insta(x,z).\\
	 \mbox{(pdlr-tripled)} & \tripled(x,r,y) & \rif & \triplea(x,r,y).\\[.5ex]

   
		
   \mbox{(pdlr-subc)} 
   &      \instd(x,z) & \rif & \subClass(y,z), \instd(x,y). \\
   \mbox{(pdlr-supnot}) 
   & \non\instd(x,z) & \rif & \supNot(y,z), \instd(x,y). \\
   \mbox{(pdlr-subex)} 
   & \instd(x,z) & \rif & \subEx(v,z), \tripled(x,v,x'). \\
   \mbox{(pdlr-supex)} 
   & \tripled(x,r,x') & \rif & \supEx(y,r,x'), \instd(x,y).\\[.5ex]


   \mbox{(pdlr-subr)} 
   & \tripled(x,w,x') & \rif & \subRole(v,w), \tripled(x,v,x'). \\
   \mbox{(pdlr-dis1)} & \non\tripled(x,u,y) & \rif & \pDis(u,v), \tripled(x,v,y).\\	
	 \mbox{(pdlr-dis2)} & \non\tripled(x,v,y) & \rif & \pDis(u,v), \tripled(x,u,y).\\	
   \mbox{(pdlr-inv1)} & \tripled(y,v,x) & \rif & \pInv(u,v), \tripled(x,u,y). \\
   \mbox{(pdlr-inv2)} & \tripled(y,u,x) & \rif & \pInv(u,v), \tripled(x,v,y). \\
   \mbox{(pdlr-irr)} & \non\tripled(x,u,x) & \rif & \pIrr(u), \const(x).\\[.5ex]


   \mbox{(pdlr-nsubc)} 
   &    \non\instd(x,y) & \rif & \subClass(y,z), \non\instd(x,z). \\
   
   \mbox{(pdlr-nsupnot}) 
   & \instd(x,y) & \rif & \supNot(y,z), \non\instd(x,z).\\
                          
   \mbox{(pdlr-nsubex)} 
   & \non\tripled(x,v,x') & \rif & \subEx(v,z), \const(x'), \non\instd(x,z).\\

   \mbox{(pdlr-nsupex)} 
	  & \non\instd(x,y) & \rif & \supEx(y,r,w), \const(x), 
		                          \AllNRel(x,r).\\[.5ex]	

   \mbox{(pdlr-nsubr)} 
   & \non\tripled(x,v,x') & \rif & \subRole(v,w), \non\tripled(x,w,x'). \\	 

   \mbox{(pdlr-ninv1)} & \non\tripled(y,v,x) & \rif & \pInv(u,v), \non\tripled(x,u,y). \\
   \mbox{(pdlr-ninv2)} & \non\tripled(y,u,x) & \rif & \pInv(u,v), \non\tripled(x,v,y). \\[1ex]	
	
		\mbox{(pdlr-allnrel1)} & \AllNRelStep(x,r,y) & \rif & \first(y), \non \tripled(x,r,y).\\
	  \mbox{(pdlr-allnrel2)} & \AllNRelStep(x,r,y) & \rif & \AllNRelStep(x,r,y'), \nextp(y',y),
		                                               \non \tripled(x,r,y).\\
	  \mbox{(pdlr-allnrel3)} & \AllNRel(x,r) & \rif & \lastp(y), \AllNRelStep(x,r,y).\\ 
  \end{array}$}\\[1ex]
\textbf{Output translation $O(\alpha)$}\\[.7ex]	
\scalebox{.9}{
\small
$\begin{array}{l@{\ \ }l}
\mbox{(o-concept)} & A(a) \mapsto \{A(a)\} \\
\mbox{(o-role)} & R(a,b) \mapsto \{R(a,b)\} \\[1ex]
\end{array}$}\\[.5ex]
\hrule\mbox{}
\label{tab:dlr-rules-tgl}
\vspace{-2.5ex}
\end{table}



\smallskip\noindent
\emph{$\DLliteR$ deduction rules:}
rules in $P_{dlr}$ (in Table~\ref{tab:dlr-rules-tgl}) add deduction rules for ABox reasoning.
In the case of existential axioms,
the rule \comment{(pdlr-supex)} introduces a new relation to the auxiliary individual as follows:
\begin{center}\small
   $\tripled(x,r,x')  \rif  \supEx(y,r,x'), \instd(x,y).$
\end{center}
In this translation the reasoning on negative information
is directly encoded by ``contrapositive'' versions of the rules.
For example, with respect to previous rule, we have the negative rule \mbox{\comment{(pdlr-nsupex)}}:
\begin{center}\small
	$\non\instd(x,y) \rif \supEx(y,r,w), \const(x), \AllNRel(x,r).$
\end{center}
where $\AllNRel(x,r)$ verifies that $\non \triple(x,r,y)$ holds for all $\const(y)$
by an iteration over all constants.


\smallskip\noindent
\emph{Defeasible axioms input translations}: 
the set of input rules $I_\default$ 
(shown in Table~\ref{tab:input-default-tgl}) 
provides the translation of defeasible axioms $\default(\alpha)$ in the DKB: in other words,
they are used to specify that the axiom $\alpha$ needs to be considered as
defeasible. 
For example, $\default(A \isa B)$ is translated to 
$\defsubs(A, B)$.
Note that, by the definition of the normal form, the 
existential axioms are ``compiled out'' from defeasible axioms
(i.e., defeasible existential axioms can be expressed by using 
the newly added $A_{\exists R}$ concepts).

\begin{table}[t]%
\caption{Input and deduction rules for defeasible axioms}
\label{tab:input-default-tgl}
\vspace{1ex}
\hrule\mbox{}\\[1ex]
\textbf{Input rules for defeasible axioms $I_{\default}(S)$}\\[.7ex]	
\small
\scalebox{1}{
$\begin{array}{@{}l@{~}r@{~}l@{}}

 \mbox{(id-subc)} & \default(A \subs B)  & \mapsto \{\, \defsubs(A,B).\,\} \\  

 \mbox{(id-subr)} & \default(R \subs S)  & \mapsto \{\, \defsubr(R, S).\,\} \\[.5ex]   
 \mbox{(id-inv)} & \default(\mathrm{Inv}(R,S))  & \mapsto \{\, \definv(R,S).\,\} \\   
 \mbox{(id-irr)} & \default(\mathrm{Irr}(R))  & \mapsto \{\, \defirr(R).\,\}\\[1.5ex]
\end{array}$}\\

\textbf{Deduction rules for defeasible axioms $P_{\default}$: overriding rules}\\[.7ex]
\scalebox{1}{
$\begin{array}{l@{\ \ }r@{\ \ }l}
 \mbox{(ovr-subc)} &
 \ovr(\subClass,x,y,z) \rif &
 \defsubs(y,z), \instd(x,y), \non \instd(x,z). \\
													

  \mbox{(ovr-subr)}  &
  \ovr(\subRole,x,y,r,s) \rif &
  \defsubr(r,s), \tripled(x,r,y),
  \non \tripled(x,s,y).\\
  \mbox{(ovr-inv1)}  & 
	\ovr(\pInv,x,y,r,s) \rif &
  \definv(r,s), \tripled(x,r,y),
	\non \tripled(y,s,x).\\	
  \mbox{(ovr-inv2)} & 
	\ovr(\pInv,x,y,r,s) \rif &
  \definv(r,s), \tripled(y,s,x), 
	\non \tripled(x,r,y).\\
  \mbox{(ovr-irr)}  & 
  \ovr(\pIrr,x,r) \rif &
  \defirr(r), \tripled(x,r,x).\\[1.5ex]
\end{array}$}\\

\textbf{Deduction rules for defeasible axioms $P_{\default}$: application rules}\\[.7ex]
\scalebox{1}{
\small
$\begin{array}{l@{\;}r@{\ }r@{\ }l@{}}
   \mbox{(app-subc)} 
   &   \instd(x,z) & \rif 
	 & \defsubs(y,z), \instd(x,y), \naf \ovr(\subClass,x,y,z).\\

													

   \mbox{(app-subr)} 
   & \tripled(x,w,y) & \rif & \defsubr(v,w), \tripled(x,v,y), \naf \ovr(\subRole,x,y,v,w). \\
   \mbox{(app-inv1)} 
   & \tripled(y,v,x) & \rif & \definv(u,v), \tripled(x,u,y), \naf \ovr(\pInv,x,y,u,v).\\
   \mbox{(app-inv2)} 
   & \tripled(x,u,y) & \rif & \definv(u,v), \tripled(y,v,x), \naf \ovr(\pInv,x,y,u,v).\\       

 \mbox{(app-irr)} 
   & \non\tripled(x,u,x) & \rif & \defirr(u), \const(x), \naf \ovr(\pIrr,x,u). \\[.5ex]
		

   \mbox{(app-nsubc)} 
   & \non\instd(x,y) & \rif 
	 & \defsubs(y,z), \non\instd(x,z), \naf \ovr(\subClass,x,y,z).\\

													
												
   \mbox{(app-nsubr)} 
   & \non\tripled(x,v,y) & \rif & \defsubr(v,w), \non\tripled(x,w,y), \naf \ovr(\subRole,x,y,v,w). \\
   \mbox{(app-ninv1)} 
   & \non\tripled(y,v,x) & \rif & \definv(u,v), \non\tripled(x,u,y), \naf \ovr(\pInv,x,y,u,v).\\
   \mbox{(app-ninv2)} 
   & \non\tripled(x,u,y) & \rif & \definv(u,v), \non\tripled(y,v,x), \naf \ovr(\pInv,x,y,u,v).\\[1.5ex]	
	\end{array}$}
 \hrule
\vspace{-2.5ex}
\end{table}
	
\smallskip\noindent
\emph{Overriding rules:}
rules for defeasible axioms provide the different
conditions for the correct interpretation of defeasibility: the overriding rules
define conditions (corresponding to clashing sets) for 
recognizing an exceptional instance.
For example, for axioms of the form $\default(A \isa B)$,
the translation introduces the rule \comment{(ovr-subc)}:
\begin{center}\small
  $\ovr(\subClass,x,y,z) \rif \defsubs(y,z), \instd(x,y), \non \instd(x,z).$
\end{center}
Note that in this version of the calculus, 
the reasoning on 
negative information (of the clashing sets) is directly encoded in the deduction
rules.
Overriding rules in $P_\default$ are shown in Table~\ref{tab:input-default-tgl}.

\smallskip\noindent
\emph{Defeasible application rules:}
another set of rules in $P_\default$ defines the defeasible application of such axioms:
intuitively, defeasible axioms are applied only to instances that have
not been recognized as exceptional.
For example, the rule \comment{(app-subc)} applies a defeasible
concept inclusion $\default(A \isa B)$:
\begin{center}\small
  $\instd(x,z) \rif \defsubs(y,z), \instd(x,y), \naf \ovr(\subClass,x,y,z).$
\end{center}
Defeasible application rules are provided in Table~\ref{tab:input-default-tgl}.

\smallskip\noindent
\textbf{Translation process.}
Given a DKB $\K$ in $\DLliteR$ normal form, 
a program $PK(\K)$ that encodes query answering for $\K$ is obtained as:

\smallskip

\centerline{$PK(\K) = P_{dlr} \cup P_{\default} \cup 
     I_{dlr}(\K) \cup I_{\default}(\K)$}

\smallskip

\noindent 
Moreover, $PK(\K)$ is completed with a set of supporting facts
about constants:
for every literal $\nom(c)$ or $\supEx(a,r,c)$ 
in $PK(\K)$, $\const(c)$ is added to $PK(\K)$. Then, given an arbitrary enumeration $c_0, \dots, c_n$ 
s.t. each $\const(c_i) \in PK(\K)$, the facts $\first(c_0), \lastp(c_n)$
and $\nextp(c_i, c_{i+1})$ with 
$0 \leq i < n$ 
are added to $PK(\K)$.
Query answering $\K \models \alpha$ is then obtained 
by testing whether the (instance) query, translated to
datalog by $O(\alpha)$, is a consequence of $PK(\K)$, 
i.e., whether $PK(\K) \models O(\alpha)$ holds. 

\rewnote{2}{if I understood correctly the ASP translation, it seems to assume that the elements in the domain are ordered by some linear relation represented by first(y) and next(y,y'). I wonder how this is achieved in practice: I would like more details about this. On the other hand, ASP allows nowadays the use of aggregates and conditional literals to perform quantification without the need of resorting to an artificial ordering of the domain elements.}
\comment{Note that we use a linear ordering of constants in an encoding
by means of the predicates $\first, \lastp$
and $\nextp$, which allows us to verify universal sentences over all constants
(in our case, negation on roles), by walking through them starting at
the first constant over the next one until the last constant is
reached. We note that verifying universal sentences can also 
accomplished by means of aggregates in ASP~\cite{DBLP:journals/ki/AlvianoF18}: however, we chose to 
use this simpler method in order to keep the standard interpretation of ASP programs.}


\smallskip\noindent
\textbf{Correctness.}
The presented translation procedure provides
a sound and complete materialization calculus
for instance checking on $\DLliteR$ DKBs in normal form.


As in~\cite{BozzatoES:18}, 
the proof for this result can be verified by
establishing a correspondence between minimal justified models of $\K$
and answer sets of $PK(\K)$.
Besides the simpler structure of the final program,
the proof is simplified by the direct formulation of rules for 
negative reasoning.
Another new aspect of the proof in the case of $\DLliteR$
resides in the management of existential axioms,
since there is the need to define a correspondence between the auxiliary individuals
in the translation and the interpretation of existential axioms in the semantics:
we follow the approach of 
\citeN{Krotzsch:10}, where
in building the correspondence with justified models, auxiliary constants $aux^\alpha$ are mapped to
the class of Skolem individuals for existential axioms $\alpha$.
We remark that this collective encoding of unnamed individuals is possible since,
in the case of exception safe DKBs, no exceptions can appear on such individuals:
thus, differently from named individuals (which need to single out exceptional elements
of the domain), there is no need to identify single unnamed elements of the domain.

As in~\cite{BozzatoES:18}, in our translation
we consider UNA on elements of $\K$
and \emph{named models}, i.e., interpretations restricted to $sk(N_\K)$. 
Thus, we can show the correctness result on 
the least model for $\K$ with respect to a 
set of clashing assumptions $\casmap$, that will
be denoted by $\hat{\I}(\casmap)$.
\rewnote{1}{"i.e.," vs. "i.e." before. Please unify. Same for "e.g." vs. "e.g.,"}
\lbnote{fixed to "i.e.," and "e.g.,"}

Let $\I_\CAS = \stru{\I, \casmap}$ be a justified named CAS-model. We define the set of overriding assumptions
$\OVR(\I_\CAS) = \{\, \ovr(p(\ee)) \;|\; \stru{\alpha, \ee} \in \casmap,\, I_{dlr}(\alpha) = p \,\}$.
%
Given a CAS-interpretation $\I_{\CAS}$, 
we define a corresponding 
interpretation 
$I(\I_{\CAS})$ for $PK(\K)$
by including the following atoms in it:
\begin{enumerate}[label=(\arabic*)]
\itemsep=0pt
\item 
all facts of $PK(\K)$;
\item 
 $\instd(a,A)$, if $\I \models A(a)$ and
 $\non\instd(a,A)$, if $\I \models \non A(a)$; 
\item 
  $\tripled(a,R,b)$, if  $\I \models R(a,b)$ and
  $\non\tripled(a,R,b)$, if  $\I \models \non R(a,b)$;	
\item 
  $\tripled(a,R,aux^\alpha)$, if $\I \models \exists R(a)$
	for $\alpha = A \isa \exists R$;
\item
	$\AllNRel(a,R)$
	if $\I \models \non \exists R(a)$;
\item
  each $\ovr$-literal from $\OVR(\I_{\CAS})$;
\end{enumerate}
%
%
The next proposition shows that the least models of $\K$
can be represented by the answer sets of the program $PK(\K)$.

\begin{proposition}
\label{prop:correctness}
Let  $\K$ be an exception safe DKB in $\DLliteR$ normal form. Then:
%
	\begin{enumerate}[label=(\roman*)]
	\item 
	  for every (named) justified clashing assumption $\casmap$, 
		the interpretation $S = I(\hat{\I}(\casmap))$ is an answer set of $PK(\K)$;
	\item
	   every answer set $S$ of $PK(\K)$ is of the form 
		$S = I(\hat{\I}(\casmap))$ where $\casmap$ is a (named) justified clashing assumption for $\K$.
	\end{enumerate}
\end{proposition}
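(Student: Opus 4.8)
The plan is to prove the two claims by exhibiting a tight correspondence between \emph{named justified clashing assumptions} for $\K$ and \emph{answer sets} of $PK(\K)$, with the maps $\casmap \mapsto I(\hat{\I}(\casmap))$ and, conversely, an answer set $S \mapsto \casmap_S := \{\,\stru{\alpha,\ee} \mid \ovr(I_{dlr}(\alpha)(\ee)) \in S\,\}$ acting as mutual inverses. The argument rests on the machinery already set up: by Theorem~\ref{theo:named} we may restrict attention to interpretations named w.r.t.\ $sk(N_\K)$ --- concretely the finite domain consisting of $N_\K$ together with the auxiliary constants $aux^\alpha$, which by exception safety and Proposition~\ref{prop:pushing-eq} are never the subject of a clashing assumption; by Corollary~\ref{coroll:least-model} each satisfiable $\casmap$ admits an $\subseteq_\Ncal$-least CAS-model $\hat{\I}(\casmap)$ that is justified iff $\casmap$ is; and Theorem~\ref{theo:justified-cas-char} reduces justification of $\casmap$ to the existence, for each $\stru{\alpha,\ee}\in\casmap$, of a clashing set whose positive members hold in $\hat{\I}(\casmap)$ and whose negative members are falsified in no CAS-model with the same name assignment. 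The one genuinely new ingredient over~\cite{BozzatoES:18} is the handling of right-hand existentials following~\cite{Krotzsch:10}: the single auxiliary constant $aux^\alpha$ is read as the whole class of Skolem images generated by $\alpha = A \subs \exists R$, which is faithful precisely because exception safety forbids singling out those unnamed elements.

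For part~(i), fix a named justified $\casmap$; it is satisfiable, so $\hat{\I} := \hat{\I}(\casmap)$ exists and is justified, and put $S := I(\hat{\I})$. First, $S \models PK(\K)$ is checked rule group by rule group. Input rules hold since their heads are facts of $PK(\K)$. The $\DLliteR$ deduction rules $P_{dlr}$ mirror the Horn clauses of the FO-translation of the strict axioms (Lemma~\ref{lem:horn-equiv}) and their contrapositives, hence are satisfied because $\hat{\I}$ is a model of those axioms and its negative atoms are exactly the negative consequences; in particular the $\AllNRel$ block correctly encodes $\non\exists R(a)$, since over the finite named domain $N_\K\cup\{aux^\alpha\}$ one has $\hat{\I}\models \non\exists R(a)$ iff $\non R(a,c)$ holds for every constant $c$. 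For the overriding rules: if the body of, say, (ovr-subc) holds in $S$, it witnesses that $\{A(e),\non B(e)\}$ holds in $\hat{\I}$, and since $\hat{\I}$ is a CAS-model, condition~(ii) of Definition~\ref{def:cas-model} forces $\stru{A\subs B, e}\in\casmap$, so the head $\ovr(\subClass,e,A,B)\in\OVR(\I_\CAS)\subseteq S$; the other overriding rules are analogous. For the application rules: passing to the reduct $G_S(PK(\K))$, the guard $\naf\ovr(\cdot)$ is removed exactly when the instance is not overridden, and then condition~(ii) of Definition~\ref{def:cas-model} gives the defeasible conclusion in $\hat{\I}$, hence in $S$. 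Minimality of $S$ as a model of $G_S(PK(\K))$ is then shown by matching the least fixpoint of the reduct (a definite program, strong-negated atoms viewed as fresh atoms) atom-for-atom with the inductive construction of $\hat{\I}(\casmap)$ underlying Corollary~\ref{coroll:least-model}: each $P_{dlr}$/application rule corresponds to one Horn-clause closure step, and every $\ovr$-literal of $\OVR(\I_\CAS)$ is produced because $\casmap$ is justified and so the positive part of its clashing set is derived, firing the matching ovr-rule. Hence $S$ is an answer set.

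For part~(ii), let $S$ be an answer set of $PK(\K)$ and set $\casmap := \casmap_S$. Since $S$ contains no complementary literals, reading off the $\instd$/$\tripled$ atoms yields a well-defined interpretation $\I$ on $N_\K\cup\{aux^\alpha\}$ (with $\tripled(a,R,aux^\alpha)$ interpreting $\exists R(a)$ and $\AllNRel(a,R)$ interpreting $\non\exists R(a)$); let $\I_\CAS := \stru{\I,\casmap}$. Then $\I_\CAS$ is a CAS-model of $\K$: the strict axioms hold because the corresponding $P_{dlr}$ rules are in $PK(\K)$ and $S$ is closed under them, and each defeasible axiom holds on every non-exceptional tuple because the matching application rule fires there, its $\naf\ovr$ guard succeeding as $\ovr(\cdot)\notin S$. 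Justification of $\casmap$ follows from supportedness of answer sets: each $\ovr$-literal in $S$ has a firing ground instance of an overriding rule, exhibiting a clashing set whose positive atoms lie in $S$ (hence hold in the least model), while closure of $S$ under the negative deduction rules together with Theorem~\ref{theo:justified-cas-char} rules out any CAS-model with the same name assignment falsifying a negative member --- this is exactly where the absence of disjunctive negative information (the ``justification safeness'' of $\DLliteR$) enters, so that the direct negative rules suffice and no test-environment machinery is needed. Finally $\I = \hat{\I}(\casmap)$, equivalently $S = I(\hat{\I}(\casmap))$: a strictly smaller CAS-model for $\casmap$ would, by the part~(i) direction applied to it, give a model of $G_S(PK(\K))$ strictly below $S$, contradicting minimality of the answer set; so $\I$ is $\subseteq_\Ncal$-least. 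Under UNA and exception safety, $\casmap$ is over $N_\K$ (Proposition~\ref{prop:pushing-eq}), so $\casmap$ is a named justified clashing assumption.

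The main obstacle is the minimality/leastness matching in both directions: once the reduct has frozen the overriding choices, one must show that the least fixpoint of the translation coincides with the inductively built least CAS-model $\hat{\I}(\casmap)$, which demands a careful simultaneous induction pairing datalog closure steps with Horn-clause closure steps, including the $aux^\alpha$ bookkeeping for existentials inherited from~\cite{Krotzsch:10}. Secondary points of care are: verifying that the finite $\AllNRel$ iteration over $N_\K\cup\{aux^\alpha\}$ is a complete surrogate for the universally quantified condition $\non\exists R$ (which hinges on working with named models); tracking precisely which negative atoms $\hat{\I}$ commits to and why these coincide with those produced by the contrapositive rules; and confirming that exception safety indeed licenses the collective encoding of all Skolem individuals of an axiom by the single constant $aux^\alpha$.
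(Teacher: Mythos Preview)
Your proposal is correct and follows essentially the same approach as the paper's proof: both directions work via the reduct $G_S(PK(\K))$, with part~(i) established by rule-by-rule satisfaction checking followed by minimality from the leastness of $\hat{\I}(\casmap)$, and part~(ii) by extracting $\casmap_S$ from the $\ovr$-literals, reading off a CAS-model from the $\instd/\tripled$ atoms, verifying it is the least CAS-model via answer-set minimality, and establishing justification through Theorem~\ref{theo:justified-cas-char}. The paper's proof is somewhat more explicit in its case analysis (spelling out, e.g., (pdlr-supex), (pdlr-nsupex), (ovr-subc), (app-subc)) and slightly more direct in the leastness step of part~(ii) (constructing $S'\subset S$ explicitly rather than invoking ``the part~(i) direction''), but the structure and all key ingredients---including the Kr\"otzsch-style $aux^\alpha$ handling and the role of exception safety---are the same.
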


\noindent
The correctness of the translation with respect to instance checking
is obtained as a direct consequence of
Proposition~\ref{prop:correctness}.
\begin{theorem}
\label{thm:encode-c-entailment}
Let  $\K$ be an exception safe DKB in $\DLliteR$ normal form, and let 
$\alpha \in \Lcal_\Sigma$ such that the output translation $O(\alpha)$ is defined. 
Then, $\K \models \alpha$ iff $PK(\K) \models O(\alpha)$.
\end{theorem}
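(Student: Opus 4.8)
The heavy lifting is done by Proposition~\ref{prop:correctness}, which puts the answer sets of $PK(\K)$ in correspondence with the least justified CAS-models of $\K$. What the proof of Theorem~\ref{thm:encode-c-entailment} has to add is a bridge between \emph{DKB-entailment}, which quantifies over \emph{all} DKB-models of $\K$, and \emph{cautious consequence} in $PK(\K)$, which quantifies over all answer sets; the tools for this are the structural results of Section~\ref{sec:properties}. Throughout, note that $O(\alpha)$ is only defined when $\alpha$ is an ABox assertion $A(a)$ or $R(a,b)$ with $a,b\in N_\K$, so it suffices to treat positive instance queries over named constants, and that the auxiliary constants $aux^\alpha$ and the linear order ($\first,\nextp,\lastp$) added to $PK(\K)$ do not affect the atoms relevant to $O(\alpha)$.

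\textbf{Step 1: reduction to named least models.} First I would show that $\K\models\alpha$ iff $\hat{\I}(\casmap)\models\alpha$ for every named justified clashing assumption $\casmap$ of $\K$. For the contrapositive of the forward direction, let $\stru{\I,\casmap}$ be a justified CAS-model of $\K$ with $\I\not\models\alpha$. Since $\K$ is exception safe, by Proposition~\ref{prop:pushing-eq} (using UNA on the elements of $\K$) every clashing assumption in $\casmap$ is over $N_\K$, so Theorem~\ref{theo:named} applies and yields the named restriction $\stru{\I^{N_\K},\casmap}$, again a justified CAS-model of $\K$; because $a,b\in N_\K$, the $N_\K$-restriction neither adds nor removes membership of these named constants in atomic concepts and roles, hence $\I^{N_\K}\not\models\alpha$. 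By Corollary~\ref{coroll:least-model} the $\subseteq_\Ncal$-least CAS-model $\hat{\I}(\casmap)$ with clashing set $\casmap$ exists and is justified, and $\hat{\I}(\casmap)\subseteq_\Ncal\I^{N_\K}$ gives $\hat{\I}(\casmap)\not\models\alpha$. Conversely, if $\hat{\I}(\casmap)\not\models\alpha$ for some justified $\casmap$, then $\hat{\I}(\casmap)$ is itself a DKB-model of $\K$ (Corollary~\ref{coroll:least-model}) refuting $\alpha$. (If $\K$ has no justified CAS-model at all, both $\K\models\alpha$ and $PK(\K)\models O(\alpha)$ hold vacuously and there is nothing to prove.)

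\textbf{Step 2: transfer to answer sets.} By Proposition~\ref{prop:correctness}, the answer sets of $PK(\K)$ are exactly the interpretations $I(\hat{\I}(\casmap))$ as $\casmap$ ranges over the named justified clashing assumptions of $\K$. Moreover, by the construction of $I(\cdot)$ together with the output rules (o-concept)/(o-role), for each such $\casmap$ we have $I(\hat{\I}(\casmap))\models O(\alpha)$ iff $\hat{\I}(\casmap)\models\alpha$ (for $\alpha=A(a)$ this is the equivalence $\instd(a,A)\in I(\hat{\I}(\casmap)) \iff \hat{\I}(\casmap)\models A(a)$, and analogously $\tripled(a,R,b)$ for $\alpha=R(a,b)$). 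Hence $PK(\K)\models O(\alpha)$ iff $O(\alpha)$ belongs to every answer set iff $\hat{\I}(\casmap)\models\alpha$ for every named justified clashing assumption $\casmap$, which by Step~1 is equivalent to $\K\models\alpha$.

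\textbf{Main obstacle.} Given Proposition~\ref{prop:correctness}, the argument is short; the delicate point is Step~1, i.e.\ justifying that restricting attention to \emph{named least} models is both sound and complete for positive instance queries. This hinges on exception safety: only then are all clashing assumptions over $N_\K$ (so that the named restriction of Theorem~\ref{theo:named} leaves $\casmap$ unchanged), and only then does the $\subseteq_\Ncal$-least model of Corollary~\ref{coroll:least-model} exist; for recursive or merely $n$-de safe DKBs these ingredients fail and the argument would have to be adapted, as discussed in Section~\ref{sec:unnamed}.
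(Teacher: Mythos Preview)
Your proof is correct and follows essentially the same approach as the paper, which merely states that the theorem is ``a direct consequence of Proposition~\ref{prop:correctness}'' without further elaboration. You have spelled out explicitly the bridge that the paper leaves implicit: namely, that exception safety together with Proposition~\ref{prop:pushing-eq}, Theorem~\ref{theo:named}, and Corollary~\ref{coroll:least-model} allows one to reduce DKB-entailment of a positive instance query over $N_\K$ to satisfaction in every least named justified CAS-model $\hat{\I}(\casmap)$, after which Proposition~\ref{prop:correctness} gives the one-to-one correspondence with answer sets and hence with cautious consequence in $PK(\K)$.
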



\smallskip\noindent
\textbf{Prototype implementation.} 
A proof-of-concept implementation of the presented datalog translation for $\DLliteR$ DKBs 
has been included in 
the latest version 
of the \CKRew \emph{(CKR datalog rewriter)} 
prototype~\cite{BozzatoES:18}.
\CKRew is a Java-based command line application that accepts as input
RDF files representing
(contextualized) knowledge bases with defeasible axioms and
produces as output a single {\tt .dlv} text file with
the datalog rewriting for the input KB.
The current version of the prototype includes an option
to accept as input a single RDF file containing a $\DLliteR$ DKB
(represented as OWL axioms in the normal form of Table~\ref{tab:dlr-normalform})
and apply the datalog translation presented above. 

The latest version of \CKRew,
together with sample RDF files implementing the 
knowledge base of Example~\ref{ex:syntax},
is available on-line at: \url{http://ckrew.fbk.eu/}.


\section{Complexity of Reasoning Problems}
\label{sec:complexity}


In this section, we turn to the computational complexity of reasoning
from a DKB. As in the previous section, we shall pay special
attention to adopting the UNA on knowledge bases, in particular when
we consider lower complexity bounds. As UNA on DKBs is easy to express
in $\DLliteR$, the results will carry over to the case without assumptions.

\subsection{Satisfiability} 
\label{sec:sat}

We first consider the satisfiability problem, i.e., deciding whether a
given $\DLliteR$ DKB has some DKB-model. As it turns out, defeasible
axioms do not increase the complexity with respect to satisfiability
of $\DLliteR$, due to the following property. 

\begin{proposition}
\label{prop:DKB-existence}
Let $\DKB$ be a normalized $\DLliteR$ DKB, and let 
$\casmap' = \{ \stru{\alpha,\ee} \mid D(\alpha) \in \DKB$, $\ee$ is over 
standard names$\,\}$ be the clashing assumption with all exceptions possible. 
Then, $\DKB$ has some 
justified CAS-model $\I_{\CAS} = \stru{\I, \casmap}$ such that
$\casmap\subseteq \casmap'$ iff
$\DKB$ has some CAS-model $\I_{\CAS} = \stru{\I, \casmap'}$.
\end{proposition}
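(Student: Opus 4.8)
I would prove the two directions separately. For the ``only if'' direction, let $\I_{\CAS}=\stru{\I,\casmap}$ be a justified CAS-model of $\DKB$ with $\casmap\subseteq\casmap'$ (in fact, neither ``justified'' nor $\casmap\subseteq\casmap'$ is used here). By clause~(i) of Definition~\ref{def:cas-model}, $\I$ satisfies all strict axioms $\alpha\in\Lcal_\Sigma$ of $\DKB$; and clause~(ii) for the pair $\stru{\I,\casmap'}$ holds vacuously, since for every $\default(\alpha)\in\DKB$ the set $\{\ee\mid\stru{\alpha,\ee}\in\casmap'\}$ already consists of \emph{all} tuples over standard names of the arity of $\alpha$, so no tuple $\vec{d}$ remains to which $\alpha$ must be applied. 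Hence $\stru{\I,\casmap'}$ is a CAS-model. The same remark shows conversely that $\DKB$ has a CAS-model $\stru{\I,\casmap'}$ iff the FO-translation of the strict part of $\DKB$ together with the (in)equalities induced by the name assignment $\nu$ of $\I$ is satisfiable; for the ``if'' direction I fix such a $\nu$.

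The plan for the ``if'' direction is to isolate a $\subseteq$-minimal consistent set of exceptions and show it is automatically justified. For $\casmap\subseteq\casmap'$, let $T_\casmap$ be the first-order theory obtained from the FO-translations of the strict axioms of $\DKB$, the $\nu$-(in)equalities (via the $\approx$-congruence encoding), and the instances $\phi_\alpha(\vec{d})$ for all $\default(\alpha)\in\DKB$ and all $\vec{d}$ over standard names with $\vec{d}\notin\{\ee\mid\stru{\alpha,\ee}\in\casmap\}$. By Lemma~\ref{lem:horn-equiv} this is (equivalent to) a universal Horn theory, and $\DKB$ has a CAS-model with clashing assumption $\casmap$ and name assignment $\nu$ iff $T_\casmap$ is satisfiable. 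The family $\mathcal{P}=\{\casmap\subseteq\casmap'\mid T_\casmap \text{ satisfiable}\}$ is non-empty (it contains $\casmap'$), and for any $\subseteq$-nested subfamily $\{\casmap_t\}_t$ of $\mathcal{P}$ its intersection again lies in $\mathcal{P}$: every finite subset of $T_{\bigcap_t\casmap_t}$ is contained in $T_{\casmap_{t^\ast}}$ for the $\subseteq$-smallest $\casmap_{t^\ast}$ among the finitely many chain members it mentions, hence is satisfiable, so $T_{\bigcap_t\casmap_t}$ is satisfiable by compactness of first-order logic. By Zorn's Lemma $\mathcal{P}$ then has a $\subseteq$-minimal member $\casmap^\ast$, and since $T_{\casmap^\ast}$ is a satisfiable Horn theory it has a least model $\hat{\I}$; thus $\stru{\hat{\I},\casmap^\ast}$ is a CAS-model of $\DKB$ with name assignment $\nu$ and $\casmap^\ast\subseteq\casmap'$.

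It then remains to argue that $\stru{\hat{\I},\casmap^\ast}$ is justified, which I would do by contradicting the minimality of $\casmap^\ast$. Since $\DKB$ is normalized, each defeasible axiom has one of the forms $\default(A\isa B)$, $\default(R\isa S)$, $\default(\mathrm{Inv}(R,S))$, $\default(\mathrm{Irr}(R))$, and from Table~\ref{tab:clashingsets} — using that $\neg\phi_\alpha(\ee)$ is a propositional combination of the one or two ground atoms involved that a satisfiable set of literals can entail only by containing one of the displayed minimal clashing sets — $\stru{\alpha,\ee}\in\casmap^\ast$ is unjustified iff none of its minimal clashing sets holds in every CAS-model $\I'$ of $\DKB$ with clashing assumption $\casmap^\ast$ that is $\NI$-congruent with $\hat{\I}$ (equivalently, has name assignment $\nu$). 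Assume $\stru{\alpha,\ee}\in\casmap^\ast$ is unjustified. For $\default(A\isa B)$, $\default(R\isa S)$ and $\default(\mathrm{Irr}(R))$ there is a single minimal clashing set, and any such $\I'$ violating it automatically satisfies $\phi_\alpha(\ee)$; then $\stru{\I',\casmap^\ast\setminus\{\stru{\alpha,\ee}\}}$ is a CAS-model, so $\casmap^\ast\setminus\{\stru{\alpha,\ee}\}\in\mathcal{P}$, contradicting minimality. For $\default(\mathrm{Inv}(R,S))$ at $(e_1,e_2)$ the two minimal clashing sets are $S_1=\{R(e_1,e_2),\neg S(e_2,e_1)\}$ and $S_2=\{\neg R(e_1,e_2),S(e_2,e_1)\}$; choose CAS-models $\I_1\not\models S_1$ and $\I_2\not\models S_2$ with name assignment $\nu$. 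If $\I_1$ or $\I_2$ satisfies the instance $\phi_\alpha(e_1,e_2)$ (the biconditional $R(e_1,e_2)\leftrightarrow S(e_2,e_1)$) we conclude as before; otherwise each violates the biconditional, which together with $\I_1\not\models S_1$ forces $(e_1,e_2)\notin R^{\I_1}$ and together with $\I_2\not\models S_2$ forces $(e_2,e_1)\notin S^{\I_2}$. Since $\hat{\I}$ is the least model of $T_{\casmap^\ast}$, a ground atom over standard names holds in $\hat{\I}$ only if it holds in all models of $T_{\casmap^\ast}$, so $(e_1,e_2)\notin R^{\hat{\I}}$ and $(e_2,e_1)\notin S^{\hat{\I}}$, hence $\hat{\I}\models\phi_\alpha(e_1,e_2)$; thus $\stru{\hat{\I},\casmap^\ast\setminus\{\stru{\alpha,\ee}\}}$ is a CAS-model, again contradicting minimality. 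Therefore every $\stru{\alpha,\ee}\in\casmap^\ast$ is justified, and $\stru{\hat{\I},\casmap^\ast}$ witnesses the claim. I expect the $\mathrm{Inv}$ case to be the main obstacle: unlike the other axiom shapes, a single CAS-model violating a minimal clashing set need not satisfy the corresponding axiom instance, so the exception cannot be dropped using that model alone and one must combine the negative information of two models — which is exactly what the least model provides; the accompanying name-assignment/UNA bookkeeping (congruence, Skolem terms for existential axioms) is routine but still needs care.
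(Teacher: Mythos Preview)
Your proof is correct and takes a genuinely different route from the paper. The forward direction is the same. For the converse, the paper fixes an enumeration $ex_1, ex_2,\ldots$ of all clashing assumptions in $\casmap'$ and greedily tries to drop each one: starting from the Horn theory $\Delta_0$ of the strict part, it sets $\Delta_{i+1} = \Delta_i \cup \{\phi_{\alpha_i}(\ee_i)\}$ and removes $\stru{\alpha_i,\ee_i}$ from $\casmap_i$ whenever this stays satisfiable, otherwise keeps both unchanged. In the limit $\Delta = \bigcup_i \Delta_i$ is satisfiable, and for each retained $\stru{\alpha_i,\ee_i}$ one has $\Delta_i \models \neg\phi_{\alpha_i}(\ee_i)$, hence $\Delta \models \neg\phi_{\alpha_i}(\ee_i)$; for the $\mathrm{Inv}$ case the paper then argues that since $\Delta$ is Horn and entails the disjunction $R(e_1,e_2)\lor S(e_2,e_1)$, it must entail one disjunct, so one of the two minimal clashing sets is in fact entailed. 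By contrast, you go non-constructively: Zorn plus compactness give a $\subseteq$-minimal satisfiable $\casmap^\ast$ directly, and you then show minimality forces justification, handling $\mathrm{Inv}$ by pushing the negative information from the two witnessing models down to the least Herbrand model rather than by a disjunction-splitting argument on the theory side. The paper's approach is more explicit and avoids choice, which is useful since Proposition~\ref{prop:DKB-existence} is invoked to argue tractability of satisfiability; your approach is cleaner conceptually (``a $\subseteq$-minimal satisfiable clashing assumption is automatically justified'') and makes the role of the Horn/least-model property in the $\mathrm{Inv}$ case more transparent. Both arguments rely on the same structural fact, namely that any clashing set for a normalized defeasible axiom must contain one of the minimal clashing sets listed in Table~\ref{tab:clashingsets}.
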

That is, a DKB $\K$ has a DKB-model iff the 
$\DLliteR$ KB consisting of the non-defeasible axioms in $\K$ has a model.
We note that in the argument for this proposition, no particular 
NI-congruence is considered. Conditions such as UNA or other equivalence relations over the individuals in $\K$
can be accommodated (using Horn axioms).


Thus, DKB-satisfiability testing with arbitrary exceptions boils down to
testing whether $\K$ is satisfiable if all defeasible axioms are
dropped, which is tractable. 

\begin{theorem}
\label{theo:DKB-sat}
Deciding whether a given arbitrary $\DLliteR$ DKB $\DKB$ has some DKB-model is
$\nlogspace$-complete in combined complexity and FO-rewritable 
in data complexity.
\end{theorem}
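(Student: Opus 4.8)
The plan is to reduce DKB-satisfiability to ordinary $\DLliteR$ knowledge base satisfiability and then to invoke the known complexity of the latter. By Proposition~\ref{prop:DKB-existence} and the remark following it, a DKB $\K$ has a DKB-model if and only if the plain $\DLliteR$ knowledge base $\K^{nd}$ obtained from $\K$ by deleting all defeasible axioms is satisfiable: intuitively, taking the clashing assumption $\casmap'$ with all possible exceptions renders condition~(ii) of Definition~\ref{def:cas-model} vacuous, so a CAS-model $\stru{\I,\casmap'}$ of $\K$ is precisely a DL-model of $\K^{nd}$, and every justified CAS-model of $\K$ has a clashing assumption contained in $\casmap'$. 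Hence $\K\mapsto\K^{nd}$ is a correct reduction, and what remains is to account for its cost and for the complexity of $\DLliteR$ knowledge base satisfiability.

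For the upper bounds, note that computing $\K^{nd}$ only erases the defeasible axioms: this is feasible in logarithmic space, and for data complexity the defeasible axioms — like the TBox and RBox — are regarded as part of the fixed ontology, so the varying data (the strict ABox) is left untouched by the reduction. Composing this preprocessing with a satisfiability check for $\K^{nd}$, which is in $\nlogspace$ in combined complexity and FO-rewritable in data complexity by~\cite{CalvaneseGLLR07}, yields a combined-complexity procedure in $\nlogspace$ (since $\nlogspace$ is closed under logspace reductions) and, because the data is not altered, FO-rewritability of DKB-unsatisfiability in data complexity. If the UNA is imposed on the individuals of $\K$, it is added to $\K^{nd}$ by Horn $\DLliteR$ axioms (for each relevant pair, fresh concepts $A_a,A_b$ with $A_a(a)$, $A_b(b)$, $A_a\sqsubseteq\non A_b$; cf.\ the remark at the start of this section), which keeps $\K^{nd}$ inside $\DLliteR$ and does not change the classification.

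For $\nlogspace$-hardness in combined complexity, observe that a $\DLliteR$ knowledge base is exactly a DKB with an empty set of defeasible axioms and that its DKB-models coincide with its DL-models; hence the identity map reduces $\DLliteR$ knowledge base satisfiability to DKB-satisfiability. Since $\DLliteR$ satisfiability is $\nlogspace$-hard already in the absence of existential axioms (as recalled after Proposition~\ref{prop:chain-bounded}) and $\nlogspace$ is closed under complement, $\nlogspace$-hardness follows, also under the UNA.

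The substance of the argument lies entirely in Proposition~\ref{prop:DKB-existence}: once it is available, the only points requiring care are that the elimination of defeasible axioms (and, when needed, the $\DLliteR$ encoding of the UNA) stays within logarithmic space and does not touch the strict ABox, so that both the $\nlogspace$ combined-complexity bound and FO-rewritability in data complexity are preserved. This bookkeeping is where I expect the sole — and minor — obstacle.
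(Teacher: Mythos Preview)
Your proposal is correct and follows essentially the same approach as the paper: both arguments invoke Proposition~\ref{prop:DKB-existence} (and the remark after it) to reduce DKB-satisfiability to ordinary $\DLliteR$ satisfiability by dropping the defeasible axioms, then appeal to the known \nlogspace\ combined complexity and FO-rewritability of $\DLliteR$ satisfiability from~\cite{CalvaneseGLLR07}, with hardness inherited from plain $\DLliteR$. The only cosmetic difference is that the paper explicitly front-loads a logspace normalization step (via Lemma~\ref{lem:normal-form}) before applying Proposition~\ref{prop:DKB-existence}, whereas you rely directly on the post-proposition remark, which already states the reduction for arbitrary DKBs.
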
%
\begin{proof}
\nop{******* old text 
\tenote{add formal proof}
To see this, the program $PK(\DKB)$ for $\DKB$ has in each rule at most one
literal with an intensional predicate in the body, i.e., a predicate
that is defined by proper rules. Thus, we have a linear datalog
program with bounded predicate arity, for which derivability of an atom
is feasible in nondeterministic logspace, as this can be reduced to 
a graph reachability problem in logarithmic space. The \nlogspace-hardness is inherited from the combined complexity of KB satisfiability
in $\DLliteR$, which is $\nlogspace$-complete.
*******}
We can normalize $\DKB$ efficiently in linear time (and in fact
logspace) while preserving exception safety, so we may assume $\DKB$ is of this form. We then can test
whether $\DKB$ with defeasible axioms dropped, which is an ordinary
$\DLliteR$ KB, is satisfiable; it is well-known that this is
feasible in \nlogspace\ \cite{CalvaneseGLLR07}. The \nlogspace-hardness
is inherited from the combined complexity of KB satisfiability in $\DLliteR$, which is $\nlogspace$-complete.
%
\nop{**** old text 
As regards data-complexity, it is
well-known that instance checking and similarly satisfiability testing for $\DLliteR$
are FO-rewritable \cite{CalvaneseGLLR07};
this has been shown by a reformulation algorithm, which informally
unfolds the axioms $\alpha(\vec{x})$ (i.e., performs resolution
viewing axioms as clauses), such that deriving an instance $A(a)$
reduces to presence of certain assertions in the ABox. This unfolding
can be adorned by typing each argument $x\,{\in}\,\vec{x}$ of an axiom to whether it is
an individual from the DKB  (type i), or an unnamed individual (type u); for
example, $\alpha(x) = A \isa B$ yields $\alpha_{\rm i}(x)$ and
$\alpha_{\rm u}(x)$. The typing carries over to unfolded
axioms.  In unfolding, one omits typed versions of defeasible axioms
$D(\alpha(\vec{x}))$, which w.l.o.g.\
have no existential restrictions; e.g., for $D(\alpha(x)) = D(B \isa
C)$, one omits $\alpha_{\rm i}(x)$. In this way, instance derivation 
(and similarly satisfiability testing) is reduced to
presence of certain ABox assertions again.
***}

As regards data-complexity, it is
well-known that instance checking and 
satisfiability testing for $\DLliteR$
are FO-rewritable \cite{CalvaneseGLLR07};
this has been shown by a reformulation algorithm, which informally
unfolds the axioms $\alpha(\vec{x})$ (i.e., performs resolution
viewing axioms as clauses), such that deriving an instance $A(a)$
reduces to presence of certain assertions in the ABox. We can use the
same rewriting and apply it to $\K$ with all defeasible axioms
dropped. 
\end{proof}%
\noindent
We note that while satisfiability is tractable for arbitrary
DKB-models in general, this does not necessarily hold under
restrictions on exceptions, as the construction in the proof of 
Proposition~\ref{prop:DKB-existence} depends on the enumeration; in
particular, deciding the existence of some  DKB-model with no exceptions 
involving unnamed individuals (i.e., of a 0-bounded justified DKB-model) is
intractable; this can be shown, e.g.,\ by an adaption of an \np-hardness
proof for 0-bounded justified model existence for defeasible $\ELbot$
context knowledge repositories in \cite{DBLP:conf/birthday/BozzatoES19}.
On the other hand, under a condition that ensures that some DKB-model
is 0-bounded if any DKB-model exists, we retain tractability. In particular: 

\begin{corollary}
\label{cor:DKB-sat}
Deciding whether a given 
exception-safe  DKB $\DKB$ has some
DKB-model is $\nlogspace$-complete in combined complexity and FO-rewritable 
in data complexity.
\end{corollary}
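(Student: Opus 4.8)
The plan is to obtain the statement as a near-immediate corollary of Theorem~\ref{theo:DKB-sat} together with Proposition~\ref{prop:DKB-existence}, since exception-safe DKBs form a subclass of all $\DLliteR$ DKBs. For the upper bounds I would argue exactly as in the proof of Theorem~\ref{theo:DKB-sat}: first normalize $\DKB$ in logarithmic space, which by Lemma~\ref{lem:normal-form} preserves exception safety (and the DKB-models), so we may assume $\DKB$ is normalized; then invoke Proposition~\ref{prop:DKB-existence}, which tells us that $\DKB$ has a DKB-model iff the ordinary $\DLliteR$ KB obtained from $\DKB$ by deleting all defeasible axioms is satisfiable. The latter test is \nlogspace-complete in combined complexity and FO-rewritable in data complexity by \cite{CalvaneseGLLR07}, re-using their reformulation/unfolding algorithm applied to the non-defeasible part of $\DKB$. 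Since UNA on the individuals of $\K$ is itself expressible in $\DLliteR$ (via Horn axioms / the $\approx$ predicate), imposing it does not affect these bounds, matching the remark after Proposition~\ref{prop:DKB-existence}.

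For the matching \nlogspace\ lower bound I would observe that a $\DLliteR$ KB containing no defeasible axioms at all is vacuously exception safe: with no defeasible axioms there are no clashing sets $S_{\stru{\alpha,\ee}}$ to constrain, so the exception-safety (i.e.\ $0$-de safety) condition holds trivially. Hence the known \nlogspace-hardness of deciding satisfiability of a plain $\DLliteR$ KB \cite{CalvaneseGLLR07} already lands inside the exception-safe fragment. If one additionally wants to account for checking that the input really is exception safe, this is feasible in \nlogspace\ by Proposition~\ref{prop:complexity-recognize-safe}, so it does not change the overall complexity; alternatively exception safety can be treated as a promise on the input.

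The one point I would make explicit — rather than a genuine obstacle — is why the ``all exceptions possible'' construction of Proposition~\ref{prop:DKB-existence} still witnesses a bona fide DKB-model in the exception-safe case, even though the clashing assumption $\casmap'$ it uses mentions unnamed (standard-name) individuals. By Proposition~\ref{prop:pushing-eq} and Theorem~\ref{theo:named}, exception safety forces the justified part of any CAS-model to collapse to clashing assumptions over $N_\K$, so whenever a DKB-model exists there is in fact a named one (indeed a $0$-bounded one, modulo equalities from the name assignment). This is precisely what separates the corollary from the preceding remark, where demanding that an \emph{arbitrary} DKB have a model that is \emph{a priori} $0$-bounded is \np-hard: there the restriction is on the admissible models, whereas here it is a syntactic property of $\DKB$ that guarantees the minimal models stay named. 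I do not expect any real difficulty; the corollary is essentially a bookkeeping step on top of the machinery already established.
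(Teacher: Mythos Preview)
Your proposal is correct and matches the paper's approach: the paper gives no explicit proof of this corollary, treating it as immediate from Theorem~\ref{theo:DKB-sat} (the upper bounds specialize to the exception-safe subclass, and the \nlogspace-hardness instances---plain $\DLliteR$ KBs with no defeasible axioms---are trivially exception safe). Your additional discussion explaining why the corollary is not in tension with the preceding \np-hardness remark about $0$-bounded models is accurate and more detailed than what the paper provides, but not strictly needed for the corollary itself.
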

%
%
In passing, we remark that for exception-safe DKBs $\K$, 
checking whether an interpretation $\I$ is a DKB-model of $\K$ is tractable 
(as follows from the ASP encoding), as is constructing some arbitrary
DKB-model; however, we focus in the sequel here on inference.

\subsection{Entailment Checking} 
\label{sec:entailment}

As regards inference, entailment checking of axioms from the DKB-models
of an exception-safe DKB is intractable:
there can be exponentially many justified clashing assumptions for
such models, even under UNA; finding a DKB-model that violates an
axiom turns out to be difficult.

\begin{theorem}
\label{theo:DKB-entail-conp}
Given an exception-safe DKB $\DKB$  and an axiom $\alpha$,  deciding
whether $\DKB\models\alpha$ is \conp-complete; this holds also for data
complexity and instance checking, i.e., $\alpha$ is an assertion of the form $A(a)$.
\end{theorem}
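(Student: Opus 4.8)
The plan is to establish membership in \conp\ and \conp-hardness separately. For \textbf{membership}, I would first invoke Theorem~\ref{theo:DKB-sat} (really Corollary~\ref{cor:DKB-sat}) to reduce $\DKB\models\alpha$ to a search over clashing assumptions: since $\K$ is exception-safe and we work under UNA on $N_\K$, by Theorem~\ref{theo:named} and Corollary~\ref{coroll:least-model} it suffices to consider clashing assumptions $\casmap$ over $N_\K$ and their least CAS-models $\hat{\I}_\K(\casmap,\nu)$ (with $\nu$ the UNA name assignment). So $\DKB\not\models\alpha$ holds iff there is a \emph{justified} clashing assumption $\casmap$ over $N_\K$ such that $\hat{\I}_\K(\casmap,\nu)\not\models\alpha$. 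A nondeterministic polynomial-time algorithm guesses $\casmap$ (polynomially many candidates, each of polynomial size since $\ee$ ranges over $N_\K$), then verifies in polynomial time that (a) $\casmap$ is satisfiable and yields the least model — feasible since the datalog program $PK(\K)$ of Section~\ref{sec:translation} is computable in polynomial time and its least model under a fixed set of $\ovr$-atoms is obtained by tractable datalog evaluation; (b) $\casmap$ is justified — using the characterization of Theorem~\ref{theo:justified-cas-char}: each $\stru{\alpha,\ee}\in\casmap$ has a clashing set $S$ whose positive part holds in $\hat{\I}$ (checkable in the least model) and whose negated atoms are violated in \emph{no} CAS-model with name assignment $\nu$ — the latter is again a $\DLliteR$ entailment check ($\DLliteR$ KB with all defeasibles strict, $\ee$ exempt, entails $\beta$), hence tractable; and (c) $\hat{\I}\not\models\alpha$, a tractable instance/axiom check. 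Thus $\DKB\not\models\alpha$ is in \np, so $\DKB\models\alpha$ is in \conp; and since $\casmap$ and the verification size are independent of the TBox when we fix the schema, the same bound holds for data complexity.

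For \textbf{hardness}, I would reduce from a standard \conp-complete problem, e.g.\ validity of a propositional formula, equivalently \textsc{UNSAT} for the complement: given a CNF $\varphi=\bigwedge_j c_j$ over variables $p_1,\dots,p_k$, build a DKB whose DKB-models correspond to truth assignments, so that a distinguished assertion $A(a)$ holds in \emph{every} DKB-model iff $\varphi$ is unsatisfiable. The idea is to encode the two truth values of each $p_i$ by whether an overriding occurs on a named individual $a_i$: introduce for each $i$ a defeasible axiom $\default(T_i\isa F_i)$ together with strict axioms $T_i(a_i)$ and a mechanism forcing that $\neg F_i(a_i)$ is derivable \emph{exactly when} the ``clause machinery'' wants $p_i$ true, so that each justified clashing assumption picks one assignment. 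Then encode each clause $c_j$ by strict $\DLliteR$ inclusions so that if the assignment satisfies $c_j$ some concept $\mi{Sat}_j(a)$ is derived, and add inclusions making $A(a)$ derivable once all $\mi{Sat}_j(a)$ hold (a conjunction, handled via auxiliary concepts in the normal form of Table~\ref{tab:dlr-normalform}). Care is needed because $\DLliteR$ has only unary/binary atoms and limited conjunction on the left, so the clause-satisfaction gadget must thread through roles/auxiliary concepts per clause rather than forming a $k$-ary conjunction directly; this is routine but must be done so that the reduction stays within $\DLliteR$ and preserves exception-safety (no existentials feeding defeasibles, so $0$-de safe). The construction can be adapted from the \conp-hardness argument used for justified-model reasoning in the $\SROIQrl$/$\ELbot$ settings of~\cite{BozzatoES:18,DBLP:conf/birthday/BozzatoES19}, tightened so that hardness already shows up in \emph{data complexity} (fixed TBox/RBox encoding the clause-structure schema, variable ABox encoding $\varphi$'s instance) and for $\alpha$ an assertion $A(a)$.

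The \textbf{main obstacle} I expect is the hardness direction, specifically engineering the assignment-selection gadget within the syntactic straitjacket of $\DLliteR$: we must force that in each justified CAS-model exactly one of ``override on $a_i$'' / ``no override on $a_i$'' occurs and that this choice is genuinely free (both options extend to justified models), which requires arranging clashing sets of the form $\{T_i(a_i),\neg F_i(a_i)\}$ to be derivable on demand without inadvertently making some assignments infeasible or introducing unintended exceptions elsewhere — and simultaneously keeping the clause checks expressible with only the inclusions $C\isa D$, $C\isa\neg D$, $\exists R\isa A$, $A\isa\exists R$, role inclusions and disjointness/inverse axioms permitted by $\DLliteR$. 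The membership direction, by contrast, is largely a matter of assembling Theorems~\ref{theo:named} and~\ref{theo:justified-cas-char}, Corollary~\ref{coroll:least-model}, and the tractability of $\DLliteR$/datalog evaluation, with the only subtlety being to confirm that the guessed $\casmap$ need only range over $N_\K$ (guaranteed by exception-safety) so that the guess is polynomially bounded.
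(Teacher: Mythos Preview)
Your membership argument is sound for the instance-checking case $\alpha=A(a)$, but step~(c) does not cover general axioms. For an inclusion $\alpha = A \isa B$, refuting entailment means exhibiting \emph{some} CAS-model $\I$ (with the guessed justified $\casmap$) and some element $e$ with $e\in A^\I$, $e\notin B^\I$; the least model $\hat{\I}_\K(\casmap,\nu)$ may well satisfy $A\isa B$ while a larger $\NI$-congruent CAS-model violates it (nothing prevents putting a fresh element into $A$ but not into $B$). Hence ``check $\hat{\I}\not\models\alpha$'' is complete only in one direction: it is sound as a refutation witness, but may miss refutations, so your \np\ procedure for the complement is not correct for arbitrary $\alpha$. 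The paper handles the non-assertion cases by adding an auxiliary concept $Aux$ with $Aux\isa A$, $Aux\isa\non B$ and a fresh individual $a_e$ to name a potential witness, and then testing that $\non Aux(t)$ is \emph{not} derivable for $t$ ranging over $a_e$ and Skolem terms of polynomially bounded depth; this turns the ``some model, some element'' search into a non-derivability question, decidable in polynomial time once $\casmap$ and $\nu$ are fixed. Role axioms and the remaining cases are treated analogously. (Minor: in~(b), the check for $\neg\beta\in S$ is that $\K$ under $\casmap$ entails $\neg\beta$, equivalently is inconsistent with $\beta$ --- not that it entails $\beta$.)

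For hardness, your direct SAT-style reduction faces an expressivity obstacle you have not resolved: making $A(a)$ derivable exactly when all $\mi{Sat}_j(a)$ hold needs a left-hand conjunction, which $\DLliteR$ lacks (Table~\ref{tab:dlr-normalform} admits only atomic concepts or $\exists R$ on the left). The dual encoding --- $A(a)$ iff some clause is violated --- hits the same wall, since ``clause violated'' is again a conjunction of literal-falsities. The $\SROIQrl$/$\ELbot$ constructions you appeal to rely on conjunctive left-hand sides and do not port over. The paper avoids the gadget altogether by reducing from AR-entailment over ordinary $\DLliteR$ KBs~\cite{DBLP:conf/rr/LemboLRRS10}, already known to be \conp-hard under data complexity: set $\hat{\DKB}=\Tcal\cup\{\default(\alpha)\mid\alpha\in\Acal\}$, observe that maximal repairs $\Acal'$ correspond to justified clashing assumptions $\casmap=\{\stru{\alpha,\ee}\mid\alpha(\ee)\in\Acal\setminus\Acal'\}$, and conclude that AR-entailment of $\alpha$ coincides with $\hat{\DKB}\models\alpha$. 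This is short, stays within $\DLliteR$, is exception-safe after normalization, and immediately yields both the data-complexity and instance-checking hardness claims.
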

\begin{proof}[Proof (Sketch)]
To refute $\DKB\models \alpha$,
we need to show that
some justified CAS-model  $\IC_{\CAS} = \stru{\I,\casmap}$ of $\DKB$ 
exists such that $\I \not\models \alpha$. Without loss of generality,
we assume that $\alpha$ is normalized.

Given $\casmap$ and a name assignment $\nu$, we can prove the refutation depending on the type of $\alpha$.
For example, if
%
$\alpha$ is an inclusion axiom $A \isa B$, then we need to
show that for some element $e$ it holds that $\I \models A(e)$ and
$\I\models \neg B(e)$. To deal with this, we first 
incorporate $\nu$ into $\K$, by pushing (in)equalities
w.r.t.\ $\nu$ (replace all equal constants by one representative, add
axioms that enforce inequalities $a\neq b$, e.g.,\ stating $A_a(a)$,
$\neg A_b(b)$ where $A_a$ and $A_b$ are fresh concept names). We then
add to $\K$ the
axioms $Aux \isa A$, $Aux \isa \non B$. 
We may then assume without loss of generality that $\I \models Aux(e)$, i.e., $\I \not\models \neg Aux(e)$.
We next add to $\K$ an assertion $A_e(a_e)$, where $A_e$ and $a_e$ are a
fresh concept and individual name, respectively; this serves to
give $e$ a name if it is outside the elements named in $\I$ by Skolem terms.
%
We then check whether $\neg Aux(a_e)$ is not derivable from the
resulting DKB $\K'$ under $\casmap$; this holds iff some $\I$ with $e$
not named by some Skolem term of $\K$ exists. 
Otherwise, $e$ must be named by some Skolem term $t$ of $\K$. We thus
check that for none such $t$, $\neg Aux(t)$ is derivable from $\K'$
under $\casmap$; the depth of $t$ can be polynomially bounded.
The checks can be done in non-deterministic logspace, and thus deciding
$\K \not\models \alpha$ under $\casmap$ is feasible in
polynomial time. 
The cases for other forms of $\alpha$ can be shown similarly
and are described in the full proof in Appendix~\ref{sec:complexity-appendix}.

  %
%

Thus, to decide $\K\not\models \alpha$, we can guess a justified
clashing assumption $\casmap$ over $N_\K$ 
together with a clashing set $S_{\stru{\alpha,\ee}}$ for each
$\stru{\alpha,\ee} \in \casmap$
for a name assignment $\nu$.
We then check  relative to $\nu$ (i) that $\casmap$ is satisfiable, (ii) that all
$S_{\stru{\alpha,\ee}}$
are derivable from $\DKB$ under $\casmap$ and $\nu$, and (iii) that 
$\DKB \not\models \alpha$. Each of the steps (i)--(iii) is feasible in polynomial time. Consequently, 
the entailment problem $\K\models \alpha$ is in \conp. 

The \conp-hardness can be shown by a reduction from 
inconsistency-tolerant reasoning from $\DLliteR$
KBs under AR-semantics \cite{DBLP:conf/rr/LemboLRRS10}. 
The result for $\DLliteR$ KBs can be easily extended to exception safe
DKBs and to data complexity (further details are provided
in the full proof in Appendix~\ref{sec:complexity-appendix}).
\end{proof}
\noindent
We observe that the \conp-hardness proof in \cite{DBLP:conf/rr/LemboLRRS10} 
used many role restrictions and inverse roles; for 
combined complexity, \conp-hardness of entailment 
in absence of any role names can be derived from  results about
propositional circumscription. 

\begin{proposition}
\label{prop:DKB-entail-conp}
Given a DKB $\DKB$, deciding whether $\DKB\models\alpha$ is \conp-hard
even if no roles occur in $\DKB$ and $\alpha$ is 
an assertion $A(a)$.
\end{proposition}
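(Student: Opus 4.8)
The plan is to give a polynomial reduction from $3$-UNSAT (which is \conp-complete) to the entailment problem; a reduction from inference in propositional circumscription would work equally well, exploiting that justified exceptions act like a circumscriptive minimisation. Given a $3$-CNF $\varphi = C_1\wedge\dots\wedge C_m$ over $x_1,\dots,x_n$ with $C_j = \ell_{j,1}\vee\ell_{j,2}\vee\ell_{j,3}$, I will construct a DKB $\DKB$ over a single individual $a$ that uses only atomic concepts and (defeasible) concept inclusions -- in particular \emph{no roles} -- together with a query assertion $A(a)$, such that $\DKB\models A(a)$ iff $\varphi$ is unsatisfiable. Since $A(a)$ has the required form, this yields the claim; note that $\DKB$ is trivially exception-safe (there are no Skolem functions, hence no unnamed individuals) and UNA is vacuous with one individual.

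The DKB uses three gadgets. \emph{(i) Assignments:} add a concept $T$ with $T(a)$, and for each $i$ concepts $X_i,\bar X_i$, strict axioms $X_i\isa\neg\bar X_i$, $\bar X_i\isa\neg X_i$, and defeasible axioms $\default(T\isa X_i)$, $\default(T\isa\bar X_i)$. At most one of the pair may be applied at $a$ (they clash), and if exactly one is overridden then the other is applied, so $X_i(a)$ or $\bar X_i(a)$ holds; overriding e.g.\ $\langle T\isa X_i,a\rangle$ is justified precisely because $\bar X_i(a)$, hence $\neg X_i(a)$, is then entailed. So each DKB-model induces (at most) a truth assignment $\tau$ with $\tau(x_i)=1$ iff $X_i(a)$ holds. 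For a literal $\ell$ let $\overline{L}_\ell$ be the concept ``$\ell$ is false'': $\overline{L}_{x_i}=\bar X_i$, $\overline{L}_{\neg x_i}=X_i$. \emph{(ii) Clauses:} since $\DLliteR$ has no conjunctive left-hand sides, simulate ``$C_j$ is falsified'' ($=\overline{L}_{\ell_{j,1}}\sqcap\overline{L}_{\ell_{j,2}}\sqcap\overline{L}_{\ell_{j,3}}$) by a fresh concept $A^{(j)}$ with strict axioms $A^{(j)}\isa\overline{L}_{\ell_{j,l}}$ ($l=1,2,3$) and one defeasible axiom $\default(\overline{L}_{\ell_{j,1}}\isa A^{(j)})$; a short case analysis shows that in every DKB-model $A^{(j)}(a)$ is derived iff all three $\overline{L}_{\ell_{j,l}}(a)$ hold (if some $\overline{L}_{\ell_{j,l}}(a)$ is false, the strict axiom forces $\neg A^{(j)}(a)$ and makes $\langle\overline{L}_{\ell_{j,1}}\isa A^{(j)},a\rangle$ justified, switching the defeasible axiom off; if all hold, nothing forces $\neg A^{(j)}(a)$, so the axiom cannot be overridden and $A^{(j)}(a)$ is derived). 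Add $A^{(j)}\isa A$ for each $j$. \emph{(iii) Totality:} to neutralise ``degenerate'' models in which some $x_i$ gets no value, add for each $i$ a fresh $U_i$ with strict axioms $X_i\isa\neg U_i$, $\bar X_i\isa\neg U_i$, a defeasible axiom $\default(T\isa U_i)$, and $U_i\isa A$: if $x_i$ is unassigned, nothing forces $\neg U_i(a)$, so $\default(T\isa U_i)$ cannot be overridden, $U_i(a)$ is derived and $A(a)$ holds; if $x_i$ is assigned, $\langle T\isa U_i,a\rangle$ is forced and justified, so $U_i(a)$ is false.

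Correctness then follows: over DKB-models inducing a total assignment $\tau$ we get $A(a)$ iff some $C_j$ is falsified by $\tau$, i.e.\ iff $\tau\not\models\varphi$; over the remaining (degenerate) DKB-models $A(a)$ always holds; and for every total $\tau$ there is a DKB-model inducing it -- take the least CAS-model (in the sense of Corollary~\ref{coroll:least-model}) of the clashing-assumption set consisting of the $\tau$-consistent choices, all $\langle T\isa U_i,a\rangle$, and the forced clause-gadget overridings, checking each is justified using the derivability facts above. Hence $\DKB\models A(a)$ iff every total assignment falsifies $\varphi$ iff $\varphi$ is unsatisfiable, which completes the reduction.

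The main work -- and the step to be careful about -- is establishing that this correspondence is \emph{tight}: that the clause- and totality-gadgets behave deterministically given the induced assignment, and that no unintended DKB-model arises (in particular the ``both overridden'' configurations of gadget~(i) are either impossible or, by gadget~(iii), already entail $A(a)$). This is where the non-redundancy of justifications (Proposition~\ref{prop:cas-minimality}) and the least-model property (Corollary~\ref{coroll:least-model}) enter, together with the observation that a clashing set for $\langle C\isa D,a\rangle$ must entail $C(a)\wedge\neg D(a)$, so that (for a fixed name assignment and $\casmap$) the exception $\langle C\isa D,a\rangle$ is justified iff $C(a)\wedge\neg D(a)$ is a cautious consequence over all CAS-models with that $\casmap$; this reduces each justification check to $\DLliteR$-style reasoning.
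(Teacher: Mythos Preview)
Your proof is correct but takes a genuinely different route from the paper's. The paper reduces from circumscriptive inference over positive propositional 2CNF (Cadoli--Lenzerini, 1994): each clause $x\vee y$ becomes a single axiom $x\isa\neg y$ (with a polarity flip and special treatment of the query atom $z$), and every variable $x\neq z$ gets one defeasible assertion $\default(x(a))$; the minimality of justified clashing assumptions then directly mirrors circumscriptive minimisation, so the construction is four lines long. Your reduction is instead from 3-UNSAT, which is more elementary but forces you to build three separate gadgets---Boolean choice, clause-falsification (working around the lack of left-conjunction in $\DLliteR$), and totality---and to verify that they interact correctly. Your gadget~(iii) is doing real work: without it, the ``both $X_i,\bar X_i$ overridden'' configuration can in principle be justified via interactions with gadget~(ii), and ruling those out directly is delicate; with gadget~(iii) you get the clean argument that the three exceptions for $X_i$, $\bar X_i$, $U_i$ cannot be simultaneously justified (since $\neg U_i(a)$ in every CAS-model would require $X_i(a)\lor\bar X_i(a)$, contradicting the other two justifications), so every justified $\casmap$ either encodes a total assignment or already forces $U_i(a)$ and hence $A(a)$. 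What your approach buys is self-containment (no appeal to the circumscription literature); what the paper's approach buys is brevity and a transparent link between justified exceptions and predicate minimisation.
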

While the proof of Proposition~\ref{prop:DKB-entail-conp} establishes \conp-hardness of entailment for combined
complexity under UNA when roles are absent (and for the case
without UNA as well), this setting has
tractable data complexity: 
we can consider the axioms for individuals $a$ separately, and 
if the GCI axioms are fixed only few
axioms per individual exist. This also holds 
if role axioms but no
existential restrictions are
permitted, as we can concentrate on
the pairs $a,b$ and $b,a$ of individuals.
The question remains how much of the latter is possible while staying tractable.

\subsection{Conjunctive Query Answering} 
\label{sec:cq-answering}

A \emph{conjunctive query (CQ)} is a formula $Q(\vec{x})
= \exists\vec{y}.\gamma(\vec{x},\vec{y})$ where $\vec{x}$,
$\vec{y}$ are disjoint
lists of different variables and $\gamma(\vec{x},\vec{y})
= \gamma_1\land\cdots\land\gamma_m$ is a
conjunction of atoms $\gamma_i = \alpha_i(\vec{t}_i)$, $1\leq i
\leq m$ where $\alpha_i$ is either a
concept name or a role name and $\vec{t_i}$ is a tuple of variables
from $\vec{x} \cup\vec{y}$ and individual constants that matches the
arity of $\alpha_i$. The CQ is \emph{Boolean} (a \emph{BCQ}), if $\vec{x}$ is empty.

A CAS-interpretation $\I_\CAS = \stru{\I,\casmap}$ satisfies a
BCQ $Q$, denoted $\I_\CAS \models Q$, if a query matches, i.e., some substitution
$\vartheta: \vec{y} \rightarrow \NI_s$ exists such that $\I \models
\alpha_i(\vec{t}_i\vartheta$) for all $i=1,\ldots,m$.
A DKB $\DKB$ entails $Q$, denoted $\DKB\models Q$, if every DKB-model
of $\DKB$ entails $Q$. The \emph{(certain) answers}\/ for a general CQ
    $Q(\vec{x})$ are
then as usual the tuples $\vec{c}$ of individual names such
that $\DKB\models Q(\vec{c})$.


\begin{example}[Example~\ref{ex:2} cont'd]
\label{ex:bcq}
Consider
the CQ $Q(x) = \exists y.\mi{DeptMember}(x) \land \mi{hasCourse}
(x,y)$ on the DKB $\K_{dept}$ in Example~\ref{ex:syntax}.
In Example~\ref{ex:2}, we discussed that $\K_{dept}$ has a
justified CAS-model $\I_{\CAS} = \stru{\Ical,\casmap}$ with an
exception for $\mi{bob}$ on the axiom $\alpha = \mi{DeptMember} \subs
\exists \mi{hasCourse}$, while $\mi{alice}$ has no exception. Thus,
the query $Q$ has a match in $\I_{\CAS}$ by $\vartheta = \{ x\mapsto
\mi{alice}^\I,$ $ y \mapsto f_{\mi{hasCourse}}^\I(x)\}$, where
$f_{\mi{hasCourse}}^\I$ is the Skolem function in $\I$; in fact, in
every such justified CAS-model $\I_{\CAS}$ the query has this match.
If $\mi{alice}$ and $\mi{bob}$ are regarded different, i.e., under the
unique name assumption, no other justified CAS-model exists; thus
$\mi{alice}$ is the (only) certain answer of the query. \EndEx
\end{example}
Deciding whether a BCQ $Q$ has a query match in a
$\DLliteR$ KB is known to be \np-complete, cf.\ \cite{CalvaneseGLLR07}. 
As multiple (even exponentially many) clashing assumptions may lead
  to different DKB-models, and as for  each such assumption the query must have a match,
BCQ answering from exception-safe DKBs is at the second level of the polynomial hierarchy.

\begin{theorem}
\label{theo:CKR-CQ-answering}
Given an exception-safe DKB $\DKB$ and a
Boolean CQ $Q$, deciding whether $\DKB \models Q$ is
(i) $\Pi^p_2$-complete in combined complexity and (ii)
\conp-complete in data complexity.
\end{theorem}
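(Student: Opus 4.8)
\textit{Membership, combined complexity ($\Pi^p_2$).} The first step is a ``named model focus''. For an exception-safe $\DKB$, Proposition~\ref{prop:pushing-eq} (pushing equality, which preserves exception-safety) together with Theorem~\ref{theo:named} lets us assume that every DKB-model has a clashing assumption $\casmap$ over $N_\K$, and by Corollary~\ref{coroll:least-model} each justified $\casmap$ together with a name assignment $\nu$ on $N_\K$ admits an $\subseteq_\Ncal$-least justified CAS-model $\hat{\I}_\K(\casmap,\nu)$ containing all relevant Skolem terms. Since a BCQ $Q$ is positive, $\DKB\not\models Q$ holds iff there are a justified $\casmap$ over $N_\K$ and a name assignment $\nu$ with $\hat{\I}_\K(\casmap,\nu)\not\models Q$: if the least model has a match, so has every larger one on the common term domain $\Ncal$, and conversely $\hat{\I}_\K(\casmap,\nu)$ is itself a DKB-model. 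This gives a $\Sigma^p_2$ procedure for the complement: guess $\nu$ (an equality partition of $N_\K$) and the KB $\K'$ obtained by incorporating $\nu$, a clashing assumption $\casmap$ over $N_{\K'}$ (a subset of the polynomially many ground defeasible-axiom instances, hence of polynomial size), and for each $\stru{\alpha,\ee}\in\casmap$ a clashing set $S_{\stru{\alpha,\ee}}$ (of bounded size, by Table~\ref{tab:clashingsets}). Then verify, using Theorem~\ref{theo:justified-cas-char}: (a) $\casmap$ is satisfiable; (b) each positive atom of each $S_{\stru{\alpha,\ee}}$ holds in $\hat{\I}_{\K'}(\casmap,\nu)$; (c) for each negated atom $\neg\beta\in S_{\stru{\alpha,\ee}}$ the $\DLliteR$ KB obtained from $\K'$ by fixing the exceptions according to $\casmap$ and adding $\beta$ is unsatisfiable. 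Since fixing the exceptions turns $\K'$ into an ordinary $\DLliteR$ KB, (a)--(c) are polynomial (materialization and satisfiability of $\DLliteR$ are tractable), and $\hat{\I}_{\K'}(\casmap,\nu)$ restricted to a Skolem-unfolding of depth $|Q|$ is a polynomial-size structure. Finally accept iff $Q$ has no homomorphism into that structure, which is a co-NP check. Thus the complement is an $\exists$-guess followed by a $\forall$-check with polynomial-time base predicates, hence in $\Sigma^p_2$, so the problem is in $\Pi^p_2$.

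\textit{Membership, data complexity (\conp).} With TBox, RBox and $Q$ fixed, the above guess for $\nu$, $\casmap$ and the clashing sets is still polynomial in the ABox, the verification (a)--(c) is polynomial (FO-rewritability of $\DLliteR$), and checking $\hat{\I}_{\K'}(\casmap,\nu)\not\models Q$ is polynomial because $Q$ is fixed. Hence the complement is in \np\ and the problem is in \conp.

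\textit{Hardness.} The \conp\ lower bound, for both combined and data complexity, is inherited from Theorem~\ref{theo:DKB-entail-conp}, since an instance query $A(a)$ is a Boolean CQ. For $\Pi^p_2$-hardness in combined complexity, the plan is to reduce from validity of a $\forall\exists$-quantified Boolean formula $\Phi=\forall\vec{u}\exists\vec{v}.\psi(\vec{u},\vec{v})$ with $\psi$ in DNF: defeasible axioms are used, as in the \conp-hardness construction of Theorem~\ref{theo:DKB-entail-conp} and analogously to the $\SROIQrl$ encoding of~\cite{BozzatoES:18}, so that the DKB-models of $\DKB$ correspond exactly to the assignments to $\vec{u}$; and the BCQ $Q$ is built, exploiting \np-hardness of BCQ matching over $\DLliteR$ KBs~\cite{CalvaneseGLLR07}, so that $Q$ has a match in the model determined by $\vec{u}$ iff $\exists\vec{v}.\psi(\vec{u},\vec{v})$ holds; then $\DKB\models Q$ iff $\Phi$ is valid. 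UNA is expressible in $\DLliteR$, so all bounds hold with and without it.

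The step I expect to be hardest is the $\Pi^p_2$-hardness reduction: composing the branching over $\vec u$ (realized through alternative justified clashing assumptions) with an encoding of the inner $\exists\vec v.\psi$ as a query match so that the two interact correctly inside every DKB-model. On the membership side, the delicate point is to argue that the interaction of the a~priori infinite, Skolemized least model $\hat{\I}_\K(\casmap,\nu)$ with CQ matching stays within co-NP (resp.\ \ptime) via a polynomial-depth unfolding, and that justification checking via Theorem~\ref{theo:justified-cas-char} is genuinely polynomial, so that no extra quantifier alternation creeps in.
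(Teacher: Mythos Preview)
Your membership arguments track the paper's closely: guess $\casmap$ over $N_\K$ and a name assignment $\nu$, verify that $\casmap$ is justified via the least-model characterization (Theorem~\ref{theo:justified-cas-char}), and test whether $Q$ matches in the least CAS-model restricted to a polynomial-depth Skolem unfolding (an \np\ oracle call in combined complexity, polynomial when $Q$ is fixed). This is essentially the paper's argument.

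Your hardness plan, however, has a genuine gap. You propose reducing from validity of $\Phi=\forall\vec{u}\,\exists\vec{v}\,\psi(\vec{u},\vec{v})$ with $\psi$ in \emph{DNF}. For any fixed assignment $\sigma$ to $\vec{u}$, deciding $\exists\vec{v}\,\psi(\sigma,\vec{v})$ with $\psi$ in DNF is polynomial: check whether some term has all its $\vec{u}$-literals satisfied by $\sigma$ and no contradictory $\vec{v}$-literals. Hence validity of such $\Phi$ is in \conp, and your reduction cannot establish $\Pi^p_2$-hardness. You would need $\psi$ in CNF so that the inner existential problem is genuinely \np-hard; but then the real work is encoding ``some assignment to $\vec{v}$ satisfies every clause'' as a single conjunctive-query match, and your plan does not indicate how this is done.

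The paper takes a different route that fits CQ answering more naturally: it reduces from the problem of deciding, for a graph $G$, whether every color assignment to the degree-$1$ nodes extends to a $3$-coloring of $G$ (shown $\Pi^p_2$-complete by a separate lemma). Defeasible role assertions force each justified clashing assumption to encode a color choice for the degree-$1$ nodes (realizing the outer $\forall$), while the BCQ encodes $G$ itself and asks for a full $3$-coloring compatible with that choice (the inner $\exists$). The advantage is that the inner \np\ problem is literally ``find a homomorphism'', which is precisely what a CQ match is, so no extra gadgetry is needed to simulate clause satisfaction. A corrected QBF-with-CNF reduction along your lines is workable in principle, but you would have to supply that gadget.
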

\begin{proof}[Proof (Sketch)]
To start with (i), as for  membership in $\Pi^p_2$, to refute $Q$ we
can guess for a justified
CAS-model $\I_{\CAS} = \stru{\I, \casmap}$ such that
$\I_{\CAS} \not\models Q$ 
the clashing assumption $\casmap$ on $N_\DKB$ and a name
assignment $\nu$.
Since $\DKB$ is exception safe, we can decide
in \nlogspace\ whether $\casmap$ is satisfiable relative to $\nu$
using the entailment method for positive and negative assertions in 
the proof of Theorem~\ref{theo:DKB-entail-conp}; note that $\nu$ can be
pushed to $\DKB$
and indeed can give rise to a desired justified CAS-model $\I_{\CAS}$ of $\DKB$. We then can use an
\np\ oracle to check whether for some
polynomial number of Skolem terms $ST$, 
where the number depends on $Q$ and $\DKB$,
the query has a match on $N_\K \cup ST$ in a least CAS-model
$\I_{\CAS} = \stru{\I, \casmap}$ of $\K$;
to this end, each atom $A(t)$ resp.\ $R(t,t')$ in the match must be 
derived by applying the axioms (that is, by unraveling $\I_{\CAS}^{N_\K}$); 
this will ensure that a match exists
in each CAS-model $\I_{\CAS} = \stru{\I, \casmap}$ of $\K$.
If the oracle answer is no, then some $\I_{\CAS}$ such that
$\I_{\CAS}\not\models Q$ exists. 
Consequently, refuting $\DKB \models Q$ is in $\Sigma^p_2$, which proves the membership part.


The $\Pi^p_2$-hardness of (i) is shown by a reduction from a
generalization of deciding whether a graph is 3-colorable: given an
(undirected) graph $G=(V,E)$, can every color assignment to the nodes of degree 1
in $G$ (i.e., source nodes) be extended to a 3-coloring of $G$? This
problem is $\Pi^p_2$-complete (see Lemma~\ref{lem:sink-3col} in Appendix~\ref{sec:complexity-appendix}).
The construction for such reduction is provided in the
full proof in Appendix~\ref{sec:complexity-appendix}.

(ii) As for data complexity, we note that the check where 
$Q$ has no match in any $\I_{\CAS}''$ is feasible in polynomial time,
as the number of variables in the query is fixed and thus only
constantly many Skolem terms $ST$ have to be added to
$N_\K$ for a query match in a least CAS-model $\I_{\CAS} = \stru{\I,
\casmap}$ of $\K$, for which only polynomially many possibilities
exist; furthermore, the inference of atoms $A(t)$ resp.\ $R(t,t')$ is
feasible in polynomial time. Hence, the
problem is in \conp. The \conp-hardness follows from Theorem~\ref{theo:DKB-entail-conp}.
\end{proof}

\section{Reasoning on Unnamed Individuals}
\label{sec:unnamed}

%
In the sections above, we have concentrated on exception safe DKBs,
where no exceptions on unnamed individuals are possible. However, 
this is not a real limitation in principle, as unnamed individuals may
be named. 
Specifically, we note the following property of $n$-bounded CAS
models (recall from Definition~\ref{defn:n-bounded} that 
$\mi{uni}_{\K}(\I_{\CAS})$ are the domain elements in clashing assumptions not
named by individuals in $\K$):

\begin{proposition}
\label{prop:name-exceptions}
Let $\I_{\CAS} = \stru{\I, \casmap}$ be a CAS-model of a DKB $\K$ such
that $\mi{uni}_{\K}(\I_{\CAS}) = \{ e_1,\ldots, e_m \}$. 
Let $c_1,\ldots,c_m$ be fresh individual names, and $A$ be a fresh concept. Then, $\I'_{\CAS} =
\stru{\I', \casmap}$ where $c^{\I'}_i = e_i$, $i=1,\ldots,m$ and
$A^{\I'} = \{ e_1,\ldots, e_m\}$ is a CAS-model of $\K' = \K \cup \{
A(c_1), \ldots, A(c_m)\}$.
\end{proposition}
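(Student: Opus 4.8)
The plan is to exploit that $\I'$ is a \emph{conservative extension} of $\I$: it agrees with $\I$ on every symbol of $\Sigma$ and on every Skolem function of $\K$, and differs only in the denotations it assigns to the fresh concept name $A$ and the fresh individual names $c_1,\ldots,c_m$, none of which occurs in $\K$. Freshness is precisely what makes this work --- if $A$ occurred in $\K$, setting $A^{\I'}=\{e_1,\ldots,e_m\}$ could break an axiom about $A$, and if some $c_i$ occurred in $\K$, setting $c_i^{\I'}=e_i$ could clash with $c_i^{\I}$. Because of this agreement, every model obligation of $\K$ transfers to $\I'$ verbatim, while the three new ABox facts $A(c_i)$ hold by the very definition of $A^{\I'}$.

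First I would check that $\I'_{\CAS}=\stru{\I',\casmap}$ is a well-formed CAS-interpretation over the extended vocabulary $\Sigma'=\Sigma\cup\{A,c_1,\ldots,c_m\}$. Since $\Lcal_\Sigma\subseteq\Lcal_{\Sigma'}$, every clashing assumption in $\casmap$ remains a clashing assumption over $\Sigma'$, so the pair $\stru{\I',\casmap}$ is legitimate. I would also observe that $\I'$ still respects the SNA, since $\Delta^{\I'}=\Delta^{\I}=\NI_S$, all standard names keep their denotations, and we only add denotations for the new non-standard constants $c_i$; moreover, since $\mi{uni}_\K(\I_{\CAS})=\{e_1,\ldots,e_m\}$ is, by Definition~\ref{defn:n-bounded}, a set disjoint from $\{c^{\I}\mid c\in N_\K\}$, the $e_i$ are pairwise distinct and differ from the denotations of the individuals of $\K$, so the assignment $c_i\mapsto e_i$ is also compatible with the UNA.

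Next I would verify the two clauses of Definition~\ref{def:cas-model} for $\K'$. For clause~(i): the strict axioms of $\K'$ are the strict axioms of $\K$ together with the new assertions $A(c_i)$. Each strict axiom of $\K$ mentions only $\Sigma$-symbols and Skolem functions thereof, hence is satisfied by $\I'$ precisely because it is satisfied by $\I$ and $\I'$ agrees with $\I$ there; and $\I'\models A(c_i)$ because $c_i^{\I'}=e_i\in\{e_1,\ldots,e_m\}=A^{\I'}$. For clause~(ii): the defeasible axioms of $\K'$ are exactly those of $\K$, and $\casmap$ is unchanged, so for each $\default(\alpha)\in\K'$ the set of non-exceptional tuples $\vec{d}\notin\{\ee\mid\stru{\alpha,\ee}\in\casmap\}$ over $\NI_S$ coincides with the one for $\K$; for any such $\vec{d}$, the FO-translation $\phi_\alpha(\vec{d})$ involves only $\Sigma$-predicates and Skolem functions, so $\I'\models\phi_\alpha(\vec{d})$ iff $\I\models\phi_\alpha(\vec{d})$, and the latter holds because $\I_{\CAS}\models\K$. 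Combining the two clauses yields $\I'_{\CAS}\models\K'$.

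I do not expect any genuine obstacle here. The only points that need a moment of care are (1) that the Skolem functions appearing in the translations $\phi_\alpha$ are interpreted identically in $\I$ and $\I'$, which holds because we only alter fresh relational and constant symbols, and (2) that passing to the larger vocabulary $\Sigma'$ does not disturb the status of the clashing assumptions, which is immediate from $\Lcal_\Sigma\subseteq\Lcal_{\Sigma'}$. Finally, note that the statement asks only for a CAS-model, not a justified one, so no reasoning about derivability of clashing sets across $\NI$-congruent interpretations is required.
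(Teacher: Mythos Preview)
Your proof is correct and complete. The paper does not actually give a proof of this proposition; it is stated in Section~\ref{sec:unnamed} as a direct observation and then immediately used to motivate naming unnamed exception elements. Your argument spells out exactly the content that makes the observation valid: the fresh symbols $A,c_1,\ldots,c_m$ do not occur in any axiom of $\K$ or in any $\phi_\alpha$, so $\I'$ satisfies the same strict and defeasible constraints as $\I$, and the added assertions $A(c_i)$ hold by construction. The care you take about SNA/UNA compatibility and about the unchanged Skolem interpretations is appropriate but not strictly required for the bare claim, and your closing remark that only the CAS-model property (not justification) is asked for is exactly right.
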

That is, we can name unnamed individuals in clashing assumptions, and
in this way turn an $n$-bounded CAS-model into a $0$-bounded CAS-model. 
In particular, if $n$ is polynomial in the size of $\K$
we can do this with polynomial overhead.

%

However, when we reason from a DKB $\DKB$ under a (named)
clashing assumption $\casmap$, the issue rises whether for a Skolem
term $f(t)$ and a clashing assumption $\stru{\alpha,e}$, where $e$ is
named by an individual $a$, say, the exception is applicable (if $f(t)
= a$) or not (if $f(t)\neq a$) in a model. This in fact complicates
reasoning, and to decide whether a given partial CAS-interpretation
$\I_{\CAS} = \stru{\Ical,\casmap}$
where all exceptions are individuals in $\K$ can be extended to some
justified CAS-model of $\DKB$ is no longer easy to accomplish. If
$\DKB$ is $n$-de safe, we have to consider such terms $f_1(t_1)$,
\ldots, $f_m(t_m)$ for $m\leq n$ and possibly collapse them with some individuals in
$\DKB$. This leads to an exponential explosion,  
even if $n = O(|\K|^k)$ is 
polynomial in the size of $\K$. As it turns out, already deciding
whether $\K$ has for $\casmap$ some CAS-model is intractable in this
setting, which can be decided by a proper guess of the (in)equality for all $f_i(t_i)$.
More precisely, the following property can be shown.

\begin{proposition}
\label{prop:n-bounded-mc}
Given an $n$-de safe DKB $\DKB$, where $n$ is polynomial in the size
of $\DKB$,  and a
clashing assumption $\casmap$ defined on $N_\K$,
deciding whether $\K$ has (i) some arbitrary CAS-model 
resp.\ (ii) some justified CAS-model of form $\I_{\CAS} = \stru{\I,\casmap}$ is 
\np-complete resp.\ $D^p$-complete%
\footnote{$D^p$ consists loosely speaking of the
``conjunction'' of independent instances $I_1$ and $I_2$ of two problems
in \np\ and \conp, respectively (e.g., SAT-UNSAT).}
in general but feasible in
polynomial time if 
$n$ is bounded by a constant.
\end{proposition}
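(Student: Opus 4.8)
The plan is to exploit $n$-de safety to reduce both problems to a \emph{bounded identification guess} among Skolem terms on top of ordinary $\DLliteR$ reasoning, working under UNA so that the name assignment on $N_\K$ is fixed. First I would extract, from the unfolding underlying Proposition~\ref{prop:complexity-recognize-safe}, a set $T=\{t_1,\dots,t_m\}$ of Skolem terms with $m\le n$ such that every positive assertion derivable from $\K_s$ that can occur in a clashing set of a defeasible axiom of $\DKB$ has all its arguments in $N_\K\cup T$; all remaining Skolem terms never carry exceptions and can be treated collectively, like the auxiliary individuals $aux^{\alpha}$ of Section~\ref{sec:translation}. The only degrees of freedom that affect $\casmap$ are then the mutual (in)equalities among $t_1,\dots,t_m$ and, for each $t_i$, whether $t_i$ is identified with some constant of $N_\K$ and with which one; I call such a choice an \emph{identification pattern} $\rho$, and there are at most $(|N_\K|+m)^m$ of them, polynomially many once $n$ is a constant. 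Given $\rho$, one pushes the induced (in)equalities into $\K$ (as in Proposition~\ref{prop:pushing-eq} and the proof of Theorem~\ref{theo:DKB-entail-conp}), which fixes exactly which domain elements the assumptions in $\casmap$ override, turns every $\default(\alpha)$ into its strict version carved out on those elements, and yields an ordinary $\DLliteR$ KB $\K_\rho$; by FO-rewritability of $\DLliteR$~\cite{CalvaneseGLLR07}, both satisfiability of $\K_\rho$ and derivability from $\K_\rho$ of any positive assertion over $N_\K\cup T$ are decidable in \nlogspace.

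For part~(i), I would decide existence of a CAS-model $\stru{\I,\casmap}$ by guessing $\rho$ and checking that $\K_\rho$ is satisfiable: this is sound and complete because every model of $\K_\rho$ extends (via the collective treatment of the other Skolem terms) to a CAS-model with $\casmap$, and conversely each CAS-model induces a $\rho$ with $\K_\rho$ satisfiable. The guess is polynomial and the check is in \nlogspace, so the problem lies in \np; when $n$ is a constant, enumerating the polynomially many $\rho$ puts it in \ptime. For \np-hardness I would reduce from 3-SAT, adapting the \np-hardness proof for $0$-bounded justified model existence for defeasible $\ELbot$ in~\cite{DBLP:conf/birthday/BozzatoES19}: each variable is represented by a Skolem term produced by an existential axiom from a single named individual, its identification with one of two distinguished named constants encodes a truth value, and auxiliary ``flag'' concepts together with concept/role disjointness axioms make an identification pattern consistent exactly when the encoded assignment satisfies all clauses; $\casmap$ is taken minimal, and the construction is arranged so that the DKB is $n$-de safe with $n$ the number of variables.

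For part~(ii), I would write ``$\DKB$ has a justified CAS-model of form $\stru{\I,\casmap}$'' as the conjunction of (A) ``$\DKB$ has a CAS-model of form $\stru{\I,\casmap}$'', which is exactly the \np\ problem of part~(i), and (B) ``$\casmap$ is justifiable''; the two are independent because justifiability quantifies over \emph{all} $\NI$-congruent CAS-models (by the definition of justified CAS-model, cf.\ the $n$-de safe analogue of Theorem~\ref{theo:justified-cas-char}), so a witness for (B) works in particular for any model supplied by (A). By Table~\ref{tab:clashingsets} each $\stru{\alpha,\ee}\in\casmap$ has an essentially unique minimal clashing set $S_{\stru{\alpha,\ee}}$, and (B) holds iff for every such assumption: each positive $\beta\in S_{\stru{\alpha,\ee}}$ holds in every $\NI$-congruent CAS-model with $\casmap$, and for each $\neg\beta\in S_{\stru{\alpha,\ee}}$ the atom $\beta$ holds in no $\NI$-congruent CAS-model with $\casmap$. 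Both conditions are of the form ``for all identification patterns $\rho$ that yield a CAS-model, \dots'', whose complement is the \np\ task of guessing $\rho$ plus an \nlogspace\ derivability/consistency check; hence each is in \conp, and since \conp\ is closed under polynomially many conjunctions and disjunctions, (B)$\in\conp$ and the whole problem lies in $D^p$. For constant $n$ both (A) and (B) drop to \ptime, as the universal quantifier over $\rho$ becomes a polynomial enumeration. For $D^p$-hardness I would reduce from the canonical $D^p$-complete problem SAT--UNSAT: given $(\phi,\psi)$, build on disjoint signatures a part-(i) gadget whose CAS-model existence encodes satisfiability of $\phi$, together with a single clashing assumption $\stru{\alpha,a}$ whose minimal clashing set contains a negative literal $\neg\beta$ with $\beta$ satisfiable iff $\psi$ is, so that this assumption is justifiable iff $\psi$ is unsatisfiable; the union is $n$-de safe with $n$ polynomial and has a justified CAS-model for the combined $\casmap$ iff $\phi$ is satisfiable and $\psi$ is not.

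The main obstacle I expect is twofold, and more conceptual than computational. First, Theorem~\ref{theo:justified-cas-char} and the least-model machinery of Section~\ref{sec:properties} are stated for exception-safe DKBs and must be re-established relative to a fixed identification pattern in the $n$-de safe case; in particular one has to verify that, although a clashing assumption $\stru{\alpha,a}$ is stated over a \emph{named} individual, the Skolem terms that may collapse onto $a$ do not open hidden ways to (in)validate the clashing set, so that the reduction of (B) to the local conditions above stays sound. Second, in the hardness reductions one must force the identification guessing to genuinely encode the propositional instance even though $\DLliteR$ offers neither nominals nor functionality; this is delicate and is handled by the auxiliary flag concepts and disjointness axioms sketched above, while simultaneously keeping the Skolem-chain depth (hence the bound $n$) polynomial so that the constructed DKBs are really $n$-de safe.
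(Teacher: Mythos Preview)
Your upper-bound arguments are essentially the paper's: compute the polynomially many Skolem terms $t_1,\dots,t_m$ that can feed into clashing sets (via Proposition~\ref{prop:complexity-recognize-safe}), guess their (in)equalities with the individuals naming exceptions in $\casmap$, and then run a $\DLliteR$ (non)derivability check relative to that guess. Your $D^p$ decomposition into the \np\ part ``some CAS-model for $\casmap$ exists'' and the \conp\ part ``every $\stru{\alpha,\ee}\in\casmap$ has a clashing set satisfied in all such CAS-models'' is exactly how the paper argues (ii), and the constant-$n$ collapse to \ptime\ by enumerating the identification patterns is the same.

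Where you diverge from the paper is in the hardness reductions, and here there is a real gap. The paper reduces from graph $3$-colorability (and $3$COL--$3$UNCOL for $D^p$), not from $3$-SAT/SAT--UNSAT, and the reason is structural: $3$-coloring has \emph{binary} constraints, which $\DLliteR$'s roles and disjointness axioms can express, whereas your $3$-SAT encoding must simulate ternary clause constraints with only unary concepts and binary roles --- you give no gadget for this, and it is not obvious one exists within $\DLliteR$. More importantly, the mechanism you leave as ``delicate'' is precisely the crux of the paper's construction, and it is not the ``flag concepts plus disjointness'' device you sketch. The paper forces the range of every relevant role to lie among the named individuals by (a) asserting a domain concept $\mi{Dom}$ on exactly the named individuals, (b) including the defeasible axiom $\default(\mi{Dom}\isa\bot)$, (c) putting into the given $\casmap$ an exception to this axiom for each named individual and \emph{no others}, and (d) adding $\exists X^-\isa \mi{Dom}$ for every role $X$. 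Any Skolem successor that is not identified with a named individual then violates $\mi{Dom}\isa\bot$ without an available exception, so the identification guess is forced; negative role assertions $\neg X(a,b)$ then narrow the admissible targets to the intended few. Without this domain-closure trick (which crucially exploits that $\casmap$ is part of the input), disjointness axioms alone cannot prevent a Skolem term from denoting a fresh unnamed element, so your ``identification with one of two distinguished named constants'' is not enforced and the SAT encoding collapses. Your note that $\casmap$ is ``taken minimal'' suggests you have not built this into $\casmap$; in the paper's reduction $\casmap$ is far from minimal --- it contains one exception per named individual for the domain-closure axiom.

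In short: the membership side is fine and matches the paper; for hardness, switching to a graph-coloring source and adopting the $\default(\mi{Dom}\isa\bot)$ domain-closure mechanism (with the exceptions placed in $\casmap$) is what actually makes the reduction go through in $\DLliteR$.
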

As regards properties of justified CAS-models,
Proposition~\ref{prop:model-intersection} readily generalizes from
exception-safe to $n$-de safe DKBs if the (in)equalities of the Skolem
terms $t_1,\ldots,t_m$ with individuals are fixed; hence, a least 
justified CAS-model exists relative to such fixed (in)equalities and a
name assignment $\nu$.

The intractability in Proposition~\ref{prop:n-bounded-mc}
holds even under data complexity and when $\K$ is $k$-chain bounded
for a small constant $k$. Furthermore, we obtain as a side
result from the proof that DKB-model checking, i.e., deciding whether an
interpretation $\I$ is a DKB model of a given $\K$, is
\conp-hard and for such DKBs \conp-complete.


As a consequence of the previous result, reasoning when a few (constantly many chains) to
exceptions exist is not more expensive than if no such chains exist;
for polynomially many chains, axiom inference gets more expensive.
%
\begin{theorem}
\label{theo:comp-n-bounded}
Given an $n$-de safe DKB $\DKB$, where $n$ is bounded by a polynomial
in $|\DKB|$, (i) deciding $\DKB\models\alpha$ for an axiom
$\alpha$ and (ii) BCQ answering $\DKB\models Q$ are both
$\Pi^p_2$-complete. In case $n$ is bounded by a constant,  
(i) is $\conp$-complete while (ii) remains $\Pi^p_2$-hard.
\end{theorem}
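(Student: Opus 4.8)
The plan is to reuse the two ingredients already available: Proposition~\ref{prop:n-bounded-mc}, which pins down the cost of deciding existence of (justified) CAS-models for a given clashing assumption once the (in)equalities of the at most $n$ admissible Skolem terms with individuals are fixed, and the lower bounds of Theorems~\ref{theo:DKB-entail-conp} and~\ref{theo:CKR-CQ-answering}. We treat membership and hardness separately, and then specialise to constant $n$.

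\emph{Membership in $\Pi^p_2$ for (i) and (ii), $n$ polynomial.} To refute $\DKB\models\alpha$ (resp.\ $\DKB\models Q$), we guess, as in the proofs of Theorems~\ref{theo:DKB-entail-conp} and~\ref{theo:CKR-CQ-answering}, a clashing assumption $\casmap$ on $N_\K$, a name assignment $\nu$, and, additionally, an (in)equality pattern $\pi$ that fixes for the at most $n$ Skolem terms $t_1,\dots,t_m$ relevant under $n$-de safety which of them coincide with individuals of $\DKB$ and with each other; we also guess a clashing set $S_{\stru{\beta,\ee}}$ for each $\stru{\beta,\ee}\in\casmap$. Once $\pi$ is fixed, Proposition~\ref{prop:model-intersection} generalises (as noted in Section~\ref{sec:unnamed}), so there is a well-defined least CAS-model relative to $\casmap,\nu,\pi$, and by Proposition~\ref{prop:n-bounded-mc} the existence of an (arbitrary resp.\ justified) CAS-model with this data is an \np\ resp.\ $D^p$ test. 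Checking that the guessed clashing sets are derivable from $\DKB$ under $\casmap$ relative to $\nu,\pi$ and that their negative parts cannot be violated by any $\NI$-congruent CAS-model with pattern $\pi$ are \np/\conp\ tests; checking that the resulting model violates $\alpha$ (as in Theorem~\ref{theo:DKB-entail-conp}) resp.\ that $Q$ has no match in the least such model over polynomially many candidate Skolem terms (one more \np\ oracle call, as in Theorem~\ref{theo:CKR-CQ-answering}) adds nothing beyond an \np-oracle call. Hence refutation is in $\Sigma^p_2$, so both (i) and (ii) are in $\Pi^p_2$.

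\emph{$\Pi^p_2$-hardness, $n$ polynomial.} For (ii) this is inherited from Theorem~\ref{theo:CKR-CQ-answering}, since exception-safe DKBs are the case $n=0$. For (i) we reduce from the $\Pi^p_2$-complete problem of Lemma~\ref{lem:sink-3col} (every colour assignment to the degree-one nodes of a graph extends to a $3$-colouring). The construction lets the \emph{universal} choice over the source colours be realised by the freedom, in a DKB-model, of which of polynomially many unnamed Skolem individuals receive an exception on a designated defeasible axiom and how these are identified with named colour slots (this is precisely where $n$ must be allowed to grow with the instance), while the \emph{existential} colourability of the remaining graph is forced in every such model by a clashing-set condition, so that a fixed target assertion $\alpha$ is entailed iff the quantified statement holds. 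The gadget mirrors the one used for Theorem~\ref{theo:CKR-CQ-answering} but replaces the query match by a single inconsistency witness; full details go to Appendix~\ref{sec:complexity-appendix}.

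\emph{Constant $n$.} Then the admissible patterns $\pi$ over $N_\K$ are polynomially many, and by Proposition~\ref{prop:n-bounded-mc} every model-existence and justification check relative to a fixed $\pi$ is polynomial. Thus refuting $\DKB\models\alpha$ only requires guessing $\casmap$ on $N_\K$, $\nu$, and the clashing sets, followed by polynomially many polynomial-time checks (one per pattern), placing refutation in \np\ and (i) in \conp; \conp-hardness carries over from the $n=0$ case of Theorem~\ref{theo:DKB-entail-conp}. For (ii), the membership argument above still only yields $\Pi^p_2$ (the \np\ oracle call for the query match does not vanish), while $\Pi^p_2$-hardness is again inherited from Theorem~\ref{theo:CKR-CQ-answering}; hence (ii) remains $\Pi^p_2$-hard (in fact $\Pi^p_2$-complete). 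The main obstacle is the $\Pi^p_2$-hardness of (i) for polynomial $n$: one must craft a DKB whose genuine second-level alternation arises purely from overriding defeasible axioms on polynomially many unnamed individuals, ensuring both that the ``for all'' quantifier is faithfully captured by the space of justified clashing assignments together with their pattern choices and that the $n$-de safety bound is tuned so that exactly the needed Skolem chains — and no more — are available, preventing any unintended model from collapsing the alternation.
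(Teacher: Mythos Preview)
Your membership arguments for both (i) and (ii) are essentially the paper's: guess $\casmap$, $\nu$, and an (in)equality pattern for the at most $n$ Skolem terms, then invoke Proposition~\ref{prop:n-bounded-mc} and the machinery of Theorems~\ref{theo:DKB-entail-conp} and~\ref{theo:CKR-CQ-answering} under an \np\ oracle. Likewise, inheriting the lower bounds for (ii) and for (i) in the constant-$n$ case from the exception-safe results is exactly what the paper does.

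The gap is in your $\Pi^p_2$-hardness sketch for (i) with polynomial $n$. You write that the universal quantifier over source colourings is ``realised by the freedom \dots\ of which of polynomially many unnamed Skolem individuals receive an exception'' and that ``this is precisely where $n$ must be allowed to grow.'' That inverts the roles the two quantifiers play. In the paper's construction (which extends the gadget from the proof of Proposition~\ref{prop:n-bounded-mc}), the universal choice of a source colouring $\rho$ is encoded by exceptions to axioms $\default(S\isa F_C)$ on \emph{named} individuals $check_{d_j}$; no unnamed element is involved there. The polynomially many Skolem terms---and hence the need for polynomial $n$---arise from the existential axioms $V\isa\exists R$, $\exists R^-\isa\exists\mi{RNeighbor}$, $\exists R^-\isa\exists\mi{RCheck}$, etc., applied to the polynomially many edge-endpoint individuals $v_{i,j}$. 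It is the freedom in how these Skolem terms are identified with the named targets $col_{i,j}$, $c1_{i,j}$, $c2_{i,j}$, $esc$ that makes the \emph{inner} question (``does $\rho$ extend to a 3-colouring?'') an \np-hard CAS-model existence test, per Proposition~\ref{prop:n-bounded-mc}. The entailed atom is then $E(esc)$: the exception to $\default(E(esc))$ is justified precisely when $\rho$ is not extendible, so $\K'\models E(esc)$ iff every $\rho$ extends.

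Your sketch, by contrast, proposes to ``mirror the gadget used for Theorem~\ref{theo:CKR-CQ-answering} but replace the query match by a single inconsistency witness.'' That gadget is exception-safe and pushes the graph structure into the \emph{query}; once the query is gone you have no place left to encode the colouring constraints, and no mechanism by which a fixed assertion $\alpha$ can depend on an \np-hard inner problem. The paper instead encodes the graph structure \emph{in the DKB} via existential axioms plus negative role assertions restricting their ranges, so that satisfiability of a fixed $\casmap$ already hides an \np\ problem. Without that ingredient your reduction does not produce a second alternation level for axiom entailment.
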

The results in the theorem hold in fact under data complexity. 
Intuitively, the complexity of BCQ-Answering does not increase in the 
$n$-de bounded case, as checking whether a guess for a 
clashing assumption $\casmap$ that allows to refute the query $Q$ does
not add further complexity in general, since checking whether a 
query $Q$ has no match in an ABox is already \conp-hard. It does so,
however, if the query $Q$ is just an assertion $A(a)$.

\smallskip\noindent
{\bf Towards an ASP encoding.}\; A possible approach for extending
the ASP encoding in Section~\ref{sec:translation} 
in this regard, then, 
would be as follows. We may consider the different ``equality
environments'', where each environment $e$ is given by 
the condition $f_i(t_i) = a_{i_j}$ for some individual $a_{i_j}$, 
using test environments similar as in~\cite{BozzatoES:18}: for each $f_i(t_i)$, we introduce
a predicate $sk_i$ and
an argument $x_i$ in the predicates for the derivation, where
$sk_i$ and $x_i$ can take
any individual name $a_i$ or $f_i(t_i)$; we add propagation rules
that push equalities, of form
\begin{align}
\instd(x_i,z,x_1,\ldots,x_m) \rif \instd(x,z,x_1,\ldots,x_m), sk_i(x).
\label{rule-outro1}\\
\non\instd(x_i,z,x_1,\ldots,x_m) \rif \comment{\non\instd(x,z,x_1,\ldots,x_m)},
sk_i(x). \label{rule-outro2}
\end{align}
and similar for $\tripled$; furthermore, with a technique similar
as in the rules (pdlr-allnrel1) -- (pdlr-allnrel3) for $\AllNRel$ we
can check whether a derivation succeeds for all
environments. As an alternative, we may consider using (recursive) aggregates to
perform this check. Furthermore, the set of auxiliary constants is extended
with further constants that allow to build the Skolem paths $f_1(t_1)$,
\ldots, $f_m(t_m)$. For $m$ bounded by a constant, this would
lead to a fixed program, where the Skolem chains are provided as data;
the latter might be determined inside an ASP encoding as well, which
however is more involving.

\rewnote{2}{rules (2) and (3) are weird. They have the same body but complementary heads with respect to strong negation. This means that if the body is true, they yield an inconsistency. Given that this could be better achieved by adding a constraint, I wonder whether there may be an error here. Or is it perhaps the they are not considered at the same time? If so, this should be better clarified.}
\lbnote{I changed the negative rule with a negative body.}


\section{Related Work}
\label{sec:related}

The relation of the justified exception approach to nonmonotonic
description logics was discussed in \cite{BozzatoES:18}, where in
particular an in-depth comparison w.r.t.\
typicality in DLs \cite{GiordanoGOP:13},
normality \cite{DBLP:journals/jair/BonattiFS11} and
overriding \cite{BonattiFPS:15} was given. 
A distinctive feature of our approach, linked to the 
interpretation of exception candidates as different clashing assumptions,
is the possibility to ``reason by cases'' inside the alternative
justified models \comment{(as we have demonstrated over the Nixon Diamond problem in 
Example~\ref{ex:nixon})}.
%
Note that 
we do not 
consider a preference ordering across defeasible axioms,
but all alternative interpretations that justify their
clashing assumptions (cf. also~(Bozzato, Serafini and Eiter~\citeyearNP{DBLP:conf/kr/BozzatoSE18}) where a preference
is defined by the KB contextual structure).
%

\rewnote{1}{The paper also in the current form often refers to (Bozzato et al. 2018). Thus, it would be useful to precisely confront against (Bozzato et al. 2018). Some lines are devoted to it in Section 8, but this part should be improved.}
\comment{In particular, compared to~\cite{BozzatoES:18}, in this paper we work on a different language:
particularly, $\DLliteR$ allows for reasoning with unnamed individuals and their use in 
inverse roles, as detailed in the sections above. Moreover, we are not considering 
contextual aspects of the previous works, e.g., 
knowledge propagation by $\eval$ operator
and local interpretation of knowledge. Note that, with respect to the notion of defeasibility
expressed on the CKR contexts structure, we have a slightly different interpretation for DKBs: 
while in CKR we defined defeasibility over the inheritance from more general to more specific contexts,
in DKBs we consider exceptions on the ``local'' application of defeasible axioms to the elements of the
knowledge base. We remark that the results shown 
for DKBs 
could be then extended to CKRs in order to study the interaction of
$\DLliteR$ features with contextual structures and overriding preferences.}

The introduction of non-monotonic features in the $\DLlite$ family
and, more in general, to low complexity DLs has been 
the subject of many works, mostly with the goal of 
preserving the low complexity properties of the base logic in the extension.
%
For example, in~\cite{DBLP:journals/jair/BonattiFS11} an in-depth study of the 
complexity of reasoning with circumscription in $\DLliteR$ and
$\cal{EL}$ was presented: 
the idea is to verify whether syntactic restrictions of these languages 
can be useful to limit the complexity of the non-monotonic version of these languages.
The work considers defeasibility on inclusion axioms of the kind $C \isa_n D$,
which intuitively can be read as ``an instance of $C$ is \emph{normally} an instance of $D$''.
Conflicts across defeasible inclusions are solved by providing a priority on such axioms:
an option to define priority is to use the specificity of defeasible inclusions, 
that is $C_1 \isa_n D_1$ is preferred to $C_2 \isa_n D_2$ if $C_1$ is subsumed by $C_2$.
Different fragments of $\DLliteR$ and $\cal{EL}$ are considered to limit the 
complexity of the reasoning problems. In the case of $\DLliteR$,
it is shown that such syntactic restrictions allow to limit the complexity of
instance checking to $\Pi^p_2$.
For $\cal{EL}$, its extension to circumscription is 
ExpTime-hard and more restrictions are needed to limit its complexity
to the second level of the polynomial hierarchy.

Similarly, in~\cite{GiordanoGOP:11} the authors studied the complexity of the
application of their typicality approach to low complexity description logics.
They note that the introduction of the typicality operator to $\ALC$
leads to an increase in complexity of reasoning (query entailment becomes $\textsc{Co-Nexp}^{NP}$):
thus, their goal is to find (fragments of) low level description logics where 
the extension to typicality has a limited impact on the complexity of entailment.
In~\cite{GiordanoGOP:11}, an extension to typicality of the DLs $\DLlite_c$ and $\cal{EL}^\bot$
is proposed and their complexity properties are studied.
It is shown that, in the case of $\cal{EL}^\bot$, the extension with 
typical concept inclusions (called $\Ecal\Lcal^\bot T_{min}$)
is $\textsc{ExpTime}$-hard. However, by limiting to \emph{left local} KBs in $\cal{EL}^\bot$
(i.e., using a fragment of $\cal{EL}^\bot$ that restricts the form of left-side concepts in concept inclusions),
one can show that complexity can be limited to $\Pi^p_2$.
Similarly, the extension of $\DLlite_c$ can be shown to have the same 
$\Pi^p_2$ complexity upper bound. Notably, the complexity bounds for 
$\cal{EL}^\bot$ and $\DLlite_c$ match the ones proved 
in~\cite{DBLP:journals/jair/BonattiFS11}. 

A recent work in this direction
is \cite{DBLP:conf/lpnmr/PenselT17,DBLP:journals/ijar/PenselT18},
where a defeasible version of $\cal{EL}^\bot$
was obtained:
as an interesting parallel with the work presented in our paper,
the goal of such work is
to overcome issues with the approach by
\cite{DBLP:conf/jelia/CasiniS10} on the interpretation of 
defeasible properties on quantified concepts, especially in
nested expressions.
This approach is based on an extension of 
classical canonical
models of $\cal{EL}^\bot$, called \emph{typicality models},
where 
multiple representatives for each concept are used to 
identify different versions of the same concept 
under different levels of typicality.
Using typicality models the authors show that they can
obtain stronger versions of rational and relevant entailment
that do not neglect defeasible information in nested quantifications.
The authors also present a reasoning algorithm for instance checking
under the proposed semantics
by a variant of the materialisation-based approach that only uses the expressivity of $\ELbot$.
Finally, the computational complexity of the defeasible subsumption and instance checking
under the different semantics is investigated:
in particular, the definition of the materialization method 
extending the reasoning to defeasibility while keeping the expressivity of the base logic $\ELbot$
provides the evidence that complexity of reasoning need not to increase in this logic.
This approach is different from ours, which works on all models and
uses factual justifications that need to be derived: on the other hand, canonical models
are useful for characterization and implementation, thus we could investigate some
of the results in these papers in the extension of our work.

\rewnote{3}{I think it is not very clear whether the proposed system successfully models intuitive defeasible reasoning. At least some representative examples could be useful to understand how minimal justified models behave.
Some papers where a lot of representative examples for non-monotonic reasoning can be found are: (see mail)
It would be interesting to see how the system behaves w.r.t. some of these examples.}
\comment{
\begin{example}[Inheritance blocking]
	In studying the properties of rational and minimal relevant closure, 
	\cite{DBLP:journals/ijar/PenselT18} highlight that rational closure suffers from 
	the problem of \emph{inheritance blocking}, intuitively the effect where all
	properties of superclasses of an exceptional class $C$ are not inherited,
	even if they are not related to the ``exceptionality'' of $C$. 
	In the case of our semantics, we have shown in~\cite{BozzatoES:18} by means of
	the \emph{Situs inversus} example from~\cite{BonattiFPS:15} that we can deal
	with property inheritance at the instance level. 

	We show that inheritance can be also preserved when reasoning involves unnamed individuals,
	by rephrasing Example~3.2 from \cite{DBLP:journals/ijar/PenselT18}.
	Consider the DKB $\K_{org}$ defined as:
	\begin{center}
  $\begin{array}{rl}
    \K_{org}: & \left\{\begin{array}{l}
		           \mi{Boss} \subs \mi{Worker},\ \mi{Boss} \subs \non\exists \mi{hasSuperior},\  
		           \exists \mi{hasSuperior}^- \subs \mi{Boss}\\
		           \default(\mi{Worker} \subs \exists \mi{hasSuperior}),\
		           \default(\mi{Worker} \subs \mi{Productive}),\\ 
		           \default(\mi{Boss} \subs \mi{Responsible}),\ 
							 \mi{Worker}(\mi{bob}),\ \non \mi{Boss}(\mi{bob}) 
							\end{array}\right\}
  \end{array}$	
	\end{center}
	Similarly to Example~\ref{ex:mother},
	this 
	DKB admits a model $\I_\CAS$ where we have
	an exception on $\alpha = \mi{Worker} \subs \exists \mi{hasSuperior}$ 
	for the (unnamed) boss $f(\mi{bob})$ of $\mi{bob}$,
	with $f(\mi{bob}) \neq \mi{bob}$. 
	However, we have no reason to override the other properties of $\mi{Boss}$
	on $f(\mi{bob})$, thus we have 
	$\I_\CAS \models \mi{Responsible}(f(\mi{bob}))$ and 
	$\I_\CAS \models \mi{Productive}(f(\mi{bob}))$. Also, if 
	we add $\non \mi{Productive}(\mi{bob})$ to $\K_{org}$,
	the overriding of the respective axiom does not 
	influence the applicability of $\alpha$ 
	to $\mi{bob}$.
	
	As shown by~\cite{DBLP:journals/ijar/PenselT18}, rational closure by the 
	materialization-based approach of~\cite{DBLP:conf/jelia/CasiniS10} 
	fails to derive information on such existential 
	individuals; thus, e.g., it can not derive $\mi{Productive}(f(\mi{bob}))$. 
	Similar considerations on inheritance blocking can be drawn for DLs with 
	typicality as those presented in~\cite{GiordanoGOP:11}.
	\EndEx%
\end{example}
}

Another recent work about reasoning on non-monotonic versions of the ${\cal EL}$ 
family is~\cite{CasiniSM:18}.
The paper considered the logic ${\cal ELO}_\bot$ (i.e., the extension of $\ELbot$ with nominals)
and studies the problem of (non-monotonic) concept subsumption, where
the non-monotonic aspects are represented via rational closure.
The authors provided a polynomial time subsumption algorithm for ${\cal ELO}_\bot$ under rational closure that,
notably, reduces the problem to a series of classical monotonic subsumption tests in the same language.
This allows to use the customary (monotonic) ${\cal EL}$ based reasoners to implement the reasoning method.

A recent approach related to our work is~\cite{EiterLP:16},
in which
inconsistency-tolerant query answering over a set of existential rules
was studied.
The authors considered
removing errors from ontological axioms that lead to inconsistency,
with the possibility to specify a set of axioms 
that should not be touched. They introduced 
two semantics for BCQ Answering on existential rules, 
in which a maximal set of designated rules \emph{(GR semantics)}
\comment{or} rule instances \emph{(LGR semantics)} is 
kept while maintaining consistency.
This dually corresponds to the inherent minimality of clashing assumptions in
justified DKB-models, with the difference that in \cite{EiterLP:16} no
proof of a clashing set or a similar certificate is required for
removing a rule instance.
Exceptions may be harder to obtain under LGR semantics; e.g., in
Example~\ref{ex:mother}, no exception for $\mi{alice}$ to the
defeasible axiom is possible, and thus only models with an unnamed
supervisor for $\mi{alice}$ exist.
Notably, for LGR semantics minimal removal checking is polynomial
for such rule sets, while testing whether a corresponding clashing
assumption is justified is intractable
(cf.\ Proposition~\ref{prop:n-bounded-mc}).
Closer relationship 
with LGR semantics 
remains to be clarified in future work.




\section{Conclusion}    
\label{sec:conclusion}

In this paper, we considered the justified exception approach in~\cite{BozzatoES:18} 
for reasoning on $\DLliteR$ KBs with defeasible axioms.
With respect to our previous works, we had to consider 
the problem of reasoning with unnamed individuals introduced by existential
axioms, especially when they are involved in the reasoning over 
exceptions: we provided different characterizations of DKB-models
with respect to the use of unnamed individuals and their presence
in exceptions.
Considering DKBs where exceptions can appear only on named individuals,
we studied the semantic properties of DKB-models and
we analyzed the complexity of the main reasoning problems. 
We have shown that the limited language of $\DLliteR$ allows us to formulate a
direct datalog translation to reason on derivations for negative information
in instance checking.
Finally, we provided some insights in the case of reasoning with exceptions 
on existential individuals and a direction for extending the datalog translation
in this regard.

\rewnote{2}{The significance seems to lie more on the Description Logics area, where a new non-monotonic language extension is proposed, than in Logic Programming (LP) itself, since no conclusions about the use of ASP are being drawn from the LP perspective.}
\comment{While the focus of this work lies mostly in the area of
Description Logics, 
where we extend current languages to deal with non-monotonicity, we note that 
this work also shows the strength of LP and ASP technologies: in particular, 
we have that ASP provides a practical way to encode and solve tasks 
such as conjunctive query answering in other logical systems. Moreover, by comparing our
previously defined ASP encodings, we note that by the declarative nature of ASP,
the management of the new aspects of $\DLliteR$ can be encoded by an adaptation of
the program rules. This shows some flexibility of the rule-based
approach, which is in particular valuable for developing prototypical implementations.}


As discussed in the previous sections, reasoning with
unnamed individuals in exceptions can be further studied to 
obtain more insights, possibly with a
refinement or further elaboration of the ASP encoding
sketched in Section~\ref{sec:unnamed}. In particular, 
it would be interesting to explore possible variants of the
exception semantics that we considered here, from
comparison with and inspired by related work such as the one discussed
in Section~\ref{sec:related}. Using Skolem terms in exceptions, which
underlies the approach in \cite{EiterLP:16}, may be an option, but the
consequences will have to be carefully
considered, as issues with Skolemization in non-monotonic reasoning are folklore. 

The complexity results obtained for $n$-de safe DKBs imply that some
encoding in ASP is possible that uses predicates of arity bounded by a
constant, in contrast to the rules
(\ref{rule-outro1})--(\ref{rule-outro2}) sketched in
Section~\ref{sec:unnamed}, with rule bodies that have a variable
(possible large) number of literals. In other contexts, such
bounded-arity encodings proved to be useful using solvers based on
decomposition techniques \cite{DBLP:journals/tplp/BichlerMW16}. It
thus would be interesting to see whether this approach could be
fruitfully used for encoding DKB reasoning with unnamed individuals as
well.

Finally, we plan to apply 
the current results on $\DLliteR$ in the framework of 
CKR with hierarchies as
in~(Bozzato, Serafini and Eiter~\citeyearNP{DBLP:conf/kr/BozzatoSE18}),
for which the current results have to be extended to the
respective setting. Imposing preference on exceptions in the
hierarchy may however increase the complexity and thus require
language constructs that offer increased expressivity, such as
optimization (e.g.,\ by weak constraints) or disjunction in rules
heads.

\rewnote{1}{In references:

. "(DL2014)" vs. "RuleML+RR 2019" vs. "DL-2012" vs "KR 2018". Please unify the style of the conference acronyms

. "New Generation Computing" is the only journal name which is fully specified}
\lbnote{Fixed formats in references (we might leave out conferences locations for being more compact)}



\comment{
\bigskip\noindent
\textbf{Acknowledgments.}
%
We thank the reviewers for their constructive comments and suggestions to improve this paper.

\smallskip\noindent
\textbf{Competing interests declaration.} The authors declare none.
}

\nocite{BozzatoES:18}
\bibliographystyle{acmtrans}
\bibliography{bibliography}


\newpage
\appendix
\section{Description Logic $\DLliteR$}

\begin{table}[t] \small
\caption{Syntax and Semantics of $\DLliteR$ operators, where $A$ is any
atomic concept, $C$ and $D$ are any concepts, $P$ and $R$ are any
atomic roles, 
$S$ and $Q$ are any (possibly complex) roles, $a$ and $b$ are any individual constants.}

\label{tab:dl-lite-operators}
\begin{tabular}{lll}
\hline
\hline
\textbf{Concept constructors} & Syntax	& Semantics \\
\hline
atomic concept		& $A$		& $A^\I$ \\
complement		& $\neg C$	& $\Delta^\I \setminus C^\I$ \\
existential restriction	& $\exists R$	& $
                                         \{x\in\Delta^\I \,|\,
					       \exists y.\Pair{x}{y}\in R^\I
						     \}$\\
\hline
\hline
\textbf{Role constructors}	& Syntax	& Semantics \\
\hline
atomic role		& $R$		& $R^\I$\\
role complement		& $\non S$		& $\Delta^\I \times \Delta^\I \setminus S^\I$\\
inverse role		& $R^-$		& $\left\{\Pair{y}{x} \,\left|\,
                                           \Pair{x}{y}\in R^\I\right.\mkern-2mu\right\}$\\
\hline
\hline
\textbf{Axioms}	& Syntax	& Semantics \\
\hline
concept inclusion 
	& $C\isa D$	& $C^\I \subseteq D^\I$\\
role inclusion
	& $S \isa Q$	& $S^\I \subseteq Q^\I$\\[.5ex]
role disjointness	& $\mathrm{Dis}(P,R)$ & $P^\I\cap R^\I = \emptyset$ \\
role inverse	& $\mathrm{Inv}(P,R)$ & $P^\I = \left\{\Pair{y}{x} \,\left|\,
                                           \Pair{x}{y}\in R^\I\right.\mkern-2mu\right\}$ \\
reflexivity assertion   & $\mathrm{Ref}(R)$ & $\{ \Pair{x}{x}|\, x \in \Delta^\I \} \subseteq R^\I$ \\
irreflexivity assertion   & $\mathrm{Irr}(R)$ & $R^\I \cap \{ \Pair{x}{x}|\, x \in \Delta^\I \} = \emptyset$\\[.5ex]
concept assertion 	& $C(a)$	& $a^\I \in C^\I$ \\
role assertion 		& $S(a,b)$	& $\Pair{a^\I}{b^\I} \in S^\I$ \\
\hline
\hline
\end{tabular}
\end{table}

In Table~\ref{tab:dl-lite-operators} we present the syntax and semantics of operators 
included in the description logic $\DLliteR$.
Further rules for the composition of axioms in $\DLliteR$ are specified in 
Section~\ref{sec:prelims}.

\section{FO-translation for $\DLliteR$}
\label{sec:fo-translation}

\begin{table}[t]%

\small
\caption{Translation $\phi_\alpha(\vec{x})$, $\vec{x}=x_1,\ldots,x_n$, of $\DLliteR$ axioms $\alpha$ in
  $\Lcal_\Sigma$ to first-order logic.}
\label{tab:fol-translation}

\medskip
\hrule\mbox{}\\[.5ex]
\textbf{Translation $\phi_\alpha(\vec{x})$ for axioms $\alpha$}\\[1ex]	
$\begin{array}[t]{r@{\;\;}l}               
D(a) \mapsto & \gamma_D(a) \\
R(a,b) \mapsto & R(a,b) \\
\non R(a,b) \mapsto & \non R(a,b) \\[2.5ex]

C \subs D \mapsto & 
\beta_C(x_1) \im \gamma_D(x_1)\\
R \subs T \mapsto & 
R(x_1,x_2) \im T(x_1,x_2)\\[1ex]
\end{array}$\;
%
$\begin{array}[t]{r@{\;\;}l}               
\mathrm{Dis}(R,S) \mapsto & 
(R(x_1,x_2) \im \non S(x_1,x_2))\ \wedge\\
& (S(x_1,x_2) \im \non R(x_1,x_2))\\
\mathrm{Inv}(R,S) \mapsto & 
(R(x_1,x_2) \im S(x_2,x_1))\ \wedge\\
& (S(x_1,x_2) \im R(x_2,x_1)) \\
\mathrm{Ref}(R) \mapsto & 
R(x_1,x_1)\\
\mathrm{Irr}(R) \mapsto & 
\non R(x_1,x_1)\\[1ex]
\end{array}$

\medskip
\textbf{Translation $\beta_E(\vec{x})$ for (left-side) expressions $E$}\\[1ex]	
$\begin{array}{r@{\;\;}l}
A \mapsto & A(x_1)\\
\exists R \mapsto & R(x_1,x_2)\\[1ex]
\end{array}$

\medskip
\textbf{Translation $\gamma_E(\vec{x})$ for (right-side) expressions $E$}\\[1ex]	
$\begin{array}{r@{\;\;}l}
A \mapsto & A(x_1)\\
\non C_1 \mapsto & \non \beta_{C_1}(x_1)\\
\exists R \mapsto & R(x_1, f_R(x_1))\\[1ex]
\end{array}$
\hrule\mbox{}
\end{table}

In Table~\ref{tab:fol-translation} we provide a FO-translation for
axioms in $\DLliteR$.
Given a $\DLliteR$ axiom $\alpha$ in
$\Lcal_\Sigma$,
the formula $\forall \vec{x}.\phi_\alpha(\vec{x})$,
 where $\vec{x}=x_1,x_2,\ldots, x_n$ is a list of variables,
expresses $\alpha$ as a first order formula.
The translation rules for $\phi_\alpha(\vec{x})$ are recursively defined by 
set of rules $\beta_E(\vec{x},x_c)$ for left-side and $\gamma_E(\vec{x},x_c)$ for right-side expressions,
shown at the bottom of Table~\ref{tab:fol-translation}.

By the definition of this translation, Lemma~\ref{lem:horn-equiv} 
can then be proved analogously to the case of the FO-translation for 
$\SROIQrl$ provided in~\cite[Appendix A.2]{BozzatoES:18}.
Intuitively, it is possible to show that, using the provided translation, 
every $\DLliteR$ axiom can be expressed as a universal Horn sentence
$\forall\vec{x}.\phi_\alpha(\vec{x})$,
where $\vec{x}=x_1,\ldots,x_n$ is a list of free variables.
Hence, $\phi_\alpha(\vec{x})$  can be
written as $\phi_\alpha(\vec{x}) = \bigwedge_{i=1}^\ell
\forall\vec{x}_i. \gamma_i(\vec{x},\vec{x}_i)$, where each
$\gamma_i$ is a Horn clause of the form 
\begin{equation}
\label{eq:horn-clause-form}
\gamma_i(\vec{x},\vec{x_i}) = p_1(\vec{x},\vec{x}_{i,1})\land\cdots\land p_k(\vec{x},\vec{x}_{i,k}) \rightarrow p_0(\vec{x},\vec{x}_{i,0})
\end{equation}


\noindent where
	(i) each $p_i$ is a concept name or a role name with 
  %
possibly $p_0 = \bot$ (falsum); and 
%
(ii) each variable in $\vec{x},\vec{x}_{i,j}$ occurs in the antecedent
      (safety), and  $\vec{x}_i = \vec{x}_{i,0},\ldots,\vec{x}_{i,k}$.

\section{Proofs of Main Results}

\subsection{DL Knowledge Base with Justifiable Exceptions}

\begin{manualproposition}{\ref{prop:pushing-eq}}
  Let $\I_{\CAS} \,{=}\, \stru{\I, \casmap}$ be a CAS-model of DKB 
$\K$ and let $\K'$ result from $\K$ by pushing equality w.r.t.\ $\I$,
  i.e., replace all $a, b \,{\in}\, N_\K$ s.t.\ $a^\I\,{=}\,b^\I$  by one representative.
If $\K'$ is exception-safe,
then $\I_{\CAS}$ can be justified only if every $\stru{\alpha,\ee}\,{\in}\, \casmap$ is over $N_\K$.
\end{manualproposition}
\begin{proof}
Suppose $\I_{\CAS}$ is justified and some $\stru{\alpha,\ee}\in
\casmap$ is not over $N_\K$, i.e., for some $e$ in $\ee$ we
have $e \notin N_\K^\I$.  
Then, by definition of justification, some clashing set $S$ for
$\stru{\alpha,\ee}$ with $\ee$ not over  $N_\K$
is satisfied in all CAS-models  $\I'_{\CAS}$ of $\K$
that are NI-congruent with $\I_{\CAS}$. This means that $S$ can be
derived with axiom unfolding restricted by the clashing
assumptions in $\casmap$. But then $S$ can also be derived without
restrictions, and thus from the knowledge base $\K_s'$. 
%
However, this means that $\K'$ is not exception-safe, which is a contradiction. 
\end{proof}

\begin{manualproposition}{\ref{prop:complexity-recognize-safe}}
Deciding whether a given DKB $\DKB$ is exception safe is feasible in
\nlogspace, and whether it is $n$-de safe in \ptime, if $n$ is bounded by a polynomial in the size of $\K$. 
\end{manualproposition}

\begin{proof} 
Apparently, $\DKB$ is not exception safe, if some Skolem term $t_1$ 
resp.\ Skolem terms $t_1,t_2$ exists such that an atom $D(t_1)$
resp.\ $R(t_1,t_2)$ can be derived from the first-order rewriting
$\phi_{\K_s}$ of $\K_s$ such that an assertion $D(e_1)$ resp.\ $R(e_1,e_2)$
occurs in a clashing set for some possible exception
$\stru{\alpha,\vec{e}}$ to a defeasible axiom $\default(\alpha)$ in
$\K$. Such $D(e_1)$ resp.\ $R(e_1,e_2)$ can be guessed and the 
derivation of $D(t_1)$ resp.\ $R(t_1,t_2)$ be non-deterministically 
checked by applying iteratively the axioms and deriving atoms 
$\alpha_0(\vec{t}^0)$, $\alpha_1(\vec{t}^1)$, \ldots,
$\alpha_i(\vec{t}^{i}) = D(t_1)$ resp.\   $\alpha_i(\vec{t}^{i}) =
R(t_1,t_2)$; at each step, it is sufficient to store merely the
type of each argument (Skolem term, $N_K$ individual) of $\alpha_j$
and to require that $t_1$ resp.\ some of $t_1,t_2$ is a Skolem
term. This is feasible in non-deterministic logspace. 

As for $n$-de safety where $n>0$, one can first similarly check
whether a derivation with a cycle is possible, such that 
atoms $D(t_1)$ and $D(t'_1)$ resp.\ $R(t_1,t_2)$ and $R(t'_1,t'_2)$
can be derived where $t_1$ is a subterm of $t'_1$ resp.\ $t_1$ is a
subterm of $t'_1$ or $t_2$ is a subterm of $t'_2$. This can be done
with additional book-keeping in nondeterministic logspace. 

If such a cycle exists, then $\DKB$ is not $n$-de safe for any $n\geq
0$. Otherwise, we can systematically enumerate the terms $t_1$
resp.\ $t_1,t_2$ in a lexicographic fashion. To this end, we determine
for an individual $a \in N_\K$ all assertions $R_i(a,f_{R_i}(a))$ that hold for
it; each gives rise to a child $f_{R_i}(a)$, and by repeated
application (where again all assertions $R_i(f_{R_i}(a),f_{R_j}(a))$
are determined) we obtain a tree whose depth is linearly bounded. We can
traverse this tree in a depth first manner where, before expanding, we
ask at the current node  whether some atom $D(t_1)$
resp.\ $R(t_1,t_2)$ as above is reachable
(which then contains some new Skolem term not seen so far). This test
is, like computing all assertions $R(t,f_R(t))$ for  $t$ feasible in
nondeterministic logspace. In this way, the number of nodes 
explored until $n+1$ different terms are found is polynomial in $n$,
and the effort for each node is polynomial; as there are linearly many 
starting nodes, the overall effort is polynomial in $n$. Thus if $n$ is 
polynomially bounded in the size of $\K$, the overall effort is
polynomial in the size of $\K$ as well.
\end{proof}

\begin{manualproposition}{\ref{prop:chain-bounded}}
Deciding whether a given DKB $\DKB$ is $n$-chain safe, where $n\geq 0$, is feasible in
\nlogspace.
\end{manualproposition}
\begin{proof}
This test can be made by an algorithm similar to the one checking
exception-safety. It nondeterministically builds a chain
$R_1(a,t_1), \ldots, R_m(t_{m-1},t_m)$ starting from the assertions in
$\K_s$, where
it records at each point just the predicate name $R_i$ and the type of
the arguments. It increases a counter whenever by applying some axiom
a new Skolem term $f_{R_i}(t)$ is introduced; if the counter exceeds
$n$, then $\K$ is not $n$-chain safe. Since logarithmic workspace is
sufficient for the book-keeping, the result follows.
\end{proof}

\subsection{Semantic Properties}

\begin{manualproposition}{\ref{prop:model-intersection}} 
Let  $\I^i_{\CAS} = \stru{\I_i,\casmap}$, $i \in\{1,2\}$, be
$\NI$-congruent CAS-models of
a DKB $\K$ 
fulfilling (\ref{prop:exc-conjunction}).
Then, $\I_{\CAS} = \stru{\I,\casmap}$, where $\I = \I_1 \widetilde{\cap}_\Ncal \I_2$ 
and $\Ncal$ includes all individual names occurring in 
$\K$ and for each element $e$ occurring in $\casmap$
some $t\in \Ncal$ such that $t^{\I_1}=e (= t^{\I_2})$,
is also a CAS-model of $\K$.
Furthermore, if  some $\I^i_{\CAS}$, $i\in\{1,2\}$, 
is justified and $\DKB$ is exception safe, then $\I_{\CAS}$ is justified. 
\end{manualproposition}
\begin{proof}
	By Lemma~\ref{lem:horn-equiv}, we have that the FO-translation $\phi_\K$ of a DKB $\K$
	is equivalent to a conjunction of Horn clauses. 	
	Since $\I^i_{\CAS} = \stru{\I_i,\casmap}$, $i \in\{1,2\}$ are CAS-models of $\K$,
we can consider their ``intersection'' model $\I = \I_1 \widetilde{\cap}_\Ncal \I_2$
over the set $\Ncal$ corresponding to $N_\K$ extended with its grounding on Skolem functions.
	We can prove that $\I_{\CAS}$ is indeed a model for $\Kcal$:
	let us suppose that in $\I$ we have that some 
	Horn clause $\gamma(\vec{x}) = a_1(\vec{x}), \ldots, a_n(\vec{x}) \rightarrow b(\vec{x})$ from $\phi_\K$ 
	is violated for some variable assignment $\sigma$ for $\vec{x} = x_1, \ldots, x_k$.
	Since all elements of $\I_1 \widetilde{\cap}_\Ncal \I_2$ are named by some term in $\Ncal$,
the assignment is of the form $\sigma(x_i) = t_i$ with $t_i \in \Ncal$ for
	$i \in \{1, \ldots, k\}$. Since each $t_i$ is interpreted in $\I$ by
	$[t^{\I_1}, t^{\I_2}]$, by the interpretation of concepts and roles in 
	$\I$ it follows that each $a_i\sigma$ is true in both $\I_1$
        and $\I_2$. Furthermore, as $\gamma(\vec{x})$ is violated for
        $\sigma$, the axiom $\alpha$ in $\K$ that led to
        $\gamma(\vec{x})$ does not have an exception for $\sigma$ in $\I_{\CAS}$;
        hence, from the property (\ref{prop:exc-conjunction}) it follows
        that  $\gamma(\vec{x})$ has neither for the assignment
        $\sigma_1(x_i) = t_i^{\I_1}$, $i\in \{1, \ldots, k\}$, an
        exception in $\I^1_{\CAS}$ nor for the assignment
        $\sigma_2(x_i) = t_i^{\I_2}$, $i\in \{1, \ldots, k\}$, in
        $\I^2_{\CAS}$; thus $b(\vec{x})$ is true for $\sigma_i$ in
        $\I_i$, $i=1,2$. By construction, this means $b(\vec{x})$ is
        true for $\sigma$ in $\I$, and thus $\gamma(\vec{x})$ is
         satisfied for $\sigma$ in $\I$, which is a contradiction.
	Hence $\I_{\CAS} = \stru{\I_1 \widetilde{\cap}_\Ncal \I_2,\casmap}$ is also a CAS-model for $\K$.
	
	With respect to justification, let us assume without loss of generality that 
	$\I^1_{\CAS}$ is justified.
	Hence, if $\stru{\alpha, \ee} \in \chi$, then we have that there exists a
	clashing set $S_{\stru{\alpha, \ee}}$ for this clashing assumption
	such that for every NI-congruent $\I'_{\CAS}$ it holds that $\I'_{\CAS} \models S_{\stru{\alpha, \ee}}$.
	Moreover, considering $\Kcal$ to be exception safe, 
	all exceptions in $\chi$ are named by individuals. Then, for the justified model
	$\I^1_{\CAS}$, for every Skolem term $t \in \Ncal$, we must have
	$t^{\I_1} \neq c^{\I_1}$ if $c \in \NI$ appears in an exception.
	Thus, since $\I_{\CAS}$ is NI-congruent with $\I^1_{\CAS}$
	and property~(\ref{prop:exc-conjunction}) holds, 
	it follows that also the intersection model $\I_{\CAS}$
	is justified.
\end{proof}

\begin{mcorollary}{\ref{coroll:least-model} (least model property)}
If a clashing assumption $\casmap$ for an exception-safe DKB $\K$ is satisfiable for
name assignment $\nu$, then $\K$ has an  $\subseteq_\Ncal$-least (unique minimal)
CAS-model $\hat{\I}_\K(\casmap,\nu) = \stru{\hat{\I},\casmap}$ on
$\Ncal$ that contains all Skolem terms of individual constants, i.e.,
for every CAS-model $\I_{\CAS}' = \stru{\I',\casmap}$ relative to
$\nu$, it holds that $\I_{\CAS} \subseteq_\Ncal \I'_{\CAS}$.
Furthermore, 
$\hat{\I}_\K(\casmap,\nu)$ is justified if $\casmap$ is justified.
\end{mcorollary}
\begin{proof}
Given that $\casmap$ is satisfiable, 
consider any CAS-model	$\I_{\CAS} = \stru{\I,\casmap}$ for $\K$ with name assignment $\nu$.
Then, some model $\I'_{\CAS} = \stru{\I',\casmap}$ such that $\I'_{\CAS} \subseteq_\Ncal \I_{\CAS}$ 
exists that is founded, i.e., has the following property:
\begin{itemize}
    \item[] If $R(e_1,e_2)$ is true in $\I'_{\CAS}$, 
	then a sequence $\alpha_1(\ee_1) = R(e_1,e_2), \alpha_2(\ee_2),$$\ldots,$$ \alpha_k(\ee_k)$ 
	of atoms with domain elements such that:
	\begin{enumerate}[label=(\arabic*)]
	\item 
	  $\alpha_i(\ee_i)$ is obtained from $\alpha_{i+1}(\ee_{i+1})$ by applying an
		axiom (resp. the rule for it) where $\alpha_i(\ee_i)$ is the head and
    $\alpha_{i+1}(\ee_{i+1})$ is the body, and both are satisfied;
	\item
	  some $A(t_1)$ or $S(t_1,t_2)$ in $\K$ exists such that
    $\alpha_k(\ee_k) = A(t_1^{\I'})$  resp. $\alpha_k(\ee_k) = S(t_1^{\I'},t_2^{\I'})$.
	\end{enumerate}
\end{itemize}
If this were not the case for some $R(e_1,e_2)$ in $\I_{\CAS}$, i.e., its truth 
is not ``founded'' in the facts in $\K$, we could set $R(e_1,e_2)$ and, recursively along
all possible such chains, all $\alpha_i(\ee_i)$ to false and obtain a
smaller model; eliminating all such $R(e_1,e_2)$ (at once or
alternatively in an iterative process) yields a founded model
$\I'_{\CAS}$.

Now if $R(e_1,e_2)$ is true in $\I'_{\CAS}$, starting from $A(t_1)$ resp. $S(t_1,t_2)$, we can apply the
same axioms (rules) as in (1) w.r.t. $\K_s$, using Skolemization and obtain $R(t,t')$.
	Since $\K$ is exception safe, if the first (resp.\ second) argument of $R$ can appear in a clashing
  assumption, $t$ (resp. $t'$) must not be a Skolem term, but a constant. 
This is analogous for $A(e_1)$ where $A$ can occur in a clashing assumption.
	
Let now $\I^1_{\CAS} = \stru{\I_1,\casmap}$ and
        $\I^2_{\CAS}=\stru{\I_2,\casmap}$ be two founded models of
        $\K$ relative to $\nu$. Considering condition~(\ref{prop:exc-conjunction}),
	if	$t^{\I_1} = e$ holds where $e$ occurs in a clashing assumption, then $t = c$ 
	must hold for some constant $c$ (in fact, for some $c \in \K$). 
	As $\I^2_\CAS$ interprets constants in the same way, we thus have $t^{\I_1} = t^{\I_2}$ so the intersection property holds.
        Thus by Proposition~\ref{prop:model-intersection}, $\I_{\CAS} = \stru{\I_1
        \widetilde{\cap}_\Ncal \I_2,\casmap}$ is also a model of $K$.
       We claim that $\I^1_{\CAS}\subseteq_\Ncal \I_{\CAS}$ holds, i.e.,         
       for each atom $\alpha \in At_\Ncal(\I^1_\CAS)$, $\I^1 \models
       \alpha$ implies $\I \models \alpha$. 
       Towards a contradiction, suppose $\I^1\models \alpha$ but
       $\I'\not\models \alpha$, and consider $\alpha = R(t,t')$.
       As $\I^1_{\CAS}$ is founded,
       $R(t,t')$ must result, modulo $\nu$, from a derivation chain 
       of an atom $R(e_1,e_2)$ showing foundedness. The bottom of
       this chain, $\alpha_k(\vec{t}) = A(t_1)$ resp.\ $\alpha_k = S(t_1,t_2)$ is a fact in $\K$,
       and thus $\I' \models \alpha_k(\vec{t})$. By an inductive argument, we
       obtain that the axiom that has been applied to derive
       $\alpha_{i-1}(\ee_{i-1})$ from $\alpha_i(\ee_i)$ in the
       chain from $\alpha_k(\ee_k)$ to $\alpha_1 = R(e_1,e_2)$ in $\I^1$ can be 
       applied in $\I'$ to derive the same $\alpha_{i-1}(\vec{t}_{i-1})$
       from $\alpha_{i}(\vec{t}_{i})$ as in $\I'$, as the axiom is
       also applicable to $\alpha_i(\vec{t}_i)$ in $\I^2$. Thus,
       $\I'\models \alpha_1$, i.e., $\I'\models R(t,t')$, which is a contradiction.
       The case of $\alpha = A(t_1)$ is analogous. Thus, $\I^1_{\CAS}\subseteq_\Ncal \I_{\CAS}$
       holds, which means by transitivity that
        $\I^1_{\CAS}\subseteq_\Ncal \I^2_{\CAS}$. As $\I^2_{\CAS}$ can
        be an arbitrary founded CAS-model, it follows that 
        $\I^1_{\CAS}$ is a $\subseteq_\Ncal$-least founded CAS-model;
        and as for every CAS-model $\IC_{\CAS}$ some founded CAS-model
        $\I^2_{\CAS}$ exists such that $\IC^2_{\CAS} \subseteq_\Ncal \IC_{\CAS}$,
        $\I^1_{\CAS}$ is a $\subseteq_\Ncal$-least CAS-model
        $\hat{\I}_\K(\casmap,\nu)$ of $\K$, as as claimed.
        By symmetry, also has $\IC^2_{\CAS}$ this property.
        Justification of $\hat{\I}_\K(\casmap,\nu)$ 
				is immediate from Proposition~\ref{prop:model-intersection}.
\end{proof}

\begin{manuallemma}{\ref{lem:named}}
Suppose $\I$ is a model of a $\DLliteR$ knowledge base $\K$ and $N\subseteq
\NI \setminus\NI_S$ includes all individuals occurring in $\K$. Then, 
the $N$-restriction $\I^N$ is named w.r.t.\ $sk(N)$ and a model of $\K$.
\end{manuallemma}
\begin{proof}
	By Lemma~\ref{lem:horn-equiv}, we have that 
	$\phi_\K$
	represents the contents of $\K$ as a first order formula.
	As discussed in~\ref{sec:fo-translation}, 
	this formula can be written as a conjunction 
	$\phi_\alpha(\vec{x}) = \bigwedge_{i=1}^\ell\forall\vec{x}_i. \gamma_i(\vec{x},\vec{x}_i)$, 
	where each $\gamma_i(\vec{x},\vec{x}_i)$ is a Horn clause.
	By construction, $\I^N$ is named relative to $sk(N)$. 
	Moreover, in any assignment $\theta: \vec{x}_i \mapsto \NI_s$,
 $\gamma_i \theta$ evaluates to false in $\I^N$	whenever there is some
 $\theta(x) \notin sk(N)$ with $x \in \vec{x}_i$.
 It follows that, if $\gamma_i$ is verified under $\I$ and $\theta$,
 then it is also verified by $\I^N$ and $\theta$.
 This implies that if $\I \models \phi_\K$ then $\I^N \models \phi_\K$,
 and thus $\I^N$ is a named model for $\K$.
\end{proof}

\begin{manualtheorem}{\ref{theo:justified-cas-char} (justified CAS characterization)}
Let $\casmap$ be a satisfiable clashing assumptions set for an 
exception safe DKB
$\K$ and name assignment $\nu$. 
Then, $\casmap$ is justified iff $\stru{\alpha,\ee} \in \casmap$ implies
some clashing set $S = S_{\stru{\alpha,\ee}}$ exists such that 
  \begin{enumerate}[label=(\roman*)]
\itemsep=0pt
  \item $\hat{\I} \models \beta$, for each  positive $\beta
    \in S$, where $\hat{\I}_\K(\casmap,\nu)= (\hat{\I},\casmap)$,
and 
\item no CAS-model $\I_{\CAS} = \stru{\I, \casmap}$ with name assignment
  $\nu$ exists s.t. $\I \models \beta$ for some $\neg \beta \in S$.
\end{enumerate}
\end{manualtheorem}
\begin{proof}
	Since $\chi$ is satisfiable, then for Corollary~\ref{coroll:least-model} 
	there exists a least CAS-model $\hat{\I}_\K(\casmap,\nu)= (\hat{\I},\casmap)$.
	
	Then, we can show that the justification is characterized by the conditions
	on the validity of clashing sets.
	Let us suppose that $\chi$ is justified. Then, $\hat{\I}_\K(\casmap,\nu)$
	is justified: this implies that, by definition, for every $\stru{\alpha, \ee} \in \chi$
	there exists a clashing set $S = S_{\stru{\alpha,\ee}}$ such that,
	for every CAS-model $\I'_{\CAS} = \stru{\I', \casmap}$ that is $\NI$-congruent
	with $\hat{\I}_\K(\casmap,\nu)$, we have that $\I' \models S$.
	This directly verifies items (i) and (ii).
	
	On the other hand, let us suppose that for every 
	$\stru{\alpha,\ee} \in \casmap$ there exists some
	clashing set $S = S_{\stru{\alpha,\ee}}$ such that
	items (i) and (ii) hold.
	Considering a CAS-model $\I'_{\CAS} = \stru{\I', \casmap}$ that is $\NI$-congruent
	with $\hat{\I}_\K(\casmap,\nu)$, it holds that from (i)
	$\I' \models \beta$ for every positive $\beta \in S$.
	Moreover, condition (ii) implies that 
	$\I' \models \non\beta$ for every negative assertion $\non\beta \in S$.
	Thus, $\stru{\alpha,\ee} \in \casmap$ is justified in $\chi$:
	hence, $\hat{\I}_\K(\casmap,\nu)$ is justified.
\end{proof}

\subsection{Datalog Translation for $\DLliteR$ DKB}

\subsubsection{Normal Form}
\label{sec:nform-appendix}

\begin{manuallemma}{\ref{lem:normal-form}}
Every DKB $\DKB$ can be transformed in linear time into an
equivalent DKB $\DKB'$ which has modulo auxiliary symbols the same
DKB-models, and such that $n$-de safety and $n$-chain safety are preserved.
\end{manuallemma}
\begin{proof}
Using the new concept names $A_{\exists R}$, all strict inclusion 
axioms $C\isa D$ can be easily expressed in the given form. Negative
concept assertions $\non C(a)$ can be expressed by $A_a(a)$ and
$A_a\isa \non C$, where $A_a$ is a fresh concept name. Similarly,
negative role assertions $\non R(a,b)$ can be expressed by
$R'(a,b)$ and $\mathrm{Dis}(R,R')$ where $R'$ is a fresh
role name. 

As regards defeasible axioms, we can express $\default(\non C(a))$ by 
$A_a(a)$ and $\default(A_a\isa \non C)$, where $A_a$ is as above, and furthermore
$\default(A \isa \non B)$ by 
$\default(A \isa A')$ and $B \isa \non A'$, where $A'$ is a fresh concept
name. Finally, we can express  $\default(\non R(a,b))$ by
$R'(a,b)$, $\default(R'\isa S)$, and $\mathrm{Dis}(R,S)$, where
$R'$ and $S$ are fresh role names. It can be seen that 
for each justified CAS-model $\I_{\CAS} = \stru{\I, \casmap}$ of the
original DKB, some justified CAS-model $\I'_{\CAS} = \stru{\I', \casmap'}$
for the DKB obtained by a rewriting step exists, where $\casmap'$
is obtained by modifying exceptions in $\casmap$ according to the
rewriting in the obvious way. In the same way, one obtains from each justified CAS-model $\I'_{\CAS}
= \stru{\I', \casmap'}$ of the rewritten DKB a justified CAS-model of the
original DKB. The rewriting does not use existential axioms, and thus
$n$-de safety and $n$-chain safety are preserved.
\end{proof}

\comment{
\noindent
With respect to the equivalence result defined in Lemma~\ref{lem:normal-form},
one interesting aspect would be \emph{strong
equivalence}~\cite{DBLP:journals/tocl/LifschitzPV01}
between the original DKB $\DKB$ and the transformed DKB $\DKB'$.
In the non-monotonic setting, two theories $\Gamma$ and $\Gamma'$ are strongly
equivalent if, for any other theory $H$, $\Gamma \cup H$ and $\Gamma' \cup H$
have the same models.
For our purposes, we would need to carefully define and analyze
this notion, taking auxiliary symbols into account, cf.\
\cite{DBLP:journals/tplp/Woltran08}. The very limited form of changes by
the normal form transformation and the syntax of $\DLliteR$ suggest that
a relativized equivalence may prevail, possible under adaptations.
}

\subsubsection{Translation Correctness}

Let $\I_\CAS = \stru{\I, \casmap}$ be a justified named CAS-model. We define the set of overriding assumptions
$\OVR(\I_\CAS) = \{\, \ovr(p(\ee)) \;|\; \stru{\alpha, \ee} \in \casmap,\, I_{dlr}(\alpha) = p \,\}$.
Given a CAS-interpretation $\I_{\CAS}$, 
we can define a corresponding 
interpretation 
$S = I(\I_{\CAS})$ for $PK(\K)$: the construction
is similar to the one in~\cite{BozzatoES:18}, 
by extending it to negative literals and 
providing an interpretation for existential individuals:

\begin{enumerate}[label=(\arabic*).]
\itemsep=0pt
\item 
$l \in S$, if $l \in PK(\K)$;
\item 
 $\instd(a,A) \in S$, if $\I \models A(a)$ and
 $\non\instd(a,A) \in S$, if $\I \models \non A(a)$; 
\item 
  $\tripled(a,R,b) \in S$, if  $\I \models R(a,b)$ and
  $\non\tripled(a,R,b) \in S$, if  $\I \models \non R(a,b)$;	
\item 
  $\tripled(a,R,aux^\alpha) \in S$, if $\I \models \exists R(a)$
	for $\alpha = A \isa \exists R$;
\item
	$\AllNRel(a,R) \in S$
	if $\I \models \non \exists R(a)$;
\item
  $\ovr(p(\ee)) \in S$, if $\ovr(p(\ee)) \in \OVR(\I_{\CAS})$;
\end{enumerate}

\begin{manualproposition}{\ref{prop:correctness}}
Let  $\K$ be an exception safe DKB in $\DLliteR$ normal form. Then:
%
	\begin{enumerate}[label=(\roman*).]
	\item 
	  for every (named) justified clashing assumption $\casmap$, 
		the interpretation $S = I(\hat{\I}(\casmap))$ is an answer set of $PK(\K)$;
	\item
	   every answer set $S$ of $PK(\K)$ is of the form 
		$S = I(\hat{\I}(\casmap))$ where $\casmap$ is a (named) justified clashing assumption for $\K$.
	\end{enumerate}
\end{manualproposition}
\begin{proof} 

	We consider $S = I(\hat{\I}(\casmap))$ built as above and 
	reason over the reduct 
	$G_S(PK(\K))$ of $PK(\K)$ with respect to $S$.
	By definition, the reduct $G_S(PK(\K))$ is the set of rules
	resulting from the ground instances of rules of $PK(\K)$
	after the removal of (i)
	every rule $r$ such that $S \models l$ for some NAF literal $\naf l \in \Body(r)$; 
	and	(ii) the NAF part (i.e., $\ovr$ literals) from the bodies of the remaining rules.
	Basically, $G_S(PK(\K))$ contains all ground rules from $PK(\K)$
	that are not falsified by some NAF literal in $S$:
	in particular, this excludes application rules for the axiom instances
	that are recognized as overridden.

	\medskip\noindent		
	Item (i) can be proved by showing that given a
	justified $\chi$, $S$ is an answer set for 
	$G_S(PK(\K))$ (and thus $PK(\K)$).
	The proof follows the same reasoning of the one in~\cite[Lemma 6]{BozzatoES:18},
	where 
	the fact that $I(\hat{\I}(\casmap))$ satisfies rules
	of the form (pdlr-supex) in $PK(\K)$
	is verified by the condition (4) on existential formulas
	in the construction of the model above.
	
	We first show that $S \models G_{S}(PK(\K))$: for every
  rule instance $r \in G_{S}(PK(\K))$ we show that $S \models r$ holds by 
  examining the possible rule forms that occur in the reduction.
	In the following we show some of the most relevant cases, 
	while the other cases can be proven with similar reasoning.
	Assuming that $S \models \Body(r)$ for a rule instance $r$ in 
	$\grd(PK(\K))$, we show that $\Head(r)\in S$.
	
	\begin{itemize}
  \item
	   \textbf{(pdlr-instd):}
		 then $\insta(a,A) \in S$ and, by definition of the
		 translation, $A(a) \in \K$. This implies that $\hat{\I} \models A(a)$
		 and thus $\instd(a,A)$ is added to $S$.
  \item
	   \textbf{(pdlr-subc):}
     then $\{\subClass(A,B), \instd(a,A)\} \subseteq S$.
		 By definition of the translation we have $A \subs B \in \K$.
		 Then, for the construction of $S = I(\hat{\I}(\casmap))$, 
		 $\hat{\I} \models A(a)$. 
		 This implies that $\hat{\I} \models B(a)$ 
		 and $\instd(a,B)$ is added to $S$.		
  \item
	   \textbf{(pdlr-supex):}
		 then $\{\supEx(A,R,\mi{aux}^{\alpha}), \instd(a,A)\} \subseteq S$ 
		 (with $\mi{aux}^{\alpha}$ a new constant relative to the considered 
		 existential axiom $\alpha$).
		 By definition of the input translation, we have that
		 $A \subs \exists R \in \K$. Moreover, by the construction of $S$,
		 we have $\hat{\I} \models A(a)$. 
		 This implies that $\hat{\I} \models \exists R(a)$: thus, by the conditions defining the 
		 construction of $S$,
     $\tripled(a,R,\mi{aux}^{\alpha})$ is added to $S$.
  \item
	   \textbf{(pdlr-nsupex):}
		 then $\{\supEx(A,R,\mi{aux}^{\alpha}), \const(a), \AllNRel(a,R)\} \subseteq S$.
		 By definition of the translation, we have that 
		 $A \subs \exists R \in \K$ and $a$ is a constant (i.e., either an individual name
		 appearing in $\K$ or any other auxiliary existential constant $\mi{aux}^{\beta}$).
		 Moreover, by the construction of $S$, since $\AllNRel(a,R) \in S$ 
		 it holds that  $\hat{\I}\models \non \exists R(a)$.
		 This implies that $\hat{\I} \models \non A(a)$ and then
		 $\non\instd(a,A)$ is added to $S$.
  \item
	   \textbf{(ovr-subc):} 
	   then $\{\defsubs(A,B), \instd(a,A), \non\instd(a,B)\} \subseteq S$.
	   By definition of the translation, we have that 
		 $\default(A \subs B) \in \K$ and, by the construction of $S$,
		 $\hat{\I} \models A(a)$ and $\hat{\I} \models \non B(a)$.
		 Thus, $\hat{\I}$ satisfies the clashing set $\{A(a), \non B(a)\}$
		 for the clashing assumption $\stru{A \subs B, a}$. 
		 This implies that $\stru{A \subs B, a} \in \casmap$
		 and by construction $\ovr(\subClass,a,A,B)$ is added to $S$.						
  \item
	   \textbf{(app-subc):} 
     then $\{\defsubs(A,B), \instd(a,A)\} \subseteq S$.
	   As $r\in G_{S}(PK(\K))$,
	   we have that $\ovr(\subClass, a,A,B) \notin \OVR(\hat{\I}(\casmap))$ and hence
	   $\stru{A \subs B, a} \notin \casmap$.		
	   By definition, $A \subs B \in \K$ and, by the construction of $S$,  
		 $\hat{\I} \models A(a)$.
	   Thus, for the definition of $\CAS$-model and the semantics,
		 $\instd(a,B)$ is added to $S$.
  \end{itemize}
  To show that $S$ is indeed an answer set for $G_S(PK(\K))$, we have
	to prove its minimality with respect to the rules in the reduction.
	We can show that no model $S'\subseteq S$ of $G_{S}(PK(\K))$ such that $S'\neq S$ can
  exist: as $\hat{\I}(\casmap)$ is the least model of $\K$
  w.r.t.\ $\casmap$, $S'$ can not be a proper subset of $S$ on any of the
  facts from the input translations, nor on the derivable instance level facts 
	($\instd$, $\tripled$ or their negation). 
	Thus, $S'$ needs to contain all atoms on $\ovr$ from $S$, as for every corresponding
  clashing assumption $\stru{\alpha,\ee} \in \casmap$ the body of
  some overriding rule in $PK(\K)$ that encodes a
  clashing set for $\stru{\alpha, \ee}$ will be satisfied.
	Thus, $S'=S$ must hold.
		
	\medskip\noindent
	For item (ii), we can show that from any answer set $S$
	we can build a justified model $\I_S$ for $\Kcal$
	such that $S = I(\hat{\I}(\casmap))$ holds.
	The model can be defined similarly to the original
	proof in~\cite{BozzatoES:18}, but we need to consider auxiliary individuals in the
	domain of $\I_S$.
	Considering an answer set $S$ for $PK(\K)$, 
	we can build a model $\I_S = \stru{\I_S, \chi_S}$ as follows:
	\begin{itemize}
  \item 
    $\Delta^{\I_S} = \{ c \;|\; c \in \NIs \} \cup \{\mi{aux}^\alpha \;|\; \alpha = A \isa \exists R \in \K \}$;  
  \item
	  $a^{\I_S} = a$, for every $a \in \NIs$ and auxiliary constant of the kind $\mi{aux}^\alpha$;
	\item
	  $A^{\I_S} = \{d \in \Delta^{\I_S} \mid S \models \instd(d, A) \}$, for every $A \in \NC$;
	\item
		$R^{\I_S} = \{(d,d') \in \Delta^{\I_S} \times \Delta^{\I_S} \,|\, S \models \tripled(d, R, d') \}$ for $R \in \NR$;
  \end{itemize}
	Finally, $\casmap_S = \{\stru{\alpha, \ee} \mid I_{dlr}(\alpha) = p, \ovr(p(\ee)) \in S \}$.
	We need to show that $\I_S$ is a least justified CAS-model for $\K$, that is:
	\begin{enumerate}[label=(\alph*)]
  \item
   for every $\alpha \in \Lcal_\Sigma$ in $\Kcal$, $\I_S \models \alpha$;
  \item
   for every  $\default(\alpha) \in \Kcal$ (where $\alpha \in \Lcal_\Sigma$),
   with $|\vec{x}|$-tuple $\vec{d}$ of elements 
   in $\NIs$ such that $\vec{d} \notin \{ \ee \mid \stru{\alpha,\ee} \in \casmap \}$, 
   we have $\I_S \models \phi_\alpha(\vec{d})$.
  \end{enumerate}
	The claim can then be proven by considering the 
	effect of deduction rules for existential axioms in $G_S(PK(\K))$:
	auxiliary individuals provide the domain elements in $\I_S$
	needed to verify this kind of axioms.
	
	In particular, condition (a) can be shown 
	by cases considering the form of
	all of the (strict) axioms $\beta \in \Lcal_\Sigma$ that can occur in $\K$.
	We show in the following some of the cases (the others are similar):
	\begin{itemize}
	\item 
	  Let $\beta = A(a) \in \K$, then, by rule (pdlr-instd),
	  $S \models \instd(a,A)$. 
	  This directly implies that $a^{\I_S} \in A^{\I_S}$.
	\item 
	  Let $\beta = A \subs B \in \K$, then
	  $S \models \subClass(A,B)$. 
		If $d \in A^{\I_S}$,
	  then by definition $S \models \instd(d,A)$:
	  by rule (pdlr-subc) we obtain that $S \models \instd(d,B)$
	  and thus $d \in B^{\I_S}$.
    On the other hand, let us assume that $e \in \non B^{\I_S}$ with $S \models \non\instd(e,B)$:
		then, by the negative rule (pdlr-nsubc) we have that $S \models \non\instd(e,A)$.
		This implies that $e \in \non A^{\I_S}$, since otherwise 
		we would have $\instd(e,A) \in S$ and $S$ would be inconsistent.
		Note that if we assume $e \in \non B^{\I_S}$ but $S \not\models \non\instd(e,B)$,
		we also obtain that $e \in \non A^{\I_S}$, otherwise by (pdlr-instd) 
		and the definition of $\I_S$ we would derive that	$e \in \non B^{\I_S}$.	
	\item 
		Let $\beta = A \subs \exists R \in \K$, then
	  $S \models \supEx(A,R,\mi{aux}^\beta)$. 
		If $d \in A^{\I_S}$, then by definition $S \models \instd(d,A)$.
		By rule (pdlr-supex) we obtain that $S \models \tripled(d,R,\mi{aux}^\beta)$.
		By the definition of $\I_S$, this means that there exists an $e \in \Delta^{\I_S}$
		such that $(d,e) \in R^{\I_S}$: this implies that $\I_S \models \exists R(d)$.
		On the other hand, if we consider an $e \in \Delta^{\I_S}$ such that $e \in \non \exists R^{\I_S}$
		and $S \models \non \tripled(e, R, c)$ for each $\const(c)$ in $S$ (i.e., for each $c \in \Delta^{\I_S}$).
		Then, by the definition of $\I_S$ and the
		rules (pdlr-allnrel1) -- (pdlr-allnrel3) it holds that $S \models \AllNRel(e,R)$.
		By the negative rule (pdlr-nsupex), we have that 
		$S \models \non \instd(e,A)$. As above,	we have that $e \in \non A^{\I_S}$,
		as otherwise we would have $\instd(e,A) \in S$ 
		and $S$ would be inconsistent.
	\end{itemize}
	For condition (b), let us assume that $\default(\beta) \in \K$ with $\beta \in \Lcal_\Sigma$.
	  Let $\beta = A \subs B$. 
		Then, by definition of the translation,
	  we have that $S \models \defsubs(A,B)$.
	  Let us suppose that $b^{\I_S} \in A^{\I_S}$: then 
	  $S \models \instd(b, A)$.
	  Supposing that $\Pair{A \subs B}{b} \notin \casmap_{S}$, then
	  by definition $\ovr(\subClass, b, A, B) \notin \OVR(\hat{\I}(\casmap))$.
	  By the definition of the reduction, the corresponding instantiation
	  of rule (app-subc) has not been removed from $G_{S}(PK(\K))$.
	  This implies that $S \models \instd(b, B)$ and thus 
		$b^{\I_S} \in B^{\I_S}$.
		The contrapositive case for $b^{\I_S} \in \non B^{\I_S}$ and the 
		cases for 
		the other forms of defeasible axioms
		can be proved similarly.
	
	Thus, $\I_S$ is a CAS-model of $\K$: moreover, we can show that
  $\I_S=\hat{\I}(\casmap_S)$ holds. 
  Assuming that $\I\subset
  \I_S$ is a CAS-model of $\K$ with clashing assumption $\casmap_S$, we
  can construct an interpretation $S'\subset S$ such that $S'\models
  G_S(PK(\K))$, by removing (at least) one instance-level fact
  $\instd(d,A)$ or $\tripled(d,R,d')$ from $S$. 
  However, this would contradict that $S$ is an answer set of $PK(\K)$: 
  thus, it holds that $\I_S=\hat{\I}(\casmap_S)$.

	The justification of $\casmap_S$
	follows by verifying that the new formulation of 
	overriding rules correctly encodes the 
	possible clashing sets for the input defeasible axioms.
	Formally, since any 
	$\stru{\alpha, \ee} \in \casmap_S$ is due to 
	$\ovr(p(\ee)) \in S$ and $\ovr(p(\ee))$ is derived from the reduct 
	$G_S(PK(\K))$, it follows that $S$ must satisfy 
	some overriding rule $r$ for $p(\ee)$.
	This means that $\I_S$ must
	satisfy the clashing set $S_{\stru{\alpha, \ee}}$ for $\stru{\alpha, \ee}$ encoded 
	by the rule $r$.
	By the property defined in Theorem~\ref{theo:justified-cas-char}
  it follows that the clashing assumption $\stru{\alpha, \ee}$ is
  justified: thus, $\casmap_S$ is justified.
\end{proof}





\subsection{Complexity of Reasoning Problems}
\label{sec:complexity-appendix}

    


\subsubsection{Satisfiability}

\begin{lemma}
\label{lem:sink-3col}
Given a graph $G=(V,E)$, deciding whether
every color assignment to the nodes of degree 1 in $G$ is can be extended to
a 3-coloring of $G$ is $\Pi^p_2$-complete.
\end{lemma}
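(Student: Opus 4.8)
The plan is to prove the two bounds separately: membership in $\Pi^p_2$ directly from the quantifier structure, and $\Pi^p_2$-hardness by a reduction from the validity problem for $\forall\exists$-quantified Boolean formulas in $3$-CNF, a classical $\Pi^p_2$-complete problem. For membership, note that a color assignment to the degree-1 vertices of $G$ and a candidate $3$-coloring of $G$ are both objects of size polynomial in $|G|$, and that checking whether a given $3$-coloring is proper and agrees with a given partial assignment on the degree-1 vertices is feasible in polynomial time; hence the statement ``for every color assignment $\mu$ of the degree-1 vertices there is a proper $3$-coloring of $G$ extending $\mu$'' has the shape $\forall\,\exists$ over polynomially bounded witnesses with a polynomial-time matrix, and so is in $\Pi^p_2$.

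\textbf{The reduction.} Given $\Phi = \forall x_1\cdots\forall x_n\,\exists y_1\cdots\exists y_m\,\psi$ with $\psi = C_1\wedge\cdots\wedge C_k$ in $3$-CNF, I would build a graph $G_\Phi$ from the standard reduction of $3$-SAT to $3$-colorability, enriched with pendant vertices for the universal variables. Concretely: a base triangle on vertices $t,f,b$; for each variable $z$ (universal or existential) a vertex $v_z$ and a vertex $v_{\neg z}$, each joined to $b$ and to one another, so that $\{v_z,v_{\neg z}\}$ always receives the two colors $\mathrm{col}(t),\mathrm{col}(f)$; for each clause $C_j$ a constant-size ``OR'' gadget joined to $b$ and to the three literal vertices of $C_j$, properly $3$-colorable exactly when not all three of those literals get the color $\mathrm{col}(f)$; and finally, for each universal variable $x_i$, one fresh pendant vertex $p_i$ adjacent only to $v_{x_i}$. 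In this graph every vertex other than $p_1,\dots,p_n$ has degree at least $2$, so the degree-1 vertices of $G_\Phi$ are exactly $p_1,\dots,p_n$, and $G_\Phi$ is computable from $\Phi$ in logarithmic space.

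\textbf{Correctness.} The key observation is that, in any proper $3$-coloring, the color of $p_i$ (the universally chosen one) together with the edge $v_{x_i}b$ constrains $v_{x_i}$ as follows: if $p_i$ gets $\mathrm{col}(t)$ then $v_{x_i}$ is forced to $\mathrm{col}(f)$; if $p_i$ gets $\mathrm{col}(f)$ then $v_{x_i}$ is forced to $\mathrm{col}(t)$; and if $p_i$ gets $\mathrm{col}(b)$ then $v_{x_i}$ may take either of $\mathrm{col}(t),\mathrm{col}(f)$. Hence every truth assignment $\tau$ to $x_1,\dots,x_n$ is realized by the pendant assignment $\mu_\tau$ that gives each $p_i$ the color opposite to $\tau(x_i)$ (forcing $v_{x_i}$ to encode $\tau(x_i)$), while every pendant assignment $\mu$ merely leaves each $v_{x_i}$ ranging over a nonempty subset of $\{\mathrm{col}(t),\mathrm{col}(f)\}$. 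From this I would argue both directions: if $\Phi$ is valid, then for any pendant assignment $\mu$ pick a truth assignment $\tau$ consistent with the forced values, use validity to get a satisfying assignment of $\psi$, and read off a proper $3$-coloring of $G_\Phi$ extending $\mu$ (the clause gadgets extend because every clause has a true literal); if $\Phi$ is not valid, fix a $\tau$ with no satisfying extension and observe that any proper $3$-coloring extending $\mu_\tau$ would induce, via the $v_{y_j}$ vertices and the clause gadgets, a satisfying assignment of $\psi$ over that $\tau$, a contradiction. Thus ``every color assignment to the degree-1 vertices of $G_\Phi$ extends to a proper $3$-coloring'' holds iff $\Phi$ is valid, which establishes $\Pi^p_2$-hardness.

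\textbf{Main obstacle.} The delicate point is precisely the mismatch between the three colors available to a pendant vertex and the two truth values: the argument must confirm that quantifying over all $3^n$ colorings of $p_1,\dots,p_n$ is equivalent to quantifying over the $2^n$ truth assignments of the universal variables, i.e., that the ``$\mathrm{col}(b)$'' choices for the pendants are harmless and never create a spurious obstruction to extendability. Once the pendant gadget above is seen to ``absorb'' that third choice (forcing $v_{x_i}$ when the pendant is colored $t$ or $f$, and leaving it free when the pendant is colored $b$), the rest is a routine adaptation of the classical $3$-SAT/$3$-colorability correspondence.
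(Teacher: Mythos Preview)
Your overall plan mirrors the paper's: membership is argued directly, and hardness is by reduction from $\forall\exists$-3CNF validity via the standard 3SAT-to-3-colorability construction, with a fresh pendant $p_i$ attached to each universal-variable node $v_{x_i}$. The direction ``$\Phi$ valid $\Rightarrow$ every $\mu$ extends'' is fine. The converse, however, has a genuine gap: nothing in your pendant assignment $\mu$ constrains which permutation of the three colors is used on the palette triangle $t,f,b$, so an extending coloring is free to reinterpret which color means ``true''. Your ``main obstacle'' paragraph worries only about a pendant receiving $\mathrm{col}(b)$, but the real issue is the full $S_3$-freedom on $t,f,b$. Concretely, take $\Phi=\forall x_1\forall x_2.\,(x_1\lor x_2\lor x_2)$, which is invalid. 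In your $G_\Phi$ the degree-$1$ vertices are exactly $p_1,p_2$, and \emph{every} $\mu\colon\{p_1,p_2\}\to\{1,2,3\}$ extends to a proper 3-coloring: simply choose the coloring of $t,f,b$ so that for at least one $i$ the two constraints $v_{x_i}\neq c(b)$ and $v_{x_i}\neq \mu(p_i)$ force $v_{x_i}=c(t)$, which makes the clause gadget colorable. Hence no $\mu$ witnesses non-extendibility, and the claimed equivalence fails.

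The paper closes this gap by adding two further degree-$1$ nodes $z'_1,z'_2$, attached via fresh nodes $z_1,z_2$ that form a triangle with the base node $B$; setting $\kappa(z'_1)=\kappa(z'_2)$ to a common color then forces $B$ to that color in every extension, so the pendant assignment itself anchors the palette and with it the intended color-to-truth-value correspondence. Your reduction can be repaired along the same lines, by giving the pendant assignment control over the palette.
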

\begin{proof}
The problem is in $\Pi^p_2$, since a guess for a coloring $\kappa$ of
the degree 1 nodes of $E$ that can not be extended to a 3-coloring of
$G$ can be checked with a call to a \conp-oracle.

The $\Pi^p_2$-hardness is shown by a reduction from evaluating  QBFs
of the form $\Phi = \forall X \exists Y\, E$, where $E = \bigwedge_i C_i$ is
in CNF and each clause $C_i$ has size 3; to this end, a reduction of
3SAT to 3-colorability can be easily generalized, e.g., the one in
\url{https://www.cs.princeton.edu/courses/archive/spring07/cos226/lectures/23Reductions.pdf}.
In this and similar reductions, a graph $G$ is constructed from $E$ that is 3-colorable iff $E$ is
satisfiable. This graph has nodes $v_p$ and $v_{\neg p}$ that
correspond to the literals $p$, $\neg p$ of the variables $p$
occurring in $E$, such that color assignments to $v_p$ resp.\ $v_{\neg
p}$ correspond to truth assignments to the literals; furthermore, it
has a distinguished node $B$, such that $v_p$, $v_{\neg p}$, and $B$ form a triangle for each $p$.

We may assume that $G$ has no node of degree $1$ (else we would involve that node
with two fresh nodes in a triangle). To encode $X$, we connect
to each $v_x$ for $x\in X$ a fresh node $v'_x$, which becomes a
source. Furthermore,  we connect fresh nodes $z_1$ and $z_2$ to
$B$ and to each other (forming a triangle), and connect further
fresh nodes $z'_1$ and $z'_2$ to $z_1$ and $z_2$, respectively. The
purpose of the latter gadget is to ensure that in each 3-coloring of
the resulting graph $G'$, the node $B$ must have the
color of either $z'_1$ or $z'_2$. This will then allow to easily reduce checking
whether for an assignment $\sigma$ to $X$, $E(\sigma(X),Y)$ is
satisfiable to an 3-coloring extension test. 

The claim is that $\Phi$ evaluates to true iff every color assignment $\kappa$ to the nodes $v'_p$ is extendible to a 3-coloring
of $G'$.

($\Leftarrow$) Suppose every coloring to  the $v'_p$ is extendible to
a 3-coloring of $G'$. Then, if we color $z'_1$ and $z'_2$ with $b$, also
$B$ is colored with $b$. If we have a truth assignment
$\sigma$ to $X$ and set $\kappa(v_p) = g$ if $\sigma(p) =$ true and 
$\kappa(v_p) = r$ if $\sigma(p) =$ false, the extending 3-coloring must
encode a truth assignment $\mu$ such that $E(\sigma(X),\mu(Y))$
evaluates to true, by the properties of the graph $G$.

($\Rightarrow$) Conversely, suppose $\Phi$ evaluates to true, and
consider and coloring $\kappa$ of the degree-1 nodes. We can extend
$\kappa$ to a 3-coloring of $G'$ as follows. Suppose the node $z'_1$ has
the color $b$; then we can color $z_1$, $z_2$ and $B$ such
that the latter also has the color $b$. Now for each $x\in X$ we color
$v_x$ with $g$ and $v_{\neg x}$ with $r$  if $n'_x$ has color $r$, and 
$v_x$ with $r$ and $v_{\neg x}$ with $g$  otherwise; thus, the
coloring of the nodes $v_x$, $v_{\neg x}$ reflects a truth assignment
to $X$. As $\Phi$ evaluates to true, we can color all other $v_p$ and
$v_{\neg p}$ nodes, as well as the auxiliary nodes such that we
obtain a 3-coloring of $G'$.  
\end{proof}

\begin{manualproposition}{\ref{prop:DKB-existence}}
Let $\DKB$ be a normalized $\DLliteR$ DKB, and let 
$\casmap' = \{ \stru{\alpha,\ee} \mid D(\alpha) \in \DKB$, $\ee$ is over 
standard names$\,\}$ be the clashing assumption with all exceptions possible. 
Then, $\DKB$ has some 
justified CAS-model $\I_{\CAS} = \stru{\I, \casmap}$ such that
$\casmap\subseteq \casmap'$ iff
$\DKB$ has some CAS-model $\I_{\CAS} = \stru{\I, \casmap'}$.
\end{manualproposition}
\begin{proof}
($\Rightarrow$) Every 
justified CAS-model $\I_{\CAS} = \stru{\I, \casmap}$ of $\K$ 
is a CAS-model $\I_{\CAS} = \stru{\I, \casmap}$ of $\K$, and  
since $\casmap\subseteq \casmap'$, also $\I'_{\CAS} = \stru{\I, \casmap'}$
is a CAS-model of $\DKB$ as every exception $\stru{\alpha,\ee}$ in
$\I_{\CAS}$ is also made in $I'_{\CAS}$ (and possibly more exceptions are made). 

($\Leftarrow$) Suppose that $\DKB$ has some CAS-model of the form
$\I'_{\CAS} = \stru{\I, \casmap'}$. We then can construct some
justified CAS-model of $\DKB$ by trying to remove, one by one, the 
clashing assumptions $\stru{\alpha,\ee}$ in $\casmap'$. To this end, 
let $\Delta_0$ consist of the FO-translation $\forall \vec{x}
\phi_\alpha(\vec{x})$ of all non-defeasible axioms in $\K$, plus all
instances $\phi_\alpha(\ee)$ of defeasible axioms $D(\alpha)$ in $\K$
such that $\stru{\alpha,\ee}\notin\casmap_0$. Furthermore, let $ex_1$,
$ex_2$,\ldots, $ex_i = \stru{\alpha_i,\ee_i}$, \ldots\  be a (possibly infinite) enumeration
of $\casmap_0$, and let $\casmap_0 = \casmap$. We then build the sequences
theories $\Delta_i$ and clashing sets $\casmap_i$, $i\geq 1$ inductively as follows:
$$
(\Delta_{i+1}, \casmap_{i+1}) = \left\{
\begin{array}{@{}ll@{}}
(\Delta_i, \casmap_i) & \textrm{ if } \Delta_i \cup \{
    \phi_{\alpha_i}(\ee_i)\} \textrm{ is unsatisfiable}\\
(\Delta_i \cup \{\phi_{\alpha_i}(\ee_i)\}, \casmap_i \setminus \{ \stru{\alpha_i,\ee_i}\}  & \textrm{ otherwise }
\end{array}
\right.
$$
We obtain then that $\Delta = \bigcup_{i\geq 0} \Delta_i$ is satisfiable iff
$\Delta_0$ is satisfiable, and in the latter case it satisfies
$\casmap = \bigcap_{i\geq 0} \casmap_i$; furthermore, if
$\stru{\alpha_{i+1},\ee_{i+1}}$ was not removed, then $\Delta_i
\models \neg \phi_{\alpha_i}(\ee_i)$ and thus $\Delta\models \neg
\phi_{\alpha_i}(\ee_i)$. The formula  $\neg \phi_{\alpha_i}(\ee_i)$
amounts for axioms $\alpha_i$ of the defeasible (normal) form $A \isa
(\neg) B$, $R \isa S$, $\mathrm{Dis}(R,S)$, and $\mathrm{Irr}(R),$
to the minimal clashing set in Table~\ref{tab:clashingsets}; for
$\mathrm{Inv}(R,S)$, it amounts to $R(e_1,e_2)\land\neg S(e_2,e_1)\,\lor\,
\neg R(e_1,e_2)\land S(e_2,e_1)$, which is logically equivalent to 
$(R(e_1,e_2)\lor S(e_2,e_1))\land (\neg R(e_1,e_2)\lor \neg S(e_2,e_1))$.
As $\Delta$ is Horn and $\Delta \models R(e_1,e_2)\lor S(e_2,e_1)$, it
follows that either $\Delta \models R(e_1,e_2)$ or $\Delta \models
S(e_2,e_1)$ holds; thus either $\Delta \models R(e_1,e_2)\land \neg
S(e_2,e_1)$ or $\Delta \models \neg R(e_1,e_2)\land S(e_2,e_1)$
holds, i.e., one of the two clashing sets is indeed derived. 

Consequently, every model $\I$ of $\Delta$ gives rise to a justified CAS-model
$\I_{\CAS} = \stru{\I,\casmap}$ of $\K$.
\end{proof}

\subsubsection{Entailment Checking}

\begin{manualtheorem}{\ref{theo:DKB-entail-conp}}
Given an exception-safe DKB $\DKB$  and an axiom $\alpha$,  deciding
$\DKB\models\alpha$ is \conp-complete; this holds also for data
complexity and instance checking, i.e., $\alpha$ is of the form $A(a)$
for some assertion $A(a)$.
\end{manualtheorem}
\begin{proof}
To  refute $\DKB\models \alpha$,
we need to show that
some justified CAS-model  $\I_{\CAS} = \stru{\I,\casmap}$ of $\DKB$ 
exists such that $\I \not\models \alpha$. Without loss of generality,
we assume that $\alpha$ is normalized.

Depending on the type of $\alpha$, given $\casmap$ and a name
assignment $\nu$, a we push the latter first to $\K$ and then proceed as follows.

\begin{enumerate}[label=(\alph*)]
 \item If $\alpha$ is an assertion $A(a)$, $R(a,b)$, $\non A(a)$, or
   $\non R(a,b)$ we can check the existence of $\I_{\CAS}$ by the use
   of Theorem~\ref{theo:named}, and we can assume that $\I$ is named
   relative to $N$ with $N = sk(N_\DKB)$. That is, we can use the
   materialization calculus relative to $\casmap$, and verify given a
   justified $\casmap$ that $\alpha$ can not be derived.

\item If $\alpha$ is an inclusion axiom $A \isa B$, then we need to
show that for some element $e$ it holds that $\I \models A(e)$ and
$\I\models \neg B(e)$. To deal with this, we first add to $\K$ the
axioms $Aux \isa A$, $Aux \isa \non B$; this does not 
compromise exception safety nor that $\casmap$ is justified. We may then
assume without loss of generality that $\I \models Aux(e)$, i.e., $\I \not\models \neg Aux(e)$.

We next add to $\K$ an assertion $A_e(a_e)$, where $A_e$ and $a_e$ are a
fresh concept and individual name, respectively; this serves to
give $e$ a name if it is outside the elements named in $\I$ by Skolem terms.
This addition again does neither 
compromise exception safety nor that $\casmap$ is justified, and $\I$
can be adjusted to it.

We then check whether $\neg Aux(a_e)$ is not derivable from the
resulting DKB $\K'$ under $\casmap$; this holds iff some $\I$ with $e$
not named by some Skolem term of the  $\K$ exists. 

Otherwise, $e$ must be named by some Skolem term $t$ of $\K$. We thus
check that for none such $t$, $\neg Aux(t)$ is derivable from $\K'$
under $\casmap$; the depth of $t$ can be polynomially bounded.

The checks can be done in nondeterministic logspace, and thus deciding
$\K \not\models \alpha$ under $\casmap$ is feasible in
polynomial time. 

\item if $\alpha$ is a role inclusion axiom $R\isa S$, then we need to
  show that for some elements $e,e'$ it holds that $\I \models
  R(e,e')$ and $\I\models \neg S(e,e')$. Similarly as for $A\isa B$
  above, we can use an auxiliary role $Aux$ and axioms $Aux \isa
  R$, $Aux \isa \neg S$, proceed with assuming that $\I\not\models
  \neg Aux(e,e')$, introduce $a_{e},a_{e'}$ for $e,e'$ in $\K$, and 
  test then that from $\K'$ under $\casmap$, $\neg Aux(t,t')$ is not
  derivable, where $t,t'$ range over the Skolem terms of depth bounded
  by a polynomial and $a_e$
  resp.\ $a_{e'}$, where $e=e'$ must be respected. Again, this allows
   us to show that $\K\not\models \alpha$ under $\casmap$ in
  polynomial time.
  
\item in all other cases, we can proceed similarly as in a) and b),
as to refute $\alpha$, we need to show that for some element $e$ 
(resp.\ elements $e,e'$), $\I$ satisfies some literals $\alpha_i(e)$, $1=1,\ldots,k$.
(resp.\ $\alpha_i(e,e')$, $1=1,\ldots,k$). 
In particular, for $A \isa \neg B$: $A(e)$, $B(e)$; for
$\mathrm{Dis}(R,S)$: $R(e,e')$, $S(e,e')$; for 
$\mathrm{Irr}(R)$: $R(e,e')$; for $\mathrm{Inv}(R,S)$: either
$R(e,e')$, $\neg S(e,e')$ or  $\neg R(e,e')$, $S(e,e')$. Along the
same lines as above we can introduce an auxiliary concept resp.\ role
$Aux$, individual names
$a_e$, $a_{e'}$ etc.\ and decide $\K\not\models \alpha$ under
$\casmap$ in polynomial time.
\end{enumerate}
Thus, to decide $\K\not\models \alpha$, we can guess a justified clashing assumption $\casmap$ over $N_\K$ 
together with a clashing set $S_{\stru{\alpha,\ee}}$ for each $\stru{\alpha,\ee} \in \casmap$ 
and check (i) that $\casmap$ is satisfiable, (ii) that all
$S_{\stru{\alpha,\ee}}$
are derivable from $\DKB$ under $\casmap$, and (iii) that 
$\DKB \not\models \alpha$. Each of the steps (i)--(iii) is feasible in polynomial time. Consequently, 
the entailment problem $\K\models \alpha$ is in \conp. 

\smallskip

\noindent{\em \conp-hardness.} The \conp-hardness can be shown by a reduction from 
inconsistency-tolerant reasoning from $\DLliteR$
KBs under AR-semantics \cite{DBLP:conf/rr/LemboLRRS10}. Given 
a $\DLliteR$ KB $\Kcal = \Acal\cup \Tcal$ with ABox $\Acal$ and TBox
$\Tcal$,
a repair is a maximal subset $\Acal'\subseteq \Acal$ such
that $\Kcal' = \Acal'\cup\Tcal$ is satisfiable; an assertion $\alpha$
is AR-entailed by $\Kcal$, if $\Kcal'\models \alpha$ for every repair
$\Kcal'$ of $\Kcal$. As shown by Lembo et al.{},
deciding AR-entailment is \conp-hard; this continues to hold under UNA and if all
assertions involve only concept resp.\ role names.

Let $\hat{\DKB} \,{=}\, \Tcal \,{\cup}\, \{ D(\alpha) \mid \alpha \,{\in}\, \Acal\}$, i.e., 
all assertions from $\Kcal$ are defeasible. 
As easily seen, under this assumption the
maximal repairs $\Acal'$ correspond to the justified clashing assumptions
by $\casmap = \{ \stru{\alpha,\ee}\mid$ $ \alpha(\ee) \in
\Acal\setminus\Acal'\}$. Thus, $\Kcal$ AR-entails $\alpha$ iff
$\hat{\DKB} \models \alpha$,
proving \conp-hardness. 
Furthermore, in order to establish
the result for exception safety without looking further into the
structure of $\K$, we may apply the normal form transformation of
Lemma~\ref{lem:normal-form}; as defeasible
assertions $\default(A(a))$ are transformed to $A'(a)$ and $\default(A'
\isa A)$ and since $A'$ does not occur on the right hand side of any axiom,
no Skolem terms can be derived that feeds into the positive literal of
the clashing set $\{ A'(a), \neg A(a)\}$; similarly,
$\default(R(a,b))$ is translated to $R'(a,b)$ and $\default(R' \isa
R)$ and similarly exception safety is warranted.

%

As Lembo et al.{} proved the \conp-hardness under data-complexity,
with the normal form transformation (which then requires merely the
addition of the assertions $A'(a)$ resp.\ $R'(a,b)$, or a renaming of
the symbols) the claimed result for data complexity follows.
\end{proof}

\begin{manualproposition}{\ref{prop:DKB-entail-conp}}
Given a DKB $\DKB$, deciding where $\DKB\models\alpha$ is \conp-hard
even if no roles occur in $\DKB$ and $\alpha$ is 
an assertion $A(a)$.
\end{manualproposition}
\begin{proof} 
\citeN[Theorem 16]{DBLP:journals/jcss/CadoliL94} showed that 
given a positive propositional 2CNF $F$ over variables $V$, 
deciding whether an atom $z$ is a circumscriptive consequence of 
$F$ is \conp-hard if all variables except $z$ are
minimized. In circumscription, the latter is denoted as
$\mathit{CIRC}(F;P;Z)\models z$ where $P=V\setminus Z$ and $Z = \{
z\}$, where $\mathit{CIRC}(F;P;Z)$ is defined as a QBF with free
variables $V$; semantically, 
$\mathit{CIRC}(F;P;Z)$ captures
the $P;Z$-minimal models of $F$, which are the models $M$ of $F$ for which no model $M'$ of $F$ 
exists such that (a) $M \cap Q = M'\cap Q$, where $Q = V- (P\cup Z)$,
and (b) $M'\cap P \subset M\cap P$.

We reduce the inference $\mathit{CIRC}(F;P,Q;Z)\models z$
to entailment $\DKB \models A(a)$, where the variables in $V$ are used
as concept names, as follows.
\begin{itemize}
   \item For each clause $c = x \lor y$ in $F$, we add to $\DKB$ an
axiom $x \isa \non y$ if $z\neq x,y$ and an axiom $x \isa z$
(resp.\ $y\isa z$) if $z\,{=}\,y$ (resp.\ $x\,{=}\,z$). Informally, we
flip in this representation the polarity of all variables except $z$,
in order to obtain $\DLliteR$ axioms.

\item Furthermore, for each
variable $x\neq z$, we add 
a defeasible assertion
$D(x(a))$, where $a$
is a fixed individual.
\end{itemize}

This construction effects that the justified DKB-models of
$\DKB$ correspond to the models of
$\mathit{CIRC}(F;P,\emptyset;\{z\})$, where the minimality of exceptions
in justified DKB-models emulates the minimality of circumscription
models.
Formally, $\DKB \models  z(a)$ iff $\mathit{CIRC}(F;P,\emptyset;\{z\})\models z$. 

($\Leftarrow$) Suppose $\DKB \not\models z(a)$; then, some justified
model $\IC = \stru{\I,\casmap}$ of $\DKB$ exists such that
$\IC\not\models z(a)$.  Let $M = \{ v \in V \mid v\neq z, \IC
\not\models v(a) \}$; we claim that $M$ is a $P;Z$-minimal model of
$F$. Suppose this is not the case.  Then, some model $M'$ of $F$ exists
such that (a) and (b) hold. Then, some variable $v \in (M \cap P)
\setminus (M' \cap P)$ exists, which means that $F \cup \{ \neg x \mid
x\neq z, x \in V- M \} \not\models v$. As $\IC\not\models v(a)$, an
exception to $D(v(a))$ was made in $\casmap$. However, by switching 
$v(a)$ in $\IC$ to true, we obtain an NI-congruent CAS-interpretation
$\IC'$ that satisfies $\DKB$ relative to $\casmap$. This means 
that the exception to $D(v(a))$ is not justified, which is a
contradiction.

($\Rightarrow$) Suppose that $\mathit{CIRC}(F;P,\emptyset;\{z\})\not\models z$, i.e., 
some $P;Z$-minimal model $M$ of $F$ exists such that $M\not\models z$.
We define $\IC = \stru{\I,\casmap}$ where $\IC \models v(a)$ iff $v
\in V\setminus M$ and $\casmap$ contains all exceptions for $D(v(a))$
where $v \in M$. Similarly as in the if-case, it is argued that $\IC$ is a
justified model of $\DKB$. As $\IC \not\models z(a)$, it follows that
$\DKB \not\models z(a)$. This concludes the proof of the claim.

Similarly as in the proof of Theorem~\ref{theo:DKB-entail-conp},  
the defeasible assertions $D(x(a))$ can be moved to defeasible axioms $D(c \isa x)$ with a single assertion $c(a)$.
\end{proof}

\subsubsection{Conjunctive Query Answering}

\begin{manualtheorem}{\ref{theo:CKR-CQ-answering}}
Given an exception-safe DKB $\DKB$ and a
Boolean CQ $Q$, deciding whether $\DKB \models Q$ is
(i) $\Pi^p_2$-complete in combined complexity and (ii)
\conp-complete in data complexity.
\end{manualtheorem}
\begin{proof}
To start with (i), as for  membership in $\Pi^p_2$, to refute $Q$ we
can guess for a justified
CAS-model $\I_{\CAS} = \stru{\I, \casmap}$ such that
$\I_{\CAS} \not\models Q$ 
the clashing assumption $\casmap$ on $N_\DKB$ and a name
assignment $\nu$, which we can push to the knowledge base.
Since $\DKB$ is exception safe, we can decide
in \nlogspace\ whether $\casmap$ is satisfiable
relative to $\nu$ (which can be pushed to $\K$)
and can indeed give rise to a desired justified CAS-model $\I_{\CAS}$ of $\DKB$. We then can use an
\np\ oracle to check whether for some
polynomial number of Skolem terms $ST$, 
where the number depends on $Q$ and $\DKB$,
the query has a match on $N_\K \cup ST$ in 
the least CAS-model $\hat{\I}_\K(\casmap,\nu)$ of
$\K$;
to this end, each atom $A(t)$ resp.\ $R(t,t')$ in the match must be 
derived by applying the axioms (that is, by unraveling $\I_{\CAS}^{N_\K}$); 
this will ensure that a match exists
in each CAS-model $\I_{\CAS} = \stru{\I, \casmap}$ of $\K$.
If the oracle answer is no, then some $\I_{\CAS}$ such that
$\I_{\CAS}\not\models Q$ exists. 
Consequently, refuting $\DKB \models Q$ is in $\Sigma^p_2$, which proves the membership part.


The $\Pi^p_2$-hardness of (i) is shown by a reduction from a
generalization of deciding whether a graph is 3-colorable: given an
(undirected) graph $G=(V,E)$, can every color assignment to the nodes of degree 1
in $G$ (i.e., source nodes) be extended to a 3-coloring of $G$? This
problem is $\Pi^p_2$-complete (see Lemma~\ref{lem:sink-3col}).

We construct a DKB $\DKB$ as follows. We use roles $R$, $R_r$, $R_g$,
$R_b$, and $E$, and as individual names $r,g,b$ and each $v\in V$, 
where we assume that names are unique (this can be easily enforced by
adding further auxiliary axioms). Informally, $R$ and the $R_c$ serve
to encode color assignments to nodes and $E$ to represent
the edges of the graph. We add to $\DKB$ the following axioms:

\begin{itemize}
 \item defeasible axioms $D(R_r(v,r))$,    $D(R_g(v,g))$,    $D(R_b(v,b))$, for each
node $v$ of degree $\neq 1$;
    
\item $R_r \isa R$, $R_g \isa R$, $R_b \isa R$;

\item $\exists R_r \isa \neg \exists R_g$, $\exists R_r \isa \neg \exists R_b$,
  $\exists R_g \isa \neg \exists R_b$;
\end{itemize}
and the assertions 
 \begin{itemize}
  \item $R(v,r)$ $R(v,g)$, $R(v,b)$ for each non-source $v$;
  \item  $E(r,g)$, $E(r,b)$, $E(g,b)$,  $E(g,r)$, $E(b,r)$, $E(b,g)$.
\end{itemize}
Intuitively, we must make for each source node
$v$ an exception to two of the three axioms  $D(R_r(v,r))$,    $D(R_g(v,g))$,
$D(R_b(v,b))$, and in this way assign a color to $v$.
E.g., for assigning red $(r)$ the exceptions are $R_g(v,g))$ and 
$R_b(v,b)$ which have the minimal clashing sets $\{ \neg
R_g(v,g)\}$ and $\{ \neg R_b(v,b)\}$, respectively; for the
other assignments this is analogous.
Every choice $\kappa$ of a coloring for the sources 
in $G$ thus gives rise to a natural justified clashing
assumption $\CAS_\kappa$.

The Boolean query that we construct  is 
$$
Q = \exists \vec{y} \bigwedge_{v\in V} \,R(v,y_v) \land \bigwedge_{e =
    (v,v')\in E} \,E(y_v, y_{v'}).
$$ 
Informally, the graph $G$ is encoded in $Q$, where the variables
$y_v$ range over the colors of the nodes $v$; with $R(v,y_v)$ we pick
a color for a match where for sources only the color chosen
by $\kappa$ is available, while for the other nodes all three colors
$r$, $g$, $b$ are available. The $E$-atoms enforce that adjacent nodes
must have different color.

It is then not difficult to verify that $\DKB \models Q$
 holds under UNA iff 
for every coloring $\kappa$ of the sources, we have $\I_{\CAS_\kappa} \models Q$, i.e.,
the coloring $\kappa$ of the sources can be extended to a
3-coloring of the whole graph $G$. As $\DKB$ is clearly constructable from
$G$ in polynomial time, this proves $\Pi^p_2$-hardness.

(ii) As for data complexity, we note that the check where 
$Q$ has no match in any $\I_{\CAS}''$ is feasible in polynomial time,
as the number of variables in the query is fixed and thus only
constantly many Skolem terms $ST$ have to be added to
$N_\K$ for a query match in 
the least CAS-model $\hat{\I}_\K(\casmap,\nu)$ of
$\K$,
for which only polynomially many possibilities
exist; furthermore, the inference of atoms $A(t)$ resp.\ $R(t,t')$ is
feasible in polynomial time. Hence, the
problem is in \conp. The \conp-hardness follows from Theorem~\ref{theo:DKB-entail-conp}.
\end{proof}

\subsection{Complexity of Reasoning Problems with Unnamed Individuals}
\label{sec:unnamed-appendix}

    
\begin{manualproposition}{\ref{prop:n-bounded-mc}}
Given an $n$-de safe DKB $\DKB$, where $n$ is polynomial in the size
of $\DKB$,  and a
clashing assumption $\casmap$ defined on $N_\K$,
deciding whether $\K$ has (i) some arbitrary CAS-model 
resp.\ (ii) some justified CAS-model of form $\I_{\CAS} = \stru{\I,\casmap}$ is 
\np-complete resp.\ $D^p$-complete in general but feasible in polynomial time if 
$n$ is bounded by a constant.
\end{manualproposition}

\begin{proof} We can compute 
the (polynomially many) Skolem terms $t_i$, $i=1,\ldots,m \leq n$ that
feed into clashing assumptions for $\DKB$ using the algorithm in the 
proof of Proposition~\ref{prop:complexity-recognize-safe} for deciding
$n$-de safety in polynomial time. 

As for (i), in order to show that some  
CAS-model $\I_{\CAS}= \stru{\I,\casmap}$ of $\K$ exists, we
need to show that no inconsistency can be derived from $\K$ under
$\casmap$ relative to some name assigmment $\nu$ (which can be
pushed to $\K$).

We guess for each $i=1,\ldots,m$ whether $t_i = a_j$
holds for one or none of the individuals $a_1,\ldots,a_n$ that name exceptions in
$\casmap$.  Relative to this guess, we then decide in
polynomial time whether $\K$ is satisfiable.

To this end, we slightly modify a common algorithm that decides
unsatisfiability in absence of defeasible axioms by
nondeterministically deriving opposite assertions $D(t),\neg D(t)$
resp.\ $R(t,t')$, $\neg(t,t')$ from at most two assertions  in $\K$
in polynomially many steps using logarithmic workspace; these
assertions are either over $N_\K$, the same Skolem term $t$ or
$t'=f_R(t)$ for some role $R$, where outside $N_\K$ we need not store $t$. In the
extension, we keep track of which terms $\bar{t},\bar{t}'$ in the
assertion $(\neg)D(\bar{t}_h)$ resp. $(\neg)R(\bar{t}_h,\bar{t}'_h)$
derived in step $h$ are 
among the subterms of some $t_1,\ldots,t_m$; and if $\bar{t} = t_i$
resp.\ $\bar{t}' = t_i$ and $t_i=a_j$ is in the guess, then 
$D(a_j)$, $R(a_j,\bar{t}')$ resp.\ $R(\bar{t},a_j)$ can be derived.

As there are only polynomially many subterms of $t_1,\ldots,t_m$
(otherwise $\K$ would be recursive, thus not $n$-de bounded), the
bookkeeping for respecting the subterms is feasible in logarithmic
work space (each subterm $t$ may have an identifier $id(t)$, and a table 
computed before hand holds $(id(t),R,id(f_R(t)))$). 

As the algorithm works in polynomial time, we can decide in this way
also whether $\K$ is satisfiable under $\casmap$ relative to a name
assignment $\nu$.

As for (ii), we must in addition to (i) check that for each clashing
assumption $\stru{\alpha,\ee}$ in $\casmap$, some clashing set $S_{\stru{\alpha,\ee}}$ can be derived. We 
utilize here a similar guess and check algorithm as in
(i) to decide whether a given assertion $\alpha$ is not derivable from $\K$
under $\casmap$; i.e., we guess $t_i=a_j$ for all terms $t_i$ and then check in
nondeterministic logspace that $\alpha$ can not be derived.
Hence, deciding that for some $\stru{\alpha,\ee}$ no clashing
set $S_{\stru{\alpha,\ee}}$ is derivable is in \np, which implies that the
additional check is in \conp. Consequently, in case (ii) 
membership in $D^p$ follows. 

If  $n$ is bounded by a constant, then in
the algorithm above the guess for the equalities $t_i = a_j$  can be
eliminated, by cycling through all (polynomially) many
possibilities, which results in \ptime\ membership.

For the hardness proofs, for (i) we reduce deciding 3-colorability of
a graph $G=(V,E)$ to deciding CAS-model existence; we
provide for this a construction that can be reused for (ii).

We construct $\DKB$ as follows. For each edge $e_i =
(v_{i,1},v_{i,2})$ in $E = \{ e_1,\ldots, e_m\}$, we introduce two
individuals $v_{i,1}$ and $v_{i,2}$, and for each $v_{i,j}$ we
introduce three further individuals $col_{i,j}$, $c1_{i,j}$ and
$c2_{i,j}$. Informally, the latter three individuals will serve to
take the three colors red, green, and blue by roles $\mi{R_r}$,
$\mi{R_g}$, and $\mi{R_b}$ such that the color assigned to 
$col_{i,j}$ will be the color assigned to the occurrence of the node
$v_{i,j}$ in the edge $e_i$.

The alignment of
colors assigned to $v_{i,j}$ and $v_{i',j'}$ that represent the same
node $v_k$ in $V = \{v_1,\ldots, v_n\}$ will be ensured with the help of
an auxiliary node $check_{v_k}$. To this end, the nodes $col_{i,j}$ and
$col_{i',j'}$ will send their assigned colors to this node using roles 
$\mi{RCheck}$, $\mi{GCheck}$ and $\mi{BCheck}$, which tests
for their equality. That $v_{i,1}$ and  $v_{i,2}$ are colored
differently will be checked with the help of auxiliary roles 
$\mi{RNeighbor}$, $\mi{GNeighbor}$,
$\mi{BNeighbor}$. 

Finally, we use an individual $esc$ that allows us to model a state in
which no $v_{i,j}$ has a color assigned. This state however, requires
an exception to an axiom.

In the construction, we use a domain predicate, expressed by a
concept $\mi{Dom}$ that will be asserted for all individuals in $\K$
and enforced to be false for all other elements; roles between
individuals can only be in the domain, thus do not involve unnamed individuals.
Furthermore, we shall restrict roles between individuals by negative
role assertions.

The DKB $\K$ consists of the following assertions, axioms and
defeasible axioms:

\begin{enumerate}
    \item\label{e:1}  $V(v_{i,j})$ for all $v_{i,j}$  and $\mi{Dom}(a)$ for each $a=
        v_{i,j}, col_{i,j}$, $c1_{i,j}$, $c2_{i,j}, e$,
    $\mi{check}_{v_k}$, $esc$, $esc'$;

    \item\label{e:2} $V \isa \exists R$,  $V \isa \exists G$,  $V \isa \exists B$;

    \item\label{e:3}  $\exists R^- \isa \non \exists G^-$, $\exists G^- \isa \non
        \exists B^-$, $\exists R^- \isa \non \exists B^-$;

   \item\label{e:4} $\exists R^- \isa \exists \mi{RNeighbor}$,  $\exists
        G^- \isa \exists \mi{GNeighbor}$, 
         $\exists B^- \isa \exists \mi{BNeighbor}$;
   \item\label{e:5}
    $\exists R^- \isa \non\exists \mi{RNeighbor}^-$,  $\exists G^- \isa \non\exists \mi{GNeighbor}^-$, 
         $\exists B^- \isa \non\exists \mi{BNeighbor}^-$; 
  \item\label{e:6} $\exists R^- \isa \exists \mi{RCheck}$,  $\exists G^- \isa \exists \mi{GCheck}$,  $\exists B^- \isa \exists \mi{BCheck}$;
 \item\label{e:7} $\exists\mi{RCheck}^- \isa \non\exists\mi{GCheck}^-$, $\exists\mi{GCheck}^- \isa
        \non\exists\mi{BCheck}^-$,  $\exists\mi{RCheck}^- \isa \non\exists\mi{BCheck}^-$
   \item\label{e:8} $\exists R^- \isa \non E$;         
   
  \item\label{e:9} for each role $X$, we add the axiom $\exists X^- \isa
        \mi{Dom}$ and limit its range by adding $\neg X(a,b)$ for
        each pair $(a,b)$ of individuals from above that is not allowed as follows:
        
        \begin{itemize}
         \item for $R,G,B$  we allow $(v_{i,j},col_{i,j})$,
                $(v_{i,j},c1_{i,j})$, $(v_{i,j},c2_{i,j})$,  and for
                $R$ in addition $(v_{i,j},esc)$; 
         \item for $\mi{RCheck}$, $\mi{GCheck}$, $\mi{BCheck}$, we
                allow $(col_{i,j},check_k)$,  $(c1_{i,j},c1_{i,j})$,  $(c2_{i,j},c2_{i,j})$,  
                were $v_{i,j} = v_k$, and $(esc,esc)$;
         \item for  $\mi{RNeighbor}$, $\mi{GNeighbor}$, $\mi{BNeighbor}$, we
                allow $(col_{i,j},col_{i',j'})$, $(col_{i',j'},col_{i,j})$, 
                where  $v_{i,j} = v_k$, $v_{i',j'} = v_k'$, and
                $(v_k,v_k') \in E$; furthermore $(c1_{i,j},c2_{i,j})$
    and  $(c2_{i,j},c1_{i,j})$, for all $v_{i,j}$, and
    $(esc,esc')$; 
 \end{itemize}        
 \item  $\default(Dom \isa \bot)$ and $\default(E(esc))$.
 \end{enumerate}

We define the clashing assumption $\casmap$ to have an
exception of $\default(Dom \isa \bot)$ for all individuals $a$ where have
asserted $Dom(a)$ above.

We note that $\K$ has for $\casmap' = \casmap \cup \{
    \stru{E(esc),()} \}$ (i.e., when making also an exception to $\default(E(esc))$)    
a CAS-model $\I'_{\CAS} =
\stru{\I',\casmap'}$: if  in $\Ical'$  the atomic concept instances are those mentioned in (1),
and the role instances atoms are: $R(v_{i,j},esc)$,
$\mi{RCheck}(esc,esc')$, $\mi{RNeighbor}(esc,esc')$; 
$G(v_{i,j},c1_{i,j})$, $\mi{GCheck}(c1_{i,j},c2_{i,j})$; and
$B(v_{i,j},c2_{i,j})$, $\mi{VCheck}(c2_{i,j},c1_{i,j})$
for all  $v_{i,j}$; 
then by defining
the Skolem functions appropriately 
we can obtain a CAS-model which is named by $N_\K$.

On the other hand, it turns out that $\K$ has some CAS-model 
$\I_{\CAS} = \stru{\I,\casmap}$ (i.e., when making no exception to
$\default(E(esc))$) under UNA iff $G$ is 3-colorable.

To see this, if $G$ is 3-colorable, then we can reassign the roles 
$R(v_{i,j},esc)$, $G(v_{i,j},c1_{i,j})$, and  $B(v_{i,j},c2_{i,j})$ in
$\I'$ to $C(v_{i,j},col_{i,j})$,  $C_1(v_{i,j},c1_{i,j})$, and  $C_2(v_{i,j},c2_{i,j})$ to  
where $C$ is the color of the node $v_k$ such that $v_{i,j} = v_k$
in the 3-coloring, and $C_1$ and $C_2$ are the other two colors. This
then requires to set up, in a determined way, 
for $col_{i,j}$ the role instances
$\mi{CCheck}(col_{i,j},check_{i,j})$ and $\mi{CNeighbor}(col_{i,j},col_{i',j'})$,
where $v_{i,j}=v_k$ and $v_{i',j'}=v_{k'}$ and $(v_k,v'_k) \in E$,
for $c1_{i,j}$ the role instances $\mi{C_1Check}(c1_{i,j},c1_{i,j})$, 
$\mi{C_1Neighbor}(c1_{i,j},c2_{i,j})$, and for 
for $c2_{i,j}$ analogously the role instances $\mi{C_2Check}(c2_{i,j},c2_{i,j})$, 
$\mi{C_2Neighbor}(c2_{i,j},c1_{i,j})$. Finally, we make $E(esc)$ true
(further roles $\mi{RCheck}(esc,esc')$, $\mi{RNeighbor}(esc,esc')$
could be removed). The resulting CAS-interpretation $\I_{\CAS}$ is then a model of $\K$.

Conversely, if $\K$ has some CAS-model $\I_{\CAS} = \stru{\I,\casmap}$,
then $E(esc)$ is satisfied in it. 
Hence no role $R(v_{i,j},esc)$ is satisfied in $\IC_{\CAS}$, which means
that for each $v_{i,j}$, for some color $C$, $C\in\{R, G, B\}$
the role $C(v_{i,j},col_{i.j})$ holds in $\I$. As then the role 
$CCheck(col_{i,j},check_k)$ also holds, by the axioms in (\ref{e:7}) all $v_{i',j'}$
that correspond to the node $v_k$ will have the same color $C$.
Furthermore, the axiom $\exists C^- \isa \non\exists \mi{CNeighbor}^-$ is satisfied,
which implies that $v_{1,2}$ has a different color $C'\neq C$. Thus, 
we obtain from $\I_{\CAS}$ a 3-coloring of the graph $G$. This proves
the \np-hardness of case (i). 

We show the $D^p$-hardness of case (ii) by a reduction from 3COL-3UNCOL, i.e.,
given graphs $G_1$ and $G_2$, decide whether $G_1$ is 3-colorable and 
$G_2$ is not 3-colorable. We observe that the DKB $\K$
defined for the graph $G$ above has some justified CAS-model of the form 
$\I_{\CAS} = \stru{\I,\casmap \cup \{ \stru{E(esc),()} \}}$ iff $G$
is not 3-colorable.  

We thus take for $G_1$ and $G_2$ two copies $\K_1$ and
$\K_2$, respectively of the construction as above (using disjoint vocabularies), and set $\K = \K_1\cup\K_2$
and $\casmap = \casmap_1 \setminus \{ \stru{E_1(esc_1),()} \} \cup
\casmap_2$. Then, $\K$ some justified CAS-model of form $\I_{\CAS} =
\stru{\I,\casmap}$ iff $G_1$ is 3-colorable and $G_2$ is not 3-colorable.

As easily seen, $\K$ is acyclic and 
its TBox is the same for each graph $G$. Hence, $\K$ is $k$-chain bounded for some
constant $k$ 
and thus also $n$-de bounded
for some $n$ polynomial in $|\K'|$. This proves the
result under the stated restrictions, which moreover also holds under
data complexity.
\end{proof}
%
We remark that from the proof of
  Proposition~\ref{prop:n-bounded-mc}, we obtain that DKB-model
  checking, i.e., decide whether an interpretation $\I$ is a DKB-model
  of an DKB $\DKB$ is \conp-hard, as the CAS-model $\I'_{\CAS}$ for
  the DKB $\K$ constructed for the graph $G$ is justified iff $G$ is
  not 3-colorable. On the other hand, for $n$-de safe $\K$ where $n$
  is bounded by a polynomial in the size of $\K$, the problem is in
  \conp\ since it reduces to checking whether the clashing assumption
  $\casmap$ that contains all instances of axioms of $\DKB$ over
  $N_\K$ that are violated by $\I$, is justified.  That is, for such
  DKBs, the model checking problem is \conp-complete.  
   
\begin{manualtheorem}{\ref{theo:comp-n-bounded}}
Given an $n$-de safe DKB $\DKB$, where $n$ is bounded by a polynomial
in $|\DKB|$, (i) deciding $\DKB\models\alpha$ for an axiom
$\alpha$ and (ii) BCQ answering $\DKB\models Q$ are both
$\Pi^p_2$-complete. In case $n$ is bounded by a constant,  
(i) is $\conp$-complete while (ii) remains $\Pi^p_2$-hard.
\end{manualtheorem}

\begin{proof} Regarding the $\Pi^p_2$-membership results, 
to show in (i) that $\K\not\models \alpha$, we can
similarly proceed as in Theorem~\ref{theo:DKB-entail-conp} and 
guess a clashing assumption $\casmap$ for $\K$ on $N_\K$ and
a name assignment $\nu$
then check by Proposition~\ref{prop:n-bounded-mc} with an \np-oracle 
that some justified CAS-model $\I_{\CAS} = \stru{\I,\casmap}$ exists
relative to $\nu$.
If so, we check whether $\K\not\models \alpha$ relative to $\casmap$
and $\nu$ using an \np\ oracle, where we proceed depending on the type of
$\alpha$ as follows:

\begin{itemize}
    \item  If $\alpha$ is a positive or negative assertion, then we
     use the guess and check algorithm described in the proof of case
      (ii) of Proposition~\ref{prop:n-bounded-mc}.  
    \item In the other cases, we proceed similarly 
    as in the proof of Theorem~\ref{theo:DKB-entail-conp}: we introduce an auxiliary
    concept resp.\ role $Aux$, fresh individual names $a_e$ resp.\ $a_{e'}$ and check that,
   relative to $\casmap$
   and a guess for the equalities $t_i=a_j$ of the Skolem terms $t_1,\ldots,t_m$ that
   feed into clashing assumptions for $\DKB$ (which can be computed in
   polynomial time, cf.\ proof of
   Proposition~\ref{prop:complexity-recognize-safe}) to the
   individuals $a_1,\ldots,a_n$ that name exceptions in $\casmap$, 
   we can not derive $\neg Aux(t)$ resp.\ $\neg Aux(t,t')$ for some
   terms $t,t'$ that range over the Skolem terms of polynomially bounded
   depth and $a_e$ resp.\ $a_{e'}$. The algorithm in the
   proof of item (i) of Proposition~\ref{prop:n-bounded-mc} can be
   adjusted to this end, so that it runs in polynomial time.
\end{itemize}

Hence, deciding $\K\not\models \alpha$ is in $\Sigma^p_2$,
and thus deciding $\K\models \alpha$ is in $\Pi^p_2$.

In (ii), to show $\K\not\models Q$ we likewise guess a clashing
assumption $\casmap$ on $N_\K$ and a name assignment $\nu$, and
we  check using an \np\ oracle that some
justified CAS-model of form $\I_{\CAS} = \stru{\I,\casmap}$ relative
to $\nu$ exists.

We then compute the
(polynomially many) Skolem terms $t_i$, $i=1,\ldots,m$ that feed into
clashing assumptions, which is feasible in polynomial time
(cf.\ proof of Proposition~\ref{prop:complexity-recognize-safe}). 
We then guess for each $t_i$ whether $t_i = a_j$ holds for a single
$a_j$ or none, where $a_1,\ldots,a_n$ are all the
individuals that name exceptions in $\casmap$. We then can use 
an $\np$ oracle to guess polynomially many Skolem terms $ST$ that are
connected to $N_\K$, including the subterms of all $t_i$, where the
number depends on $Q$ and $\DKB$, and a match of the query $Q$ on
$N_\K \cup ST$, for which we test
the derivability of each atom $A(t)$ resp.\ $R(t,t')$ in the match.
By the least model property, this will ensure that this partial model
given by the match can
indeed be extended to a model
            
It follows that deciding
$\K\not\models Q$ is in $\Sigma^p_2$, which means deciding $\K\models Q$ is in $\Pi^p_2$.

As for (i) in case $n$ is bounded by a constant $k$, we can 
by Proposition~\ref{prop:n-bounded-mc} eliminate the
\np\ oracle and obtain membership in \conp.

To show the $\Pi^p_2$-hardness for (i), we extend the encoding of graph non-3-colorability
in the proof of Proposition~\ref{prop:n-bounded-mc} in order to
encode the constrained 3-colorability problem of Lemma~\ref{lem:sink-3col}.

We first note that the DKB $\K$ constructed for the
graph $G=(V,E)$ allows under UNA for a justified CAS-model of form $\I_{\CAS} =
\stru{\I,\casmap}$ such that $\stru{E(esc),()} \in \casmap'$ iff $G$
is not 3-colorable. To see this, recall that the constructed $\casmap$ is justified iff $G$ is not 3-colorable.
By construction of $\K$, each justified CAS-model $\I'_{\CAS} =
\stru{\I',\casmap'}$ of $\K$ such that $\stru{E(esc),()} \in \casmap'$ must
satisfy $\casmap \subseteq \casmap'$; thus by non-redundancy of
clashing assumptions (Proposition~\ref{prop:cas-minimality}), it
follows that $\casmap = \casmap'$. Furthermore, we have that $\K
\models \non E(esc)$ if $G$ is not 3-colorable and $\K \models E(esc)$ otherwise.

Suppose now that $v_{d_1},\ldots,v_{d_m}$ are the nodes in $G$ of
degree 1. We use additional concepts $S$, $F_R$, $F_G$, $F_B$ and add the following
assertions, axioms and defeasible axioms:
\begin{enumerate}
 \item $S(check_{d_j})$, for all $j=1,\ldots,m$
 \item $\default(S \isa F_R)$, $\default(S \isa F_G)$, $\default(S \isa F_B)$ 
 \item $F_R\isa \non F_G$, $F_G\isa \non F_B$, $F_R\isa \non F_B$
 \item $F_C \isa \non\exists C_1Check^-$,  $F_C \isa \non\exists
                                C_2Check^-$ where $C \in \{R,G,B\} $ and
                                $C_1$, $C_2$ are the remaining colors
            $\{R,G,B\}\setminus \{C\}$.
\end{enumerate}                
Intuitively, (1)-(3) allow us to select one of the colors for each node $v_{d_j}$
of degree 1. This selection must be in alignment with possible color
checks via $\mi{CCheck}$ roles issued by nodes $check_{i',j'}$  that
correspond to $v_{d_j}$, i.e., if color $C$ is selected, then only
incoming $\mi{CCheck}$ arcs are possible for $check_{d_j}$. Furthermore, by
non-redundancy of clashing assumptions some color for $v_{d_j}$ must
be selected, as $\non F_C(check_{i',j'})$ can not be proven for all 
three colors $C$ simultaneously from the axioms (3) and (4) because
$\mi{CCheck}^-$ can be true at $check_{i,j}$ for at most one color $C$. 
Thus, each clashing assumption of a justified CAS-model of the 
the resulting DKB $\K'$ must encode a color assignment $\rho$ to the nodes
$v_{d_1}$, \ldots, $v_{d_m}$.

For an arbitrary such $\rho$, we obtain a CAS-model $\I_{\CAS}^\rho =
\stru{\I^\rho,\casmap^\rho}$ of
$\K'$ from the candidate justified CAS-model 
$\I_{\CAS}=\stru{\I,\casmap}$ described in the proof of 
Proposition~\ref{prop:n-bounded-mc}
by making, for each $v_{d_j}$,
$check_{d_j}$ an instance of $D$ and of $F_C$ where $C=\rho(v_{d_j})$
is the color of $v_{d_j}$ and adding $\stru{D\isa F_{C_1},check_{d_j}}$,
$\stru{D\isa F_{C_2},check_{d_j}}$  to $\casmap$ for the remaining
colors $C_1$ and $C_2$. It then holds that $\I_{\CAS}^\rho$ is a
justified CAS-model of $\K'$ iff the coloring $\rho$ is not extendable
to a 3-coloring of the full graph $G$.  

By construction of $\K'$ and the non-redundancy of clashing assumptions, it
follows that every justified CAS-model $\I'_{\CAS} =
\stru{\I',\casmap'}$ of $\K'$ such that $\stru{E(esc),()} \in \casmap'$ must
under UNA be of the form $\casmap' = \casmap^\rho$  for some coloring $\rho$.
It follows that $\K' \not\models E(esc)$ iff some coloring
$\rho$ is not extendable to a 3-coloring of the full graph $G$; hence
deciding $\K \models E(esc)$ is $\Pi^p_2$-hard.

Like $\K$ in the proof of Proposition~\ref{prop:n-bounded-mc}, also DKB $\K'$ is acyclic and 
its TBox is the same for each graph $G$, and thus along the same lines
the result holds under the stated restrictions and under
data complexity.

The hardness results for the other cases follow from the results on
exception-safe DKBs.
\end{proof}

\label{lastpage}
\end{document}